\newcommand{\defeq}{\vcentcolon=}
\newtheorem*{theorem*}{Theorem}
\newtheorem*{lemma*}{Lemma}
\DeclareMathOperator*{\argmin}{argmin}
\DeclareMathOperator*{\argmax}{argmax}
\newcommand{\simplex}{\Delta_{|\A|}}
\newcommand{\tsimplex}{\mathcal{S}_{\A}}
\newcommand{\diamtsimplex}{\text{diam}(\mathcal{S}_{\A})}
\newcommand{\A}{\mathcal{A}}
\newcommand{\I}{\mathcal{I}}
\newcommand{\M}{\mathcal{M}}
\newcommand{\F}{\mathcal{F}}
\newcommand{\C}{\mathcal{C}}
\newcommand{\T}{\mathcal{T}}
\newcommand{\E}{\mathbb{E}}
\newcommand{\R}{\mathbb{R}}
\newcommand{\N}{\mathbb{N}}
\newcommand{\1}{\bm{1}}
\newcommand{\xmark}{\ding{55}}%
\title[Efficient Pure Exploration for Combinatorial Semi-Bandits]{Efficient Pure Exploration for Combinatorial Bandits with Semi-Bandit Feedback}
\begin{document}

\maketitle

\begin{abstract}
  Combinatorial bandits with semi-bandit feedback generalize multi-armed bandits, where the agent chooses sets of arms and observes a noisy reward for each arm contained in the chosen set. The action set satisfies a given structure such as forming a base of a matroid or a path in a graph.
  We focus on the pure-exploration problem of identifying the best arm with fixed confidence, as well as a more general setting, where the structure of the answer set differs from the one of the action set. Using the recently popularized game framework, we interpret this problem as a sequential zero-sum game and develop a CombGame meta-algorithm whose instances are asymptotically optimal algorithms with finite time guarantees. In addition to comparing two families of learners to instantiate our meta-algorithm, the main contribution of our work is a specific oracle efficient instance for best-arm identification with combinatorial actions. Based on a projection-free online learning algorithm for convex polytopes, it is the first computationally efficient algorithm which is asymptotically optimal and has competitive empirical performance.
\end{abstract}

\begin{keywords}
  Combinatorial Bandits, Pure Exploration, Best-Arm Identification
\end{keywords}

\section{Introduction}

The multi-armed bandit (MAB) setting is an extensively studied problem in statistics and machine learning \citep{robbins_aspects_1952, lattimore_szepesvari_2020}. The environment consists of a set of arms, each characterized by an unknown reward distribution. An agent interacts with it by playing the arms sequentially in order to identify the arm with the highest expected reward.

Combinatorial bandits \citep{cesa-bianchi_combinatorial_2012, chen_combinatorial_2013} are a natural extension of the standard framework. The agent chooses actions (or super arms) which are defined by {\em sets of arms} satisfying certain constraints. The most studied families of actions stem from matroid theory \citep{kveton_matroid_2014, perrault_exploiting_2019}. Matroids encompass the batch setting where actions are sets of size $k$ \citep{jun_top_2016, kuroki_polynomial-time_2019, rejwan_top-k_2019} and graph-based structures where arms are edges and actions are spanning trees or matching trees. This formulation can model various application-specific structures such as paths taken in routing problems \citep{talebi_stochastic_2018}. Another example is protein design, where experimental constraints force the agent to evaluate specific sequences of proteins. Instead of inducing a single mutation, a range of localized mutations are performed at once. The main challenge in combinatorial bandits is to cope with the exponential size of the action set. This renders standard approaches for the bandit setting computationally inefficient and also -- without further assumptions like linearity -- statistically inefficient. To overcome this hurdle, existing approaches assume the reward is linear over the set of arms, and leverage an efficient oracle which solves a linear optimization problem over the combinatorial set of feasible actions. Efficient combinatorial oracles are known for many constraint families such as matroid polytopes, intersections of matroids and path polytopes. Combinatorial bandit strategies vary depending on the received feedback. We consider \textit{semi-bandit} feedback where the agent \emph{observes a reward for each selected arm}. Moreover, we assume that the reward for each arm is \emph{independent}.

We focus on the {\em pure-exploration} framework, in which the agent aims at maximizing the information gathered to answer a given query and disregards the accumulated cost. Two major theoretical frameworks exist \citep{gabillon_best_2012,gabillon_improved_2016, jun_top_2016,kaufmann_complexity_2016}: the \textit{fixed-budget} setting and the \textit{fixed-confidence} setting. In the fixed-budget setting, the goal is to minimize the probability of misidentifying the correct answer given a fixed number of pulls. We consider the fixed-confidence setting where the objective is to minimize the number of pulls necessary to identify the correct answer with a given confidence $1-\delta$. The most studied problems are best-arm identification (BAI) \citep{karnin_almost_2013, jamieson_lil_2013, zaki_explicit_2020} and top-$k$ identification \citep{gabillon_multi-bandit_2011, kalyanakrishnan_pac_2012, bubeck_multiple_2013, scarlett_overlapping_2019}. 

In the spirit of transductive bandits \citep{fiez_sequential_2019} we consider a more general setting where answers are sets of arms. The set of actions and the set of answers can be different. For example, in a routing or transportation network the objective might be to identify a weak link in order to fix it. The agent evaluates a path (action) in the network and gets access to time-stamped data for each link (answer) of a played path. Similarly, in protein design, researchers often generate many mutant proteins in one experiment, but the goal is to identify the best mutant.

We adopt the recently popularized game approach of \citet{degenne_non-asymptotic_2019}. The idea is to consider a sequential zero-sum game between two players. This game approximates the optimal allocation given by the lower bound \citep{kaufmann_complexity_2016}. The objective of our work is to design asymptotically optimal algorithms with finite-time guarantees. They should have computationally efficient implementations as long as the offline combinatorial problem can be solved efficiently. 

\paragraph{Contributions} 
(1) We use the game framework for pure exploration to study combinatorial bandits with semi-bandit feedback. The action and answer sets are arbitrary and the feedback is independent across arms. Despite its increasing popularity, the game framework has not yet been used in combinatorial bandits or in the transductive setting. (2) We develop a pure-exploration CombGame meta-algorithm whose instances are asymptotically optimal algorithms with finite time guarantees. The family of algorithms directly adapts the pure-exploration meta-algorithm of \citet{degenne_non-asymptotic_2019} to the combinatorial nature of the problem allowing for tractable implementation of the game framework. (3) To overcome the limitation of prior work, we employ the projection-free algorithm over convex polyhedral sets of \citet{garber_linearly_2015}. This approach is the first computationally efficient algorithm which is asymptotically optimal and has competitive empirical performance. 

\subsection{Related Work}  \label{related_work}

Combinatorial bandits have been introduced by \citet{cesa-bianchi_combinatorial_2012} and \citet{chen_combinatorial_2013}. The emblematic examples of combinatorial actions are the basis of a matroid \citep{perrault_exploiting_2019} and the paths in a graph \citep{talebi_stochastic_2018}. Semi-bandit feedback is extensively studied \citep{kveton_tight_2015, wen_efficient_2017}. Other works have considered the \textit{bandit} feedback where the agent observes an aggregated reward \citep{combes_combinatorial_2015}. Generalizing them both, the partial linear monitoring feedback has been studied for cumulative regret minimization \citep{kirschner_pm_2020} and for pure exploration \citep{chen_combinatorial_2020}. Combinatorial bandits have also been used to denote a different setting where the agent plays arms to identify the best action \citep{chen_combinatorial_2014, chen_pure_2016, chen_nearly_2017, cao_disagreement-based_2019}. Combinatorial bandits have also been generalized to consider submodular reward functions \citep{hazan_online_2012,chen_interactive_2017}.

Before the game approach was introduced, \citet{jamieson_best-arm_2014} highlighted three important types of algorithms to solve BAI. They were based on action elimination \citep{karnin_almost_2013}, upper confidence bound (UCB) \citep{audibert_best_2010} or lower UCB \citep{kalyanakrishnan_pac_2012}. Bayesian strategies have also been proposed with Thompson sampling like algorithms \citep{russo_simple_2018, kaufmann_sequential_2018, shang_fixed-confidence_2019}. Generalizing the BAI problem to the identification of the $k$ best arms, top-$k$ identification has been studied for an agent playing arms \citep{gabillon_multi-bandit_2011, kalyanakrishnan_pac_2012, scarlett_overlapping_2019} or batches of arms \citep{jun_top_2016, kuroki_polynomial-time_2019, rejwan_top-k_2019}. The pure-exploration framework encompasses more complex queries such as maximin \citep{garivier_maximin_2016} or minimum threshold \citep{degenne_non-asymptotic_2019}. Some problems admit multiple correct answers \citep{degenne_pure_2019}.\looseness=-1

In the fixed-confidence pure-exploration setting the first known lower bounds on the sample complexity involve a characteristic time whose inverse is a complexity measure \citep{kaufmann_complexity_2016}. Those setting-dependent lower bounds have motivated the search for algorithms with matching upper bound, both in finite-time \citep{simchowitz_simulator_2017} and asymptotic regime \citep{garivier_optimal_2016}. Unfortunately, existing algorithms often require an expensive oracle to compute the optimal allocation weights which are used for sampling, such as Track-and-Stop \citep{garivier_optimal_2016} or RAGE \citep{fiez_sequential_2019}. \citet{degenne_non-asymptotic_2019} introduces the game framework which interprets the optimization problem as a zero-sum game between two players. In particular, it proposes a pure-exploration meta-algorithm which uses a cheaper best-response oracle. The game framework has inspired recent algorithms for linear bandits, such as PELEG \citep{zaki_explicit_2020} or LinGame(-C) \citep{degenne_gamification_2020}. PELEG extends the phased-elimination algorithm of \cite{fiez_sequential_2019}. The idea has also been adapted to cumulative regret in \citet{degenne_structure_2020}.

\section{Preliminaries}  \label{preliminaries}

In this section we formally define \emph{pure exploration for combinatorial bandits with semi-bandit feedback}, and prove a lower bound on the sample complexity. We then use the lower bound to determine sampling strategies for our algorithm.

\subsection{Problem Formulation}

Suppose the environment consists of $d$ arms (or base arms). Each arm $a \in [d] \defeq \{ 1, \cdots, d \}$ is associated with a probability distribution from the exponential family $\nu_a$ characterized by the unknown mean $\mu_a$. Given known $\sigma_a$, we consider two cases, in which: (a) $\nu_a$ is $\sigma_a^2$-sub-Gaussian and (b) $\nu_a$ is Gaussian  $\mathcal{N}(\mu_a, \sigma_a^2)$. An exponential family $\nu_a$ is $\sigma_a^2$-sub-Gaussian if and only if for all $(\mu_a, \lambda_a)$ the KL divergence satisfies $d_{\text{KL}}(\mu_a, \lambda_a) \geq \frac{(\mu_{a} - \lambda_a)^2}{2 \sigma_a^2}$. The independent joint distribution of the arms is denoted by $\nu$ and defined uniquely by $\mu \defeq (\mu_a)_{a \in [d]} \in \M$. The set of possible parameters $\M \subset \R^d$ is known to the agent. Similarly to earlier work on bandits, $\M$ is assumed to be bounded. As proven in Appendix \ref{appendix_unbounded_setting}, this assumption is immaterial for Gaussian distributions. The component-wise KL divergence between the true parameter $\mu$ and a different parameter $\lambda \in \M$ is denoted by a vector $d_{\text{KL}}(\mu, \lambda) \defeq \left(d_{\text{KL}}(\mu_a, \lambda_a)\right)_{a \in [d]}$.

We define the action set $\A \subset 2^{[d]}$ as a collection of sets of arms (a subset of the power set of arms). The agent can only play actions. In the literature, \textit{super arms} or \textit{multiple arms} are used to denote actions. As a special case, the action set could be the singletons (arms) $\A = \{\{a\}\}_{a \in [d]}$. Let $K \defeq \max_{A \in \A}|A|$ be the maximum size of an action. At each round $t \geq 1$, the agent chooses an action $A_t \in \A$ and observes a noisy semi-bandit feedback $Y_{t, A_t} \defeq \left(Y_{t, a} \bm{1}_{(a \in A_t)} \right)_{a \in [d]}$ where $\bm{1}_{S} \defeq \left(\bm{1}_{(a \in S)} \right)_{a \in [d]}$ is the indicator vector for $S \subset [d]$ and $Y_t \sim \nu$ is the observation vector in $\R^d$.

We define the answer set $\I \subset 2^{[d]}$ as a collection of sets of arms, possibly different from the set of actions $\A$. The setting where $\I$ and $\A$ differ is also known as the transductive bandit setting. Given a parameter $\lambda$, the reward of an answer $I \in \I$ is the sum of the rewards of each arm $\langle \lambda, \1_{I}\rangle \defeq \sum_{a \in [d]} \lambda_a \1_{(a \in I)}$. The \textit{correct answer} is given by the function $I^*: \M \mapsto \I$ defined as $I^*(\lambda) \defeq \argmax_{I \in \I } \langle \lambda, \1_{I}\rangle$. For simplicity, we assume that $I^*(\lambda)$ is unique for all $\lambda \in \M$. A more careful analysis would allow to relax this assumption to: $I^*(\mu)$ is unique for the unknown $\mu$ characterizing the bandit $\nu$. The goal of the agent is to identify the correct answer $I^*(\mu)$ by interacting with the environment. BAI is a special case where $\I = \{\{a\}\}_{a \in [d]}$. Best-action identification is obtained for $\I = \A$. 

We assume that the agent has access to efficient oracles\footnote{In practice, such an oracle may be an efficient algorithm tailored to the combinatorial constraints (e.g., Kruskal's algorithm for minimum spanning trees etc.), or a search strategy given by a Mixed Integer Programming solver.} to solve the offline linear optimization problems $\argmax_{A \in \A} \langle \1_{A}, c \rangle$ and $\argmax_{I \in \I} \langle \1_{I}, c \rangle$ for a given linear objective $c \in \R^{d}$. This assumption is commonly made for semi-bandits \citep{cao_disagreement-based_2019, kuroki_polynomial-time_2019, perrault_statistical_2020}. It is crucial, since the offline problem cannot be efficiently solved without this oracle.

\paragraph{Policies} The history $\F_t \defeq \sigma(A_{1}, Y_{1, A_{1}}, \cdots, A_{t}, Y_{t, A_{t}})$ contains all the information available to the agent at step $t+1$. In the fixed-confidence setting a strategy is described by three rules: a \textit{sampling rule} $(A_t)_{t \geq 1}$ where $A_t \in \A$ is $\F_{t-1}$-measurable, a \textit{stopping rule}, $\tau_{\delta}$ being the stopping time with respect to the filtration $(\F_t)_{t \geq 1}$, and a \textit{recommendation rule} $I_{\tau_{\delta}}$ which is $\F_{\tau_{\delta}}$-measurable.

While the sampling rule can be randomized, we consider only deterministic strategies in our work. In the fixed-confidence setting, the learner is given a confidence parameter $\delta \in (0,1)$. The strategy is said to be $\delta$-PAC if it terminates and recommends the correct answer with probability at least $1-\delta$: $\mathbb{P}_{\nu} \left[ \tau_{\delta} = \infty \lor  I_{\tau_{\delta}} \neq I^*(\mu) \right] \leq \delta$. Among $\delta$-PAC algorithms, the objective is to minimize the expected number of samples required to terminate $\E_{\nu}[\tau_{\delta}]$, also known as the \textit{sample complexity}.

\subsection{Sample Complexity Lower Bound}

Given an answer $I \in \I$, the \textit{cell $\Theta_I$} is the set of parameters for which the correct answer is $I$, $\Theta_{I} \defeq \{ \lambda \in \M: I^*(\lambda) = I\}$. The \textit{alternative to $I$} is the set of parameters for which $I$ is not the correct answer, $\Theta_{I}^{\complement}$. It is also equal to the set of parameters for which there exists an answer $J \neq I$ having a higher reward, $\Theta_{I}^{\complement} = \bigcup_{J \in \I \setminus \{I\}} \Bar{\Theta}_{J}^{I}$ where $\Bar{\Theta}_{J}^{I} \defeq \{ \lambda \in \M : \langle \1_{J} - \1_{I}, \lambda\rangle \geq 0\}$. The \textit{neighbors to I} is the set of answers whose cells' boundaries intersect the boundary of the cell $I$, $N(I) \defeq \{ J \in \I: \partial \Theta_{I} \cap \partial \Theta_{J} \neq \emptyset \}$. 

The \textit{transformed simplex} $\tsimplex := \{W_{\A}w : w \in \simplex\} \subset \mathbb{R}^d$ is the image of the $|\A|$-dimensional probability simplex $\simplex \defeq \left\{ w \in \R^{|\A|}: w\geq 0 \land \sum_{A \in \A} w_{A} =1 \right\}$ by $W_{\A} \defeq \begin{bmatrix} \bm{1}_{A_1} \hdots \bm{1}_{A_{|\A|}} \end{bmatrix}  \in \mathbb{R}^{d \times |\A|}$. The matrix $W_{\A}$ collects the action incidence vectors. For a distribution over actions $w \in \simplex$, $W_{\A}w$ represents the effect at the base arm level when sampling actions according to $w$. The probability of sampling the arm $a \in [d]$ is $\tilde w_a$, where $\tilde \cdot \defeq W_{\A} \cdot$ denotes implicitly the operator $W_{\A}$.

\paragraph{Lower bound} Given any $\delta$-PAC strategy, Theorem \ref{thm:lower_bound} gives a finite-time and asymptotic lower bound on the sample complexity, see Appendix \ref{appendix_proof_lower_bound} for a proof. This result is a technical extension of previous work, see Theorem 1 in \citet{garivier_optimal_2016}.

\begin{theorem} \label{thm:lower_bound}
For any $\delta$-PAC strategy and any bandit $\nu$ characterized by $\mu$,
\begin{align*}
    \frac{\E_{\nu}[\tau_{\delta}]}{\ln ( 1/(2.4\delta))} \geq D_{\nu}^{-1}  \quad \text{ and } \quad
    \limsup_{\delta \rightarrow 0} \frac{\E_{\nu}[\tau_{\delta}]}{\ln (1/\delta)} \geq D_{\nu}^{-1} 
\end{align*}
where the {\em complexity} $D_{\nu}$ is the inverse of the characteristic time, defined by
\begin{equation*}
    D_{\nu} \defeq \max_{\tilde{w} \in \tsimplex} \inf_{\lambda \in \Theta_{I^*(\mu)}^{\complement}} \langle \tilde{w}, d_{\text{KL}}(\mu, \lambda) \rangle
\end{equation*}
\end{theorem}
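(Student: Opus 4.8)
The plan is to follow the information-theoretic change-of-measure argument of \citet{garivier_optimal_2016}, adapting it to semi-bandit feedback where a single action reveals several arms at once. First I would introduce the observation counts $N_a(\tau_\delta) \defeq \sum_{s=1}^{\tau_\delta} \1_{(a \in A_s)}$, the number of times arm $a$ is observed before stopping, and decompose the log-likelihood ratio $L_{\tau_\delta}(\mu,\lambda)$ accumulated up to $\tau_\delta$ between the true bandit $\nu$ (parameter $\mu$) and any alternative bandit with parameter $\lambda \in \M$. Because the arms are mutually independent and the feedback is semi-bandit, a Wald-type identity gives
\begin{equation*}
\E_{\nu}[L_{\tau_\delta}(\mu, \lambda)] = \sum_{a \in [d]} \E_{\nu}[N_a(\tau_\delta)] \, d_{\text{KL}}(\mu_a, \lambda_a) = \langle (\E_\nu[N_a(\tau_\delta)])_{a \in [d]}, \, d_{\text{KL}}(\mu, \lambda)\rangle .
\end{equation*}

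Next I would invoke the data-processing inequality on the recommendation event $\mathcal{E} \defeq \{ I_{\tau_\delta} = I^*(\mu)\}$, which yields $\E_\nu[L_{\tau_\delta}(\mu,\lambda)] \geq \mathrm{kl}(\mathbb{P}_\nu[\mathcal{E}], \mathbb{P}_\lambda[\mathcal{E}])$ with $\mathrm{kl}$ the binary relative entropy. Restricting to alternatives $\lambda \in \Theta_{I^*(\mu)}^{\complement}$, for which the correct answer differs from $I^*(\mu)$, the $\delta$-PAC property (applied uniformly over $\M$) forces $\mathbb{P}_\nu[\mathcal{E}] \geq 1-\delta$ and $\mathbb{P}_\lambda[\mathcal{E}] \leq \delta$, so that $\E_\nu[L_{\tau_\delta}(\mu,\lambda)] \geq \mathrm{kl}(1-\delta, \delta)$. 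The standard estimates $\mathrm{kl}(1-\delta,\delta) \geq \ln(1/(2.4\delta))$ and $\liminf_{\delta \to 0} \mathrm{kl}(1-\delta,\delta)/\ln(1/\delta) = 1$ then produce the finite-time and asymptotic constants respectively.

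The combinatorial crux is to convert the observation-count vector into an element of the transformed simplex $\tsimplex$. Writing $w_A$ for the expected fraction of rounds in which action $A$ is played, I get $w \defeq (w_A)_{A \in \A} \in \simplex$ and, by linearity of expectation, $(\E_\nu[N_a(\tau_\delta)])_{a} = \E_\nu[\tau_\delta] \, W_{\A} w = \E_\nu[\tau_\delta]\, \tilde{w}$ with $\tilde{w} \in \tsimplex$. Substituting into the bound above and taking the infimum over $\lambda \in \Theta_{I^*(\mu)}^{\complement}$ gives $\E_\nu[\tau_\delta] \inf_{\lambda} \langle \tilde{w}, d_{\text{KL}}(\mu,\lambda)\rangle \geq \mathrm{kl}(1-\delta,\delta)$. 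Since the strategy-dependent $\tilde w$ satisfies $\inf_\lambda \langle \tilde w, d_{\text{KL}}(\mu,\lambda)\rangle \leq \max_{\tilde w' \in \tsimplex}\inf_\lambda \langle \tilde w', d_{\text{KL}}(\mu,\lambda)\rangle = D_\nu$, this upper bound on the left factor yields $\E_\nu[\tau_\delta]\, D_\nu \geq \mathrm{kl}(1-\delta,\delta)$, which rearranges to the two claimed inequalities.

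I expect the main obstacle to be the rigorous handling of the change of measure at the random stopping time $\tau_\delta$: one must justify the Wald-type decomposition (finiteness of $\E_\nu[N_a(\tau_\delta)]$ and validity of the likelihood-ratio identity at a stopping time) and ensure the $\delta$-PAC guarantee is invoked uniformly over the entire alternative set $\Theta_{I^*(\mu)}^{\complement} \subset \M$. The identity $(\E_\nu[N_a(\tau_\delta)])_a = \E_\nu[\tau_\delta]\,\tilde w$ is the genuinely new combinatorial ingredient relative to the multi-armed case, but it follows directly from introducing the per-action frequencies $w_A$; the real care lies in the measure-theoretic bookkeeping inherited from \citet{garivier_optimal_2016}.
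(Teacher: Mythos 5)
Your proposal is correct and follows essentially the same route as the paper's proof: a change of measure with a Wald-type identity for the semi-bandit log-likelihood ratio (the paper's Lemma~\ref{lem:transportation_lemma}, built on Lemma 19 of \citet{kaufmann_complexity_2016}), the $\delta$-PAC property applied to a recommendation event to lower bound by $\mathrm{kl}(1-\delta,\delta) \geq \ln(1/(2.4\delta))$, and the key combinatorial observation that $(\E_{\nu}[\tilde{N}_{\tau_{\delta},a}])_{a}/\E_{\nu}[\tau_{\delta}]$ lies in $\tsimplex$, followed by the infimum-supremum relaxation to $D_{\nu}$. The only cosmetic difference is your choice of the event $\{I_{\tau_{\delta}} = I^*(\mu)\}$ versus the paper's $\{I_{\tau_{\delta}} = J\}$, which yield the same binary KL value by symmetry.
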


Similar bounds were already proven for other settings \citep{garivier_optimal_2016, degenne_pure_2019}. The technical difference is that we sample actions. A $\delta$-PAC strategy is said to be \textit{asymptotically optimal} if the bound is tight, meaning that for any $\nu$, $\limsup_{\delta \rightarrow 0} \frac{\E_{\nu}[\tau_{\delta}]}{\ln (1/\delta)} \leq D_{\nu}^{-1}$.

The \textit{set of optimal allocations} is $w^{*}(\mu) \defeq \left\{ w \in \simplex: \inf_{\lambda \in \Theta_{I^*(\mu)}^{\complement}} \langle W_{\A}w, d_{\text{KL}}(\mu, \lambda) \rangle = D_{\nu} \right\}$. It is non-empty since $\tilde{w} \mapsto \inf_{\lambda \in \Theta_{I^*(\mu)}^{\complement}} \langle \tilde{w}, d_{\text{KL}}(\mu, \lambda) \rangle$ is concave on the compact $\tsimplex$. Moreover, $w^{*}(\mu)$ contains multiple optimal allocations, except for specific choice of $\A$. Computing an element of $w^{*}(\mu)$ is a difficult minmax optimization even for a known $\mu$. To the best of our knowledge, there are no theoretical results on the hardness of this specific optimization problem.

\section{Algorithms}  \label{algorithms}

After introducing the game approach, we discuss two asymptotically optimal families of algorithms which instantiate our proposed pure-exploration CombGame meta-algorithm, see Algorithm \ref{algo:combgame_meta_algorithm}. The learners used to instantiating it are either on $\simplex$ or on $\tsimplex$. 

\subsection{Game Approach}

At round $t \geq 1$, the agent computes a distribution over actions $w_{t} \in \simplex$ which is converted into a deterministic action $A_t$ by tracking \citep{garivier_optimal_2016}, as explained below. Since we observe semi-bandit feedback, $w_{t}$ corresponds to $\tilde{w}_t = W_{\A} w_{t}$ at the base arms level. Importantly, due to the independence assumption and the linearity of the considered operators, all computations on $\simplex$ can be done on $\tsimplex$.

Since $\tsimplex = \text{conv} \left(\{ \1_{A} \}_{A \in \A}\right)$, the transformed simplex is a $0$-$1$ polytope in $\R^d$. A pulling proportion $w \in \simplex$ is said to be \textit{sparse} if its support is small, $\text{supp}(w) \ll |\A|$. A simple application of Carath\'eodory's theorem yields that for all $w \in \simplex$ there exists a sparse $w_{0} \in \simplex$ with $|\text{supp}(w_{0})| \leq d+1$ such that both $w$ and $w_{0}$ have the same allocation over arms, $W_{\A} w = W_{\A} w_{0}$. 

\paragraph{Two-player, minimax approach} As noted in the early work by \citet{chernoff_sequential_1959} and extended in the recent papers using gamification \citep{degenne_non-asymptotic_2019, degenne_gamification_2020}, the complexity $D_{\nu}$ is the value of a fictitious zero-sum game between two players. The agent chooses a pulling proportion over arms, $\tilde{w} \in \tsimplex$. The nature plays the most confusing alternative with respect to the KL divergence in order to fool the agent into predicting an incorrect answer, $\lambda \in \Theta_{I^*(\mu)}^{\complement}$.

Allowing nature to play distributions over alternatives and using Sion's minimax theorem, we can invert the order of the players to obtain the dual formulation of the complexity $D_{\nu}$,
\begin{equation*}
    D_{\nu} = \max_{\tilde{w} \in \tsimplex} \inf_{\lambda \in \Theta_{I^*(\mu)}^{\complement}} \langle \tilde{w}, d_{\text{KL}}(\mu, \lambda) \rangle = \inf_{q \in \mathcal{P}\left(\Theta_{I^*(\mu)}^{\complement}\right)}  \max_{A \in \A} \E_{\lambda \sim q} \left[ \langle \1_{A}, d_{\text{KL}}(\mu, \lambda) \rangle \right]
\end{equation*}
where $\mathcal{P}\left(\Theta_{I^*(\mu)}^{\complement}\right)$ denotes the set of probability distributions over $\Theta_{I^*(\mu)}^{\complement}$. 

In our work we focus on a sequential game where the agent, or $A$-player, plays first and nature, or the $\lambda$-player, is second. The $A$-player uses a learner that minimizes the cumulative regret. The $\lambda$-player has access to a best-response oracle that has no regret. This combination ensures a saddle-point property required to derive the finite-time upper bound on the sample complexity. Alternatively the order could be reversed, or they could play simultaneously \citep{degenne_non-asymptotic_2019}.

\subsection{CombGame Meta-Algorithm}

First, we briefly introduce the estimator, stopping and recommendation rules, which define the pure-exploration algorithm. Since $\mu$ (and the best answer $I^*(\mu)$) is unknown, we use the maximum likelihood estimator (MLE) $\mu_t$ as a plug-in estimator. The recommendation and the stopping rules are frequentist and use the value of $\mu_t$. Based on $\mu_t$, the sampling rule corresponds to playing an optimistic sequential game. Both the sample complexity and the computational efficiency depend on the learner used to approximate this game.

\paragraph{Estimator} 
Let $N_{t-1} \in \R^{|\A|}$ be the count of sampled actions at the beginning of round $t$ and $\tilde{N}_{t-1} = W_{\A}N_{t-1}$ its counterpart at the base arms level. The MLE, $\mu_{t-1,a} \defeq  \frac{1}{\tilde{N}_{t-1,a}} \sum_{s = 1}^{t-1} \bm{1}_{(a \in A_s)} Y_{s,a}$ for all $a \in [d]$, is associated with the confidence hyperbox for the exploration bonus $f$, $\C_t \defeq \bigtimes_{a \in [d]} [\alpha_{t,a}, \beta_{t,a}]$ where $[\alpha_{t,a}, \beta_{t,a}] \defeq \{\lambda: \tilde{N}_{t-1,a} d_{\text{KL}}(\mu_{t-1,a},\lambda) \leq f(t-1)\}$. As in \citet{degenne_non-asymptotic_2019}, the exploration bonus is chosen as $f(t) = \overline{W}((1+c)(1+b) \ln(t))$ where $c>0$, $b>0$ and $\overline{W}(x) \approx x + \ln(x)$, see Appendix \ref{proof_upper_bound_finite_time_decomposition} for an exact definition.

When $\mu_{t-1} \notin \M$, we consider $\tilde{\mu}_{t-1} \in \argmin_{\lambda \in \M \cap \C_{t}} \langle \tilde{N}_{t-1}, d_{\text{KL}}(\mu_{t-1}, \lambda) \rangle$, the projection of $\mu_{t-1}$ on $\M \cap \C_{t}$. $\tilde{\mu}_{t-1}$ is chosen randomly when $\M \cap \C_{t} = \emptyset$. When all arms are sampled an infinite number of times, we have $\lim_{\infty} \mu_t = \mu \in \M$: there exists $T_{0}$ such that for all $t \geq T_{0}$, $\mu_{t-1} \in \M$.\looseness=-1

\paragraph{Stopping and recommendation rules}
We will use the recommendation and stopping rules based on a frequentist estimator $\mu_{t-1}$. Given the feasible $\tilde{\mu}_{t-1}$, we recommend the unique best answer $I_t \defeq \argmax_{I \in \I} \langle \1_{I}, \tilde{\mu}_{t-1} \rangle$. $I_t$ can be computed with the efficient oracle. We stop as soon as the generalized likelihood ratio is above a stopping threshold $\beta(t-1, \delta)$: 
\begin{equation*}
    \tau_{\delta} \defeq \inf \left\{t \in \mathbb{N}: \min_{J \in N(I_t)} \inf_{\lambda \in \Bar{\Theta}_{J}^{I_t} } \langle \tilde{N}_{t-1}, d_{\text{KL}}(\mu_{t-1}, \lambda)\rangle > \beta(t-1,\delta)\right\}
\end{equation*}

Given any sampling rule, this pair of rules is sufficient to obtain a $\delta$-PAC strategy, see Theorem \ref{thm:delta_pac}. The proof leverages the concentration inequalities of \citet{kaufmann_mixture_2018} (Appendix \ref{appendix_proof_delta_pac}).\looseness=-1

\begin{algorithm2e}
        \caption{CombGame meta-algorithm}
        \label{algo:combgame_meta_algorithm}
        \SetAlgoLined
        \KwIn{Learner $\A^A$ with associated init, stopping threshold $\beta(t-1,\delta)$, exploration bonus $f(t)$}
        \KwOut{Answer $I_t$}
        $(w_{n_{0}}, \tilde{w}_{n_{0}}, B_{n_{0}}) =$ INIT(init)  \Comment*[r]{initialization}
         \For{$t=n_{0} +1, \cdots$}{
            $I_t = \argmax_{I \in \I}\langle \bm{1}_{I}, \tilde{\mu}_{t-1}\rangle$ \Comment*[r]{recommendation rule}
            If $\min_{J \in N(I_t)} \inf_{\lambda \in \Bar{\Theta}_{J}^{I_t} } \langle \tilde{N}_{t-1}, d_{\text{KL}}(\mu_{t-1}, \lambda)\rangle > \beta(t-1,\delta)$ then return $I_t$  \Comment*[r]{stopping rule}
            Get $(w_t, \tilde{w}_t, B_t)$ from $\A^{A}_{I_t}$\;
            $A_t \in \argmin_{A \in B_t} \frac{N_{t-1,A}}{\sum_{s=1}^{t} w_{s,A}}$  \Comment*[r]{sparse C-Tracking}
            $(\cdot,\lambda_t) \in \argmin_{J \in N(I_t), \lambda \in \Theta_{J}^{I_t}} \langle \tilde{w}_{t}, d_{\text{KL}}(\mu_{t-1},\lambda) \rangle$  \Comment*[r]{$\lambda$-player}
            $\forall a \in [d], \quad r_{t,a} = \max \left\{ \frac{f(t-1)}{\tilde{N}_{t-1,a}}, \max_{\phi \in \{\alpha_{t,a},\beta_{t,a}\}} d_{\text{KL}}(\phi, \lambda_{t,a}) \right\} $ \Comment*[r]{optimism}
            Feed $\A^{A}_{I_t}$ with the reward $r_t$\;
            Observe a sample $Y_{t,A_t}$ and update $(\mu_t, N_t, \tilde{\mu}_t)$ \;
        }
\end{algorithm2e}

\begin{theorem}  \label{thm:delta_pac}
Let $\mathcal{M}$ be bounded. Regardless of the sampling rule, a strategy using the frequentist recommendation/stopping pair with the stopping threshold:
\begin{align*}
\beta (t, \delta) &\defeq 
\begin{cases} 
    3d_{0} \ln \left(1 + \ln \left(\frac{t K}{d_{0}}\right)\right) + d_{0}\T\left(\frac{\ln \left( \frac{|\I|-1}{\delta}\right)}{d_{0}}\right) & \text{for (a)}\\
     2 d_{0}\ln \left(4 + \ln \left(\frac{t K}{d_{0}}\right)\right) + d_{0} \C^{g_G}\left(\frac{\ln \left( \frac{|\I|-1}{\delta}\right)}{d_{0}}\right) & \text{for (b)}
    \end{cases} 
\end{align*}
is $\delta$-PAC. In the above, $d_{0} \defeq \max_{I,J \in \I, J \neq I} |(I \setminus J) \cup (J \setminus I)|$, $\T$ and $\C^{g_G}$ are the functions defined in \citet{kaufmann_mixture_2018}, $\C^{g_G}(x) \approx x + \ln(x)$ and $\T(x) \approx x + 4 \ln(1+x + \sqrt{2x})$ for $x \geq 5$.\looseness=-1
\end{theorem}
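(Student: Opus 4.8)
The plan is to show that the stopping rule can only return an incorrect answer on a low-probability deviation event; concretely, it suffices to bound the probability of stopping with a wrong recommendation, $\mathbb{P}_{\nu}[\tau_{\delta} < \infty \land I_{\tau_{\delta}} \neq I^*(\mu)] \leq \delta$, since this is the content that is uniform over sampling rules (finite termination is a property of the sampling rule, guaranteed separately by the sample-complexity analysis). First I would fix a realization on which the algorithm stops at time $t = \tau_{\delta}$ and recommends $I \defeq I_{\tau_{\delta}} \neq I^*(\mu)$, and from the stopping condition extract a single pairwise deviation against the true parameter $\mu$.

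The second, geometric step reduces the neighbor-restricted stopping statistic to one confusing answer aligned with the truth. Because the recommendation sets $I_t = \argmax_{I \in \I} \langle \1_I, \tilde{\mu}_{t-1}\rangle$, the feasible estimator lies in the cell, $\tilde{\mu}_{t-1} \in \Theta_{I}$; as cells are disjoint and $I \neq I^*(\mu)$, the truth satisfies $\mu \in \Theta_{I}^{\complement}$. The key lemma I would prove is that the active alternative can be taken to be a neighbor: since $\Theta_I$ is a convex polyhedron whose facets correspond to the constraints $\langle \1_J - \1_I, \lambda\rangle = 0$ with $J \in N(I)$, a segment argument from $\mu$ towards the interior of $\Theta_I$ yields some $J \in N(I)$ with $\mu \in \Bar{\Theta}_J^{I}$. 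The stopping condition then forces $\inf_{\lambda \in \Bar{\Theta}_J^{I}} \langle \tilde{N}_{t-1}, d_{\text{KL}}(\mu_{t-1},\lambda)\rangle > \beta(t-1,\delta)$ for this $J$. Evaluating the infimum, only coordinates in the symmetric difference $I \triangle J$ enter the constraint, so the others are set to $\mu_{t-1}$ and contribute zero; feasibility of $\mu$ gives
\[
\beta(t-1,\delta) < \sum_{a \in I \triangle J} \tilde{N}_{t-1,a}\, d_{\text{KL}}(\mu_{t-1,a}, \mu_a),
\]
a time-uniform deviation of a $d_{0}$-dimensional empirical KL from the truth, as $|I \triangle J| \leq d_{0}$.

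Third, a union bound and concentration close the argument. The error event is contained in a union over the at most $|\I|-1$ relevant answer comparisons of events of the above form, each a deviation against $\mu$ restricted to $\leq d_{0}$ arms. For each I would apply the time-uniform, $d_{0}$-dimensional deviation inequalities of \citet{kaufmann_mixture_2018}: in case (a) the sub-Gaussian bound built on $\T$, in case (b) the Gaussian bound built on $\C^{g_G}$. Bounding the total sampled mass by $\sum_{a} \tilde{N}_{t-1,a} = \sum_{s<t} |A_s| \leq tK$ produces the doubly-logarithmic correction $3 d_{0} \ln(1 + \ln(tK/d_{0}))$ (resp. $2 d_{0} \ln(4 + \ln(tK/d_{0}))$), and calibrating each event at level $\delta/(|\I|-1)$ through $x = \ln((|\I|-1)/\delta)$ reproduces exactly the two stated expressions for $\beta(t,\delta)$; summing over the $\leq |\I|-1$ events gives total probability at most $\delta$.

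I expect the main obstacle to be the geometric lemma of the second step, namely that on the error event the alternative witnessing $\mu \in \Theta_{I}^{\complement}$ can always be chosen among $N(I)$, so that the neighbor-restricted stopping statistic genuinely controls the deviation towards the truth over only $d_{0}$ coordinates. Secondary technical points are matching the exponents and constants to the precise form of the \citet{kaufmann_mixture_2018} inequality, and handling the regime $\mu_{t-1} \notin \M$ via the projection $\tilde{\mu}_{t-1}$ used in the recommendation.
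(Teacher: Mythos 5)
Your proposal is correct and follows essentially the same route as the paper's proof: a union bound over the $|\I|-1$ wrong answers, a hybrid parameter (equal to $\mu$ on the symmetric difference and to $\mu_t$ elsewhere, feasible because the coordinates outside the symmetric difference cancel in the linear constraint) that upper-bounds the stopping statistic by a self-normalized KL deviation over at most $d_0$ arms, the time-uniform deviation inequalities of \citet{kaufmann_mixture_2018} calibrated at level $\delta/(|\I|-1)$, and concavity/monotonicity arguments (using $\sum_a \tilde{N}_{t,a} \leq tK$) to make the threshold depend only on $d_0$ and $tK$. The only substantive difference is in how the neighbor restriction of the stopping rule is handled: you witness the deviation with a neighbor $J \in N(I)$ satisfying $\mu \in \Bar{\Theta}_{J}^{I}$, proving the covering fact explicitly, whereas the paper identifies the neighbor-restricted minimum with $\inf_{\lambda \in \Theta_{I}^{\complement}}$ and then relaxes to $J = I^*(\mu)$ — two formulations of the same geometric fact, yielding the same bound.
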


\subsubsection{Sampling rule}

The challenge is to define the sampling rule in order to satisfy the stopping criterion as soon as possible. Based on the definition of $\tau_{\delta}$, $\min_{J \in N(I_t)} \inf_{\lambda \in \Bar{\Theta}_{J}^{I} } \langle \tilde{N}_{t}, d_{\text{KL}}(\mu_t, \lambda)\rangle$ should be maximized. We will achieve the desired saddle-point property by combining $|\I|$ learners for the $A$-player, one per candidate answer $\A^A_{I_t}$, and one best-response oracle for the $\lambda$-player.

We present two categories of learners, both aiming at minimizing the cumulative regret $R_{t}^{A} \defeq \max_{A \in \A} \sum_{s=1}^{t} \langle \1_{A}, r_s\rangle -  \sum_{s=1}^{t} \langle \tilde{w}_s, r_s\rangle $. $r_t$ is the optimistic reward at time $t$ as defined in the paragraph below. Learners on the simplex update $w_t$ and need a \textit{full} initialization where each action is sampled once. To overcome the computational inefficiency of those learners, we also consider learners on the transformed simplex which update $\tilde{w}_t$. By leveraging the sparse support when tracking, they only require a \textit{covering} initialization where each arm is observed at least once. The length of the initialization is denoted by $n_{0}$. We compare the different learners in Table \ref{tab:comparison_learners}.

By knowing $w_t$ used by the $A$-player, the $\lambda$-player can adopt the most confusing parameter in $\Theta_{I_t}^{\complement}$: $(\cdot, \lambda_t) \in \argmin_{J \in N(I_t), \lambda \in \Bar{\Theta}_{J}^{I_t}} \langle \tilde{w}_{t}, d_{\text{KL}}(\mu_{t-1},\lambda) \rangle$.

\paragraph{Optimism} Since the estimator is not exact but associated to a confidence region, following the exact sequential game for $\mu_t$ cannot lead to sufficient exploration. \citet{degenne_non-asymptotic_2019} overcome this hurdle by using the optimism principle. Since $\mu \in \C_t$ with high probability, the optimistic reward $r_t$ is the upper bound on the gain of the agent given the $\lambda$-player's response, $\lambda_t$: for all $a \in [d]$, $r_{t,a} \defeq \max \left\{ \frac{f(t-1)}{\tilde{N}_{t-1,a}}, \max_{\phi \in \{\alpha_{t,a},\beta_{t,a}\}} d_{\text{KL}}(\phi, \lambda_{t,a}) \right\}$ where $\frac{f(t-1)}{\tilde{N}_{t-1,a}}$ fosters exploration. The clipping is due to non-symmetric $d_{\text{KL}}$. It disappears for Gaussian as shown in Lemma \ref{lem:gaussian_optimistic_reward}. 

\paragraph{Tracking} Since a learner plays pulling proportion over actions $w_t$, we need to convert it into an action choice $A_t$. Introduced in \citet{garivier_optimal_2016}, C-Tracking and D-Tracking allow to deterministically convert weights into pulls. Due to the non-uniqueness of the optimal allocation of weights, we consider C-Tracking, which ensures $ 1 - |\A| \leq N_{t,A} - \sum_{s=1}^{t} w_{t,A} \leq 1$, see Appendix \ref{tracking_results_technicalities}. We obtain a sparse tracking procedure by limiting the choice of $A_t$ to the incremental support $B_t  \defeq \text{supp}\left( \sum_{s=1}^{t} w_{s} \right)$: $A_t \in \argmin_{A \in B_t} \frac{N_{t-1,A}}{\sum_{s=1}^{t} w_{s,A}}$. Alternatives include D-Tracking or the rounding procedure in \citet{fiez_sequential_2019}. For a non-deterministic algorithm we can directly sample the next action, $A_t \sim w_t$.

\subsection{Learners on the Simplex}  \label{learner_on_simplex}

Since we are playing pulling proportion over actions, the immediate approach is to consider Hedge-type algorithms. They constitute a family of learners on the probability simplex $\simplex$. As examples from this family, we will use Hedge \citep{cesa-bianchi_improved_2006} and the adaptive version AdaHedge \citep{de_rooij_follow_2013}. An algorithm is said to be \textit{anytime} if it is independent of the horizon $T$. Those learners require computations at the actions level to obtain a reward vector $U_t$: for all $A \in \A$, $U_{t,A} \defeq \langle \1_{A}, r_t \rangle$. For both learners the update of $w_{t}$ is: for all $A \in \A$, $w_{t,A} = \frac{w_{n_{0},A} \exp \left( - \eta_{t} L_{t-1, A}\right) }{\sum_{A' \in \A} w_{n_{0},A'} \exp \left( - \eta_t L_{t-1, A'} \right)}$ where $L_{t-1, A} = - \sum_{s=1}^{t-1} U_{s,A}$ is the cumulative loss, $\eta_t$ is the learning rate and $w_{n_{0}} = \frac{1}{|\A|} \1$ is the sampling parameter for a full initialization. In Hedge, $\eta_t$ is a constant depending on $T$. While in AdaHedge, $\eta_t$ is decreasing and defined as a function of a cumulative mixability gap. As shown in Lemmas \ref{lem:hedge_cumulative_regret} and \ref{lem:adahedge_cumulative_regret}, both Hedge and AdaHedge have optimal cumulative regret, $O\left(\ln(t)\sqrt{t}\right)$. The additional $\ln(t)$-factor originates from the unbounded losses.

Due to the potentially exponential number of actions, a closer examination of those learners reveals the \emph{computational inefficiency} of three steps. First, we initialize by sampling all the actions once. Second, at each round the update step requires the computation of $U_t \in \R^{|\A|}$ and $w_t \in \simplex$. Third, C-Tracking is equivalent to finding the minimum of $|\A|$ values since $w_t$ is dense. This motivates considering the second family of algorithms, which defines the learner directly on $\tsimplex$.

\subsection{Learners on the Transformed Simplex}   \label{learner_on_tsimplex}

To circumvent the shortcomings of the learners on $\simplex$, we introduce a second family of learners for which we update $\tilde{w}_t \in \tsimplex$ by using $r_t$. Note that the loss for the $A$-learner is linear, $f_t(x) = - \langle x, r_t \rangle$ for $x \in \tsimplex$. The online convex optimization (OCO) literature provides algorithms achieving optimal cumulative regret guarantees for adversarial linear losses. Since we want a computationally efficient algorithm, the learner should satisfy three additional requirements. First, it should be projection-free, since projections onto $\tsimplex$ require a solution to a costly quadratic optimization problem. Second, the learner should access at most one efficient linear optimization oracle per round. Third, the algorithm should maintain efficiently an incrementally sparse representation in the simplex, which is used for sparse tracking. Projection-free algorithms have been extensively studied since they are computationally efficient, as long as the linear optimization oracle is computationally efficient and increase support incrementally. They are often based on the Frank-Wolfe approach \citep{frank_algorithm_1956, jaggi_revisiting_2013, lacoste-julien_global_2015}.

The anytime Online Frank-Wolfe (OFW) \citep{hazan_projection-free_2012} and Local Linear Optimization Oracle-based OCO (LLOO) \citep{garber_linearly_2015} satisfy those requirements. LLOO is tailored to convex polyhedral sets, see Appendix \ref{appendix_implementation_details} for details. Therefore, the assumptions of LLOO are satisfied in our setting. Both use a single call per round to the linear optimization oracle in order to compute the best vertex $\1_{\tilde{A}_t}$ of the polytope with respect to the gradient of a regularized cumulative loss  $F_t$: $\tilde{A}_t \in \argmin_{A \in \A} \langle \1_{A} , \nabla F_{t}(\tilde{w}_t) \rangle$. While $F_t(x) = \frac{1}{t} \sum_{s = 1}^{t}  \frac{s^{-1/4}}{\diamtsimplex}  \|x - \tilde{w}_{n_{0}}\|_{2}^2 - \langle x, r_s \rangle$ for OFW, where $\diamtsimplex$ denotes the diameter of $\tsimplex$, we have $F_t(x) = \|x - \tilde{w}_{n_{0}}\|_{2}^2 - \eta_{\A,T}  \sum_{s = 1}^{t}  \langle x, r_s \rangle$ for LLOO. OFW simply moves on the segment connecting $\tilde{w}_t$ and $\1_{\tilde{A}_t}$, $\tilde{w}_{t+1} = \tilde{w}_t + t^{-1/4} \left( \1_{\tilde{A}_t} - \tilde{w}_t \right) \in \tsimplex$. LLOO adopts a more sophisticated strategy whose parameters $\eta_{\A,T}$, $\gamma_{\A}$ and $M_{\A, T}$ depend on the horizon $T$, see Lemma \ref{lem:lloo_cumulative_regret} for explicit formulas. LLOO simultaneously moves towards the best corner $\1_{\tilde{A}_t}$ and away from the ordered worst corners by using several pairwise Frank-Wolfe steps, see \citet{lacoste-julien_global_2015}. The corresponding update is $\tilde{w}_{t+1} = \tilde{w}_{t} + \gamma_{\A} \left( M_{\A, T}  \bm{1}_{\tilde{A}_t}  - \tilde{w}_{t,-} \right)$, where $(\tilde{w}_{t,-}, w_{t,-}) = \A^{\text{reduce}}(w_{t}, B_t, M_{\A, T}, \nabla F_{t}(\tilde{w}_{t}))$ and $\A^{\text{reduce}}$ is detailed in Algorithm \ref{algo:reduce_lloo_point}. The computations of $\A^{\text{reduce}}$ are dominated by the cost of sorting $|B_t|$ inner-products in $\R^d$, $O\left( |B_t| (d  + \ln(|B_t|))\right)$. Since $W_{\A}$ is a linear map, both variants of the convex-combination update of $\tilde{w}_{t+1}$ are propagated to the simplex to obtain $w_{t+1}$ by using $w_t$, $\delta_{\tilde{A}_t}$ (dirac function in $\tilde{A}_t$) and $w_{t,-}$ instead of $\tilde{w}_t$, $\bm{1}_{\tilde{A}_t}$ and $\tilde{w}_{t,-}$. The corresponding support is incrementally sparse, $B_{t+1} \setminus B_t \subset \{ \tilde{A}_t \}$, and unchanged when $\tilde{A}_t$ is already included in $B_t$. Lemma \ref{lem:ofw_cumulative_regret} shows that OFW has an upper bound on the cumulative regret in $O\left(\ln(t)^2 t^{3/4}\right)$, which is in general suboptimal for the online linear optimization setting. Thanks to these extra computations, Lemma \ref{lem:lloo_cumulative_regret} yields that LLOO has optimal cumulative regret, $O\left(\ln(t) \sqrt{t}\right)$. Those results are obtained by modifying existing ones \citep{hazan_projection-free_2012,garber_linearly_2015} to account for an unbounded reward and modified parameters for OFW. Since $R_{t}^{A}$ appears in the finite-time upper bound on the sample complexity (Theorem \ref{thm:finite_time_upper_bound}), optimal cumulative regret is a desirable property if we strive for low sample complexity. This is validated by our experimental results.

\begin{table}
\centering
\scalebox{0.85}{
\begin{tabular}{|c || c | c | c | c |} 
 \hline
  & Sparse support & Computational cost & Anytime & Cumulative regret \\
 \hline\hline
 Hedge &  \xmark & $O\left( |\A| \right)$ & \xmark &  $O\left(\ln(t)\sqrt{t}\right)$ \\ 
 \hline
 AdaHedge &  \xmark & $O\left( |\A| \right)$ & \checkmark & $O\left(\ln(t)\sqrt{t}\right)$  \\ 
 \hline
 OFW &  \checkmark & $O\left( |B_t| \right)$ & \checkmark &  $O\left(\ln(t)^2 t^{3/4}\right)$ \\ 
 \hline
 LLOO & \checkmark  & $O\left( |B_t| (d  + \ln(|B_t|))\right)$  & \xmark  & $O\left(\ln(t)\sqrt{t}\right)$ \\
 \hline
\end{tabular}}
\caption{Comparison of the relevant properties of the learners used to instantiate CombGame. For cumulative regret, the notation $O(\cdot)$ hides parameters independent of $t$, see Appendix \ref{appendix_learner_cumulative_regret}. For the computational cost, $O(\cdot)$ hides constant values, small compared to $|B_t|$ and $|\A|$.}
\label{tab:comparison_learners}
\end{table}

\begin{algorithm2e}
    \caption{LLOO's $\A^{\text{reduce}}$}
    \label{algo:reduce_lloo_point}
    \SetAlgoLined
    \KwIn{$w \in \Delta_{|\A|}$ with sparse support $B$, probability mass $M \in \R$ and cost vector $c \in \mathbb{R}^d$}
    $\forall A \in B, \quad l_{A} = \langle \bm{1}_{A}, c\rangle$\;
    Let $i_{1}, \cdots, i_{|B|}$ be a permutation such that $l_{A_{i_{1}}} \geq \cdots \geq l_{A_{i_{|B|}}}$\;
    Let $k$ be the smallest integer such that $\sum_{j=1}^{k} w_{A_{i_{j}}}\geq M$ \;
    $(\tilde{w}_{-}, w_{-}) = \sum_{j=1}^{k-1} w_{A_{i_{j}}}  \left(\bm{1}_{A_{i_{j}}}, \delta_{A_{i_{j}}}\right)  +  \left( M - \sum_{j=1}^{k-1} w_{A_{i_{j}}} \right) \left(\bm{1}_{A_{i_{k}}}, \delta_{A_{i_{k}}}\right)$ \;
    Return $(\tilde{w}_{-}, w_{-})$\;
\end{algorithm2e}

\section{Sample Complexity Upper Bound}  \label{sample_complexity_upper_bound}

In this section we present and sketch the proof of the finite-time upper bound on the sample complexity of our instantiated CombGame meta-algorithm. 

\subsection{Finite-time Upper Bound}

Given a learner with sub-linear cumulative regret $R_{t}^{A} = o(t)$, Theorem \ref{thm:finite_time_upper_bound} shows that the instances of Algorithm \ref{algo:combgame_meta_algorithm}, the CombGame meta-algorithm, satisfy a finite-time upper bound on the sample complexity. The upper bound involves the complexity $D_{\nu}$. The leading constant is optimal in the asymptotic regime $\delta \rightarrow 0$. Those results and their proofs are inspired from Theorem 2 in \citet{degenne_non-asymptotic_2019}. It also bares similarity with Theorem 2 of \citet{degenne_gamification_2020}.

\begin{theorem} \label{thm:finite_time_upper_bound}
    Let $\mathcal{M}$ be bounded. The sample complexity of the instantiated CombGame meta-algorithm on bandit $\mu \in \mathcal{M}$ satisfies:
    \begin{align*}
     &\E_{\nu}[\tau_{\delta}] \leq T_{0}(\delta) + \frac{2ed}{c^2} \quad 
      \text{ with } \quad  T_{0}(\delta) \defeq \max \left\{t \in \mathbb{N}: t \leq \frac{\beta(t,\delta)}{D_{\nu}} + C_{\nu}(R_{t}^{A} + h(t)) \right\}
    \end{align*}
    where $c > 0$ is the parameter of the exploration bonus $f(t)$ when taking $b=1$. The reminder terms are: the approximation error $h(t) = O\left(\sqrt{t\ln(t)}\right)$, the learner's cumulative regret $R_{t}^{A}$ and a constant $C_{\nu}$ depending on the distribution. 
    
    Moreover, the instantiated CombGame meta-algorithm is an asymptotically optimal algorithm.
\end{theorem}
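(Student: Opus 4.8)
The plan is to follow the game-theoretic template of \citet{degenne_non-asymptotic_2019}, adapted to the combinatorial semi-bandit structure through the map $W_{\A}$. I would first isolate a high-probability \emph{good event} $\mathcal{E}_t$ on which the concentration inequalities of \citet{kaufmann_mixture_2018} hold simultaneously for all $s \le t$, so that (i) the true parameter lies in every confidence box, $\mu \in \C_s$, which is exactly what validates optimism, and (ii) the MLE has converged enough that $I_s = I^*(\mu)$ and $\tilde{\mu}_{s-1} = \mu_{s-1}$ for every $s$ past a stabilization time. The choice of exploration bonus $f$ with parameter $c$ (taking $b=1$) is what makes $\sum_{t}\mathbb{P}(\mathcal{E}_t^{\complement})$ summable, and evaluating this series yields precisely the additive constant $\frac{2ed}{c^2}$.

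Working on $\mathcal{E}_t$, I would then lower-bound the stopping statistic $Z_t \defeq \min_{J \in N(I_t)}\inf_{\lambda}\langle \tilde{N}_{t-1}, d_{\text{KL}}(\mu_{t-1},\lambda)\rangle$ by a sum of per-round game values. The chain of reductions is: replace the counts $\tilde{N}_{t-1}$ by the cumulative weights $\sum_{s} \tilde{w}_s$ via the C-Tracking guarantee (propagating the action-level tracking error to the arm level through $W_{\A}$), replace $\mu_{t-1}$ by $\mu$ at the cost of the estimation error, and use convexity ($\inf$ of a sum $\ge$ sum of $\inf$s) to pass from the joint infimum over $\Theta_{I^*(\mu)}^{\complement}$ to the per-round best responses $\lambda_s$ of the $\lambda$-player. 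All resulting discrepancies are $O(\sqrt{t\ln t})$ and are collected into the approximation error $h(t)$ with the distribution-dependent prefactor $C_{\nu}$. The core of this step is the saddle-point bound: optimism gives $r_{s} \ge d_{\text{KL}}(\mu,\lambda_s)$ componentwise, while the $A$-player's regret guarantee and the $\lambda$-player's best-response (no-regret) property give $\sum_{s}\langle \tilde{w}_s, r_s\rangle \ge \max_{\tilde w \in \tsimplex}\sum_{s}\langle \tilde w, r_s\rangle - R_t^{A}$; evaluating the maximum at the optimal allocation from Theorem \ref{thm:lower_bound} and using $\lambda_s \in \Theta_{I^*(\mu)}^{\complement}$ lower-bounds the right-hand side by $t D_{\nu} - R_t^{A}$. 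Combining yields $Z_t \ge t D_{\nu} - C_{\nu}(R_t^{A} + h(t))$ on $\mathcal{E}_t$, so the stopping condition $Z_t > \beta(t-1,\delta)$ is triggered as soon as $t > T_0(\delta)$, by the very definition of $T_0(\delta)$. Decomposing $\E_{\nu}[\tau_{\delta}] = \sum_{t\ge0}\mathbb{P}(\tau_{\delta} > t)$ and using $\{\tau_{\delta} > t\} \subseteq \mathcal{E}_t^{\complement}$ for $t > T_0(\delta)$ then delivers the finite-time bound.

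For asymptotic optimality I would use that $R_t^{A} + h(t) = o(t)$ for every learner in Table \ref{tab:comparison_learners}, so $T_0(\delta)$ is governed by the solution of $t \approx \beta(t,\delta)/D_{\nu}$; since $\beta(t,\delta) \sim \ln(1/\delta)$ as $\delta \to 0$ by Theorem \ref{thm:delta_pac}, this gives $\limsup_{\delta\to0}\E_{\nu}[\tau_{\delta}]/\ln(1/\delta) \le D_{\nu}^{-1}$, matching the lower bound of Theorem \ref{thm:lower_bound} and hence proving the claim.

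I expect the main obstacle to be the accumulated-information lower bound of the middle step: orchestrating the regret guarantee, optimism, and the convexity inequality while correctly propagating the \emph{sparse} C-Tracking error from the action level to the arm level through $W_{\A}$, and ensuring that the relevant per-answer learner $\A^{A}_{I^*(\mu)}$ — rather than the learners attached to the wrong answers played before stabilization — is the one whose cumulative regret $R_t^{A}$ actually controls the bound.
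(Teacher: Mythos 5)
Your overall architecture matches the paper's: concentration events plus the decomposition $\E_{\nu}[\tau_{\delta}] \le T_{0}(\delta) + \sum_{t>T_{0}(\delta)}\mathbb{P}_{\nu}\left[\mathcal{E}_t^{\complement}\right]$ (Lemma \ref{lem:upper_bound_finite_time_decomposition}), then a lower bound on the stopping statistic via C-Tracking, concentration, the saddle-point property of Appendix \ref{saddle_point_property}, and optimism; your substitution of a primal evaluation at an optimal allocation $w^{*}(\mu)$ for the paper's dual average-of-Diracs argument is a harmless variation. However, there is a genuine gap at exactly the point you flag as the ``main obstacle'': the rounds in which the candidate answer $I_s$ differs from $I^*(\mu)$. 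You propose to absorb them into the good event, claiming that on $\mathcal{E}_t$ the MLE ``has converged enough that $I_s = I^*(\mu)$ and $\tilde{\mu}_{s-1} = \mu_{s-1}$ for every $s$ past a stabilization time.'' This does not follow. The concentration event only controls $\tilde{N}_{s,a}\, d_{\text{KL}}(\mu_{s,a},\mu_a) \le f\left(t^{1/(1+b)}\right)$, i.e., accuracy \emph{relative to the number of samples}; if an arm is undersampled the bound is vacuous, $\mu_{s-1}$ (hence $\tilde{\mu}_{s-1}$) can be far from $\mu$ on that coordinate, and $I_s$ can be wrong at arbitrarily late rounds. The existence and size of a stabilization time is a property of the sampling rule, not of concentration, and it must enter the bound quantitatively rather than being discarded.

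The paper closes this gap in Appendix \ref{appendix_incorrect_candidate_answer} with an argument you would need to reproduce: under $\mathcal{E}_t$, whenever $I_s \neq I^*(\mu)$, Lemma 18 of \citet{degenne_non-asymptotic_2019} (a consequence of the Chernoff-information constant $\epsilon_{\nu}$ of Appendix \ref{proof_loss_inf_norm_upper_lower_bound}) produces an arm $a_0$ with $f(s-1)/\tilde{N}_{s-1,a_0} \ge \epsilon_{\nu}/2$, so the optimistic reward collected in such a round is at least $C_b\,\epsilon_{\nu}/2$; since by the saddle-point property and the tracking/concentration lemmas (Lemma \ref{lem:summation_ratio_w_and_empirical_counts}) the total accumulated quantity is only $O\left(f(t)\left(d|\A|^2 + d\ln(tK/d)\right) + R_{t}^{A} + \sqrt{t f(t)}\right)$, the number of incorrect rounds is $o(t)$ with explicit constants. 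This step is not bookkeeping: it is where the term $2(|\I|-1)/(C_b\epsilon_{\nu})$ inside $C_{\nu}$ and the $t_b=\sqrt{t}$ contribution to $h(t)$ originate, and it is also what justifies invoking the regret guarantee of the single learner $\A^{A}_{I^*(\mu)}$ only on the sub-sequence $T_{t,I^*}=\{s\le t: I_s=I^*(\mu)\}$ (the saddle-point property holds per candidate answer, not on all of $[t]$). Without it, your claimed inequality $Z_t \ge t D_{\nu} - C_{\nu}(R_t^{A} + h(t))$ on $\mathcal{E}_t$ is unproven, and the definition of $T_0(\delta)$ no longer yields $\mathcal{E}_t \subset \{\tau_{\delta}\le t\}$ for $t > T_0(\delta)$, which is the premise required by Lemma \ref{lem:upper_bound_finite_time_decomposition}.
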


Even though the upper bound in Theorem \ref{thm:finite_time_upper_bound} holds for finite-time, it is an asymptotic result by nature. The additive term, which is independent of $\delta$, can't be neglected in finite-time, and is likely to be loose due to the analysis. Therefore, we won't compare the upper bounds of different learners.

\paragraph{Proof Scheme}

Detailed in Appendix \ref{appendix_proof_upper_bound}, the proof of Theorem \ref{thm:finite_time_upper_bound} uses Lemma \ref{lem:upper_bound_finite_time_decomposition}, which is an adaptation of Lemma 1 in \citet{degenne_non-asymptotic_2019} with the same exploration bonus.

\begin{lemma} \label{lem:upper_bound_finite_time_decomposition}
Let $(\mathcal{E}_t)_{t \geq 1}$ be a sequence of concentrations events for the exploration bonus $f$ with parameters $c>0$ and $b>0$: $\mathcal{E}_t \defeq \left\{\forall s \leq t, \forall a \in [d], \quad \tilde{N}_{s,a} d_{\text{KL}}(\mu_{s,a}, \mu_a) \leq f\left(t^{1/(1+b)}\right)\right\}$ for all $t \geq 1$. Suppose that there exists $T_{0}(\delta) \in \mathbb{N}$ such that for all $t > T_{0}(\delta)$, $\mathcal{E}_t \subset \{\tau_{\delta} \leq t\}$. Then 
\begin{equation*}
    \E_{\nu}[\tau_{\delta}] \leq T_{0}(\delta) + \sum_{t > T_{0}(\delta)} \mathbb{P}_{\nu}\left[\mathcal{E}_t^{\complement}\right] \quad \text{ where }\quad \sum_{t > T_{0}(\delta)}\mathbb{P}_{\nu} \left[\mathcal{E}_t^{\complement}\right] \leq \frac{2ed}{c^2}
\end{equation*}
\end{lemma}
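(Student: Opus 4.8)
The plan is to split the proof into the two displayed claims. For the first inequality I would use the tail-sum identity $\E_\nu[\tau_\delta] = \sum_{t \geq 1} \mathbb{P}_\nu[\tau_\delta \geq t]$ for the nonnegative integer-valued stopping time $\tau_\delta$ and cut the sum at $T_0(\delta)$. The hypothesis $\mathcal{E}_t \subset \{\tau_\delta \leq t\}$ for $t > T_0(\delta)$ is equivalent, after taking complements, to $\{\tau_\delta > t\} \subset \mathcal{E}_t^{\complement}$, so $\mathbb{P}_\nu[\tau_\delta \geq t] \leq \mathbb{P}_\nu[\mathcal{E}_{t-1}^{\complement}]$ once $t-1 > T_0(\delta)$. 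Bounding the initial terms with $t \leq T_0(\delta)$ by $1$ each and reindexing the tail then yields $\E_\nu[\tau_\delta] \leq T_0(\delta) + \sum_{t > T_0(\delta)} \mathbb{P}_\nu[\mathcal{E}_t^{\complement}]$. As a by-product, the finiteness of the second series (established below) gives $\E_\nu[\tau_\delta] < \infty$, so in particular $\tau_\delta < \infty$ almost surely.

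For the second inequality — the uniform bound $\sum_{t > T_0(\delta)} \mathbb{P}_\nu[\mathcal{E}_t^{\complement}] \leq 2ed/c^2$, which holds for every $T_0(\delta)$ (since the tail is dominated by the full series $\sum_{t\geq 1}$) and is the quantitative heart of the lemma — I would first decompose $\mathcal{E}_t^{\complement}$ by a union bound over the $d$ arms, writing $\mathbb{P}_\nu[\mathcal{E}_t^{\complement}] \leq \sum_{a \in [d]} \mathbb{P}_\nu[\exists s \leq t: \tilde{N}_{s,a}\, d_{\text{KL}}(\mu_{s,a}, \mu_a) > f(t^{1/(1+b)})]$. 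Each summand is a time-uniform (over $s \leq t$) deviation of the per-arm empirical KL against a threshold depending only on $t$, so I would control it with the mixture/Laplace martingale concentration of \citet{kaufmann_mixture_2018} for one-parameter exponential families (sub-Gaussian in case (a), Gaussian in case (b)). The decisive simplification is that the exploration bonus is tuned so that $f(t^{1/(1+b)}) = \overline{W}\bigl((1+c)(1+b)\ln(t^{1/(1+b)})\bigr) = \overline{W}\bigl((1+c)\ln t\bigr)$, the factor $(1+b)$ cancelling; the defining inverse property of $\overline{W}$ is exactly what turns the martingale boundary at level $\overline{W}((1+c)\ln t)$ into a per-arm probability of order $t^{-(1+c)}$, up to the logarithmic correction carried by $\overline{W}$.

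I would then sum these per-arm bounds over $t > T_0(\delta)$ by comparison with an integral. Since $\overline{W}(x) \approx x + \ln x$, the per-arm summand behaves like $\ln(t)\, t^{-(1+c)}$, and integration by parts gives $\int_1^\infty \ln(x)\, x^{-(1+c)}\, dx = 1/c^2$; together with the constant $e$ from the mixture bound this yields $\sum_{t\geq 1}(\text{per-arm term}) \leq 2e/c^2$. Multiplying by the factor $d$ from the union bound gives $\sum_{t > T_0(\delta)} \mathbb{P}_\nu[\mathcal{E}_t^{\complement}] \leq 2ed/c^2$, uniformly in $T_0(\delta)$, and combining with the first paragraph completes the proof.

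I expect the main obstacle to be the second part, and within it the bookkeeping needed to recover the exact constant $2ed/c^2$: one must invoke the correct form of the \citet{kaufmann_mixture_2018} deviation inequality, verify that the $\overline{W}$-boundary is genuinely matched by the $(1+c)$-scaling of the bonus (so the decay exponent is $-(1+c)$ and not merely $-1$), and bound the residual logarithmic factor sharply enough that the integral comparison produces $1/c^2$ rather than a weaker $O(1/c)$ estimate. The first inequality is routine; this concentration-and-summation step is where the specific design of $f$ and $\overline{W}$, inherited from \citet{degenne_non-asymptotic_2019}, is essential.
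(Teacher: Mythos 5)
Your proposal is correct and follows essentially the same route as the paper: the first inequality is proved by the identical tail-sum-plus-truncation argument, and the second is exactly the content of Lemmas 5 and 6 of \citet{degenne_non-asymptotic_2019}, which the paper simply cites while you unpack them (union bound over the $d$ arms, the cancellation $f(t^{1/(1+b)}) = \overline{W}((1+c)\ln t)$ giving a per-arm bound of order $\ln(t)\,t^{-(1+c)}$, and the integral comparison $\int_1^\infty \ln(x)\,x^{-(1+c)}\,dx = 1/c^2$). The only cosmetic difference is that you attribute the time-uniform per-arm deviation bound to \citet{kaufmann_mixture_2018} rather than to the corresponding lemma of \citet{degenne_non-asymptotic_2019}, but the inequality you need and the resulting constant $2ed/c^2$ are the same.
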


The challenging part of the proof is the characterization of $T_{0}(\delta)$ with an equation involving the complexity $D_{\nu}$, similarly to Appendix D in \citet{degenne_non-asymptotic_2019}. We need to exhibit an upper bound $T_{0}(\delta)$ such that for $t \geq T_{0}(\delta)$, if $\mathcal{E}_t$ holds then the algorithm has already stopped, $\tau_{\delta} \leq t$. In contrast to \citet{degenne_non-asymptotic_2019}, the particularity of our proof is to consider computations on $\tsimplex$ and not on the simplex. Even though the idea of the proof is identical, we need different technical arguments such as the tracking and concentration results in Appendices \ref{tracking_results_technicalities} and \ref{concentration_results_technicalities}. For sake of simplicity we suppose that $I_t = I^*(\mu)$ in the following informal exposition. This fails only for $o(t)$ rounds as shown in Appendix \ref{appendix_incorrect_candidate_answer}. Using C-Tacking, we obtain that as long as the stopping criterion is not satisfied, under the concentration event $\mathcal{E}_{t-1}$,
\begin{equation*}
    \beta(t-1,\delta) \geq  \inf_{\lambda \in \Theta_{I^*(\mu)}^{\complement} } \langle \tilde{N}_{t-1}, d_{\text{KL}}(\mu_{t-1}, \lambda)\rangle  \geq  \inf_{\lambda \in \Theta_{I^*(\mu)}^{\complement} } \sum_{s = 1}^{t-1} \langle \tilde{w}_{s}, d_{\text{KL}}(\mu_{s-1}, \lambda)\rangle - O\left( \sqrt{t \ln(t)}\right) 
\end{equation*}

Then, we leverage the approximate saddle-point property of the CombGame meta-algorithm. This property is obtained by combining the optimism, the no-regret $\lambda$-player and the cumulative regret of the $A$-player, see Appendix \ref{saddle_point_property}:
\begin{equation*}
    \inf_{\lambda \in \Theta_{I^*(\mu)}^{\complement} } \sum_{s = 1}^{t-1} \langle \tilde{w}_{s}, d_{\text{KL}}(\mu_{s-1}, \lambda)\rangle \geq  \max_{A \in \A} \sum_{s = 1}^{t-1} \langle \1_{A}, r_s \rangle - O\left( \sqrt{t} \right)  - R_t^A  
\end{equation*}

Under the concentration event $\mathcal{E}_{t-1}$, the optimism implies $r_{s} \geq d_{\text{KL}}(\mu, \lambda_s)$ for $s \leq t-1$. Combining the dual formulation of $D_{\nu}$ and the average of diracs, $\frac{1}{t-1} \sum_{s = 1}^{t-1} \delta_{\lambda_s} \in \mathcal{P}\left( \Theta_{I^*(\mu)}^{\complement}\right)$, yields:\looseness=-1
\begin{equation*}
	\max_{A \in \A} \sum_{s = 1}^{t-1} \langle \1_{A}, d_{\text{KL}}(\mu, \lambda_s) \rangle \geq  t \inf_{q \in \mathcal{P}\left(\Theta_{I^*(\mu)}^{\complement}\right)}  \max_{A \in \A} \E_{\lambda \sim q} \left[ \langle \1_{A}, d_{\text{KL}}(\mu, \lambda) \rangle \right] =  t D_{\nu}
\end{equation*}

Combining all inequalities justifies the definition of $T_{0}(\delta)$ as the largest time such that the following inequality is satisfied: $T_{0}(\delta) = \max \left\{t \in \mathbb{N}: t \leq \frac{\beta(t,\delta)}{D_{\nu}} + C_{\nu}(R_{t}^{A} + h(t)) \right\}$. Taking the limit $\delta \rightarrow 0$ yields that the instances of CombGame are asymptotically optimal.

\section{Experiments}  \label{experiments}

The goal of our experiments is to validate the sample  effectiveness and computational efficiency of CombGame's instances for the finite-time regime, $\delta = 0.1$. We will compare the sample complexity of our learners, the uniform sampling and GCB-PE \citep{chen_combinatorial_2020}. To our knowledge, GCB-PE is the only algorithm which can be used to solve the pure-exploration problem for combinatorial bandits with semi-bandit feedback. Other works consider bandit feedback, cumulative regret or MAB. In addition, we demonstrate that learners on $\tsimplex$ have an \emph{exponentially smaller computational cost} compared to the learners on $\simplex$. As an illustrative example, we use the best-arm identification with batch size $k$ for a Gaussian bandit, $\nu = \mathcal{N}(\mu,\sigma^2 I_{d})$. In BAI the \textit{informative} actions are the ones containing the best arm $I^*$, $\A^* \defeq \{A \in \A: I^* \subset A\}$. They provide direct feedback on the best arm, while other actions are sampled to answer indirectly to our query. The batch setting is used in real-world applications and admits an efficient oracle, the greedy algorithm. The number of actions is $|\A| = \binom{d}{k} $ and the ratio of informative actions is $\frac{|\A^*|}{|\A|} = \frac{k}{d}$. By increasing the dimension $d$, we observe the effect of an exponential increase of $|\A|$ while the ratio of informative actions $\frac{|\A^*|}{|\A|}$ is decreasing harmonically. In Appendix \ref{appendix_experimental_setting_details}, additional experiments include BAI by playing paths in a graph (Figures \ref{fig:comparison_learners_grid_net} and \ref{fig:comparison_learners_line_net}).\looseness=-1

As described in Appendix \ref{appendix_implementation_details}, the empirical results of CombGame's instances are similar in behavior if we adopt D-Tracking instead of C-Tracking, one learner $\A^{A}$ instead of $|\I|$ learners, stylized stopping threshold $\beta(t, \delta) = \ln \left( \frac{1+\ln(t)}{\delta}\right)$ and exploration bonus $f(t) = \ln (t)$ instead of the ones licensed by theory. Doubling trick is used for Hedge and LLOO. The results over $750$ runs are summarized in Figure \ref{fig:comparison_learners} by plotting the mean of the empirical stopping time $\tau_{\delta}$ and the average running time to compute the next action. The error bars correspond to the first and third quartiles. 

\begin{figure}
    \centering
    \includegraphics[scale=0.17]{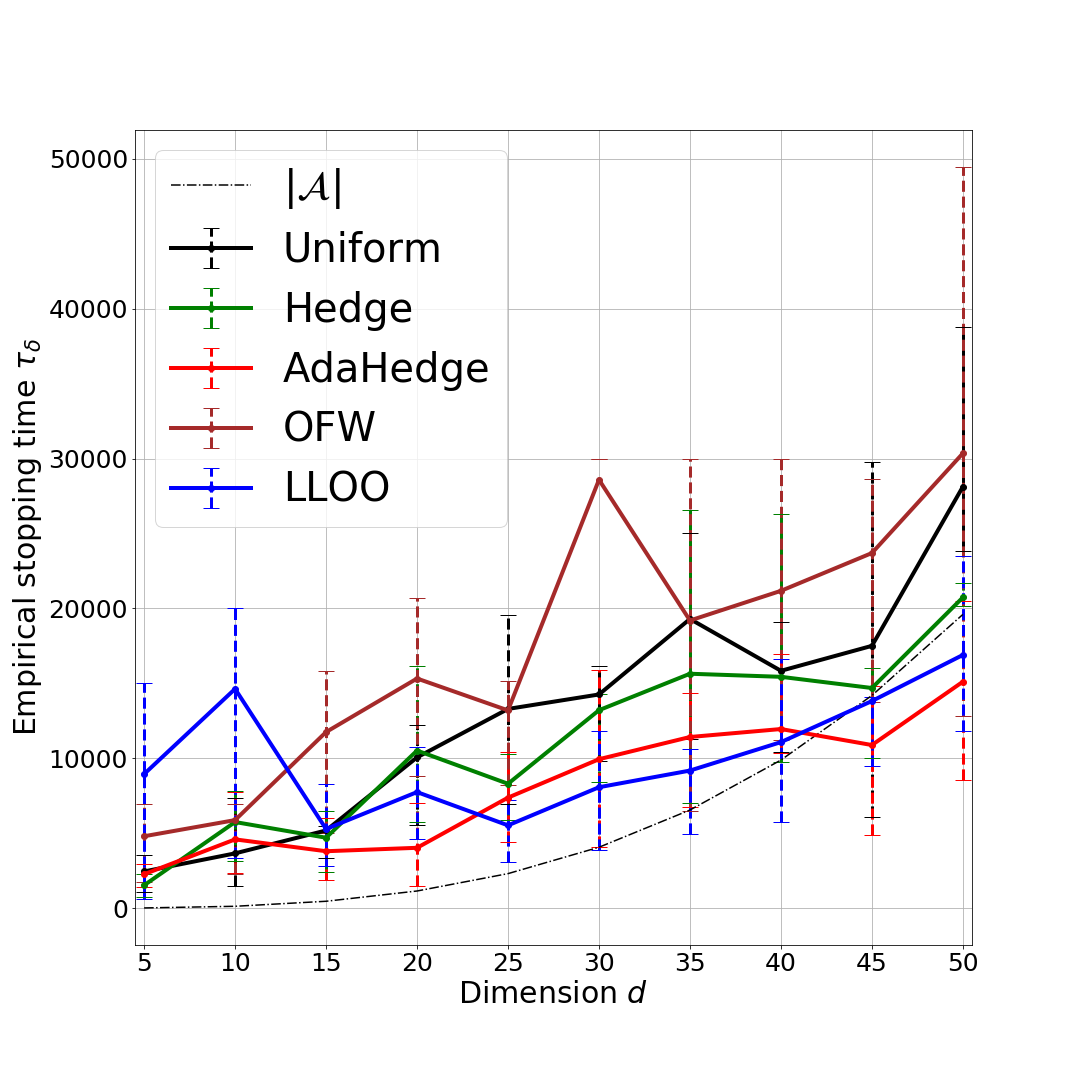}
    \includegraphics[scale=0.17]{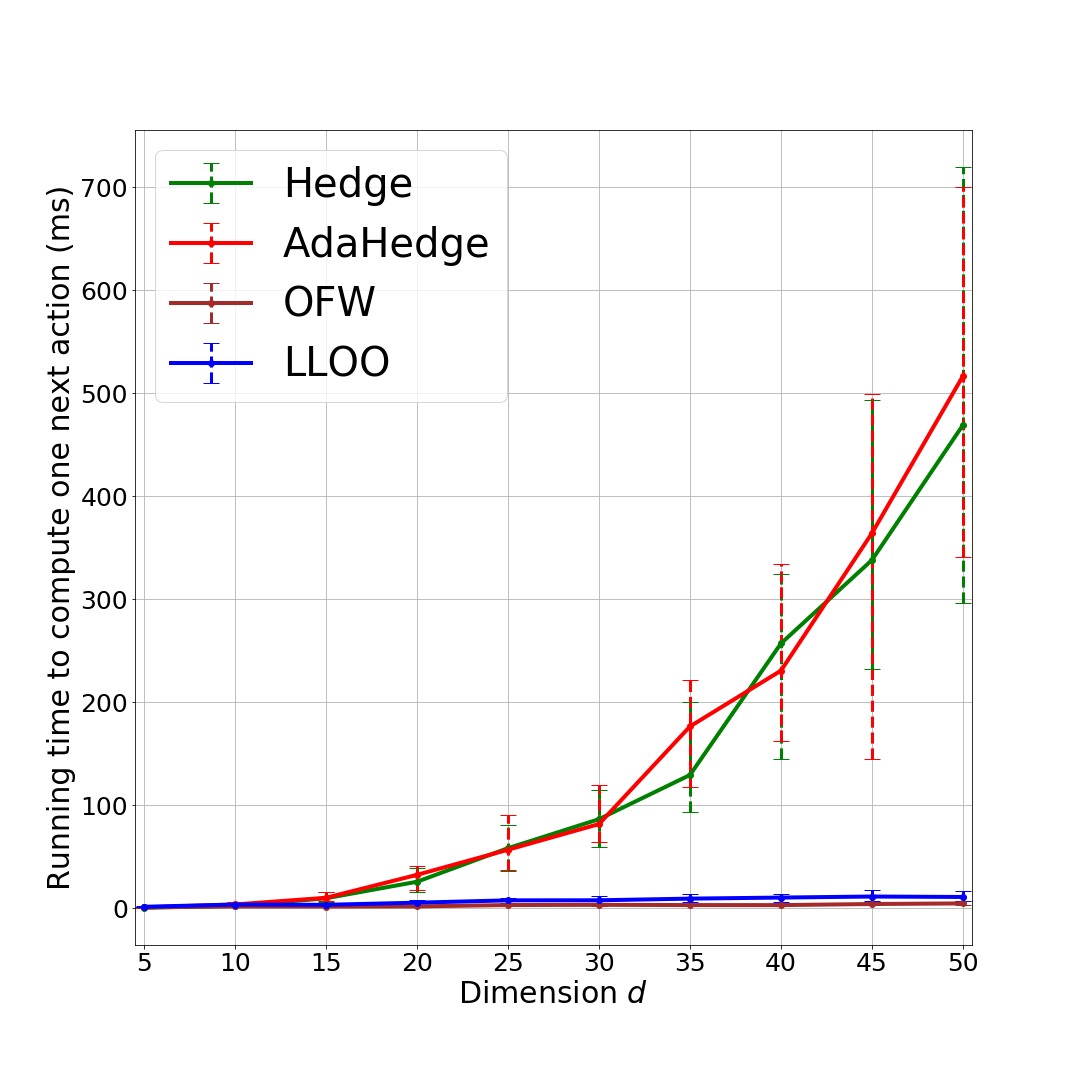}
    \caption{Uniform matroid, $k = 3$, where $\nu = \mathcal{N}(\mu,\sigma^2 I_{d})$ with $\mu$ as in Appendix \ref{appendix_experimental_setting_details_uniform_matroid} and $\sigma = 0.1$. (a) Empirical stopping time $\tau_{\delta}$. (b) Average running time to compute the next action. Note LLOO's competitive sample complexity for a low and constant computational cost.}
    \label{fig:comparison_learners}
\end{figure}

In Figure \ref{fig:comparison_learners}(a), we observe that the sample complexity of Hedge, AdaHedge and LLOO is similar and increases proportionally to the number of actions. They perform better than uniform sampling, which still works reasonably well thanks to the high number of informative actions when $k \ll d$, $|\A^*| = \binom{d-1}{k-1}$. OFW's sample complexity is significantly higher than previous algorithms. This highlights the importance of cumulative regret's guarantees in order to have competitive empirical performance. In Figure \ref{fig:comparison_learners_uniform_matroid}(c) in Appendix \ref{appendix_experimental_setting_details_uniform_matroid}, we empirically show that on this example, the sample complexity of GCB-PE is about an order of magnitude higher compared to the other sampling rules: the mean over $750$ runs of $\tau_{\delta}$ is $\{19914, 85314, 166179, 316552\}$ for $d \in \{ 5, 10, 15, 20\}$.

In Figure \ref{fig:comparison_learners}(b), the computational efficiency of the learners on the transformed simplex is striking when compared to the learners on the simplex. While the computational cost increases exponentially for Hedge and AdaHedge, it remains almost constant for OFW and LLOO. Uniform sampling has constant run time per round. As detailed in Appendix \ref{appendix_implementation_details}, the computational cost of GCB-PE is dominated by solving an NP-hard binary quadratic program in $\R^{m_{n_{0}}}$ with $m_{n_{0}} = \sum_{A \in B_{n_{0}}} |A| \geq d$. Since it has no efficient solver to our knowledge, the algorithm cannot run when $d$ is high. Therefore, we were unable to perform further experiments on GCB-PE.

Despite the fact that there is no clear-cut ranking between all algorithms in Figure \ref{fig:comparison_learners}(a), Figure \ref{fig:comparison_learners}(b) highlights that, with a greatly lower computational cost, we obtain similar sample complexity.

\section{Conclusion}   \label{conclusion}

In this paper we designed the first \emph{computationally efficient and asymptotically optimal} algorithm to solve best-arm identification with combinatorial actions and semi-bandit feedback. 

We highlight two directions to improve on our work. First, due to the learner's central role in the empirical performance, a more thorough benchmark of the existing learners should be made. An interesting choice is SFTPL from \citet{hazan_faster_2020} which meets our requirements. Second, the best-reponse oracle used by the $\lambda$-player is not computationally efficient for combinatorial \emph{answer sets}, as in best-action identification, since the computations per round scale with $|N(I_t)|$ (which is usually lower than $|\I|$). In the spirit of \citet{fiez_sequential_2019,zaki_explicit_2020}, this flaw could be mitigated by considering a phase-based algorithm discarding suboptimal answers.

Addressing a richer bandit structure where the arms are correlated is yet another avenue. Extending our approach to correlated Gaussian with known covariance matrix $\Sigma$ is straightforward. The \textit{correlated transformed simplex} is a subset of the cone of symmetric positive semi-definite matrices: $\text{conv}(\{V_{\delta_{A}}\}_{A \in \A})$ where $V_{\delta_{A}} \defeq S_{A}^{\intercal}\left(S_{A} \Sigma S_{A}^{\intercal}\right)^{-1}S_{A}$ and $S_{A} \defeq \left( \1_{(\tilde{a} = a)}\right)_{\tilde{a} \in A, a \in [d]}$. Unfortunately, the oracle has the form $\argmin_{A \in \A} \text{Tr}(V_{\delta_{A}}^{\intercal} C)$ for a cost matrix $C \in \R^{d \times d}$. To our knowledge, there is no computationally efficient oracle for this linear optimization over matrices. Therefore, it is not clear how and to what extent we can conserve the computational efficiency of our sampling rules.

Finally, as already noted in \citet{degenne_gamification_2020}, we observed that the stopping threshold is the major bottleneck in terms of finite-time empirical sample complexity. Using thresholds guarantying $\delta$-PAC algorithms is too conservative since empirical error rates are orders of magnitude below the theoretical confidence error $\delta$.

\acks{This project has received funding from the European Research Council (ERC) under the European Unions Horizon 2020 research and innovation program grant agreement No 815943. It was also supported by the Swiss National Science Foundation through the 
NCCR Catalysis.}

\bibliography{bibliography}

\clearpage
\appendix

\section{Notation}  \label{appendix_notations}

We recall some commonly used notations: the set of base arms $a \in [d] \defeq \{1, \cdots, d\}$, the indicator vector $\1_{S} \defeq (\1_{(a \in S)})_{a \in [d]}$ for a subset $S \subset [d]$, the symmetric difference of two sets $A \triangle B \defeq \left(A \setminus B \right) \cup \left( B \setminus A \right)$, the euclidean inner-product $\langle x, y \rangle \defeq \sum_{a \in [d]} x_a y_a$, the support of a vector supp$(x) \defeq \{ a \in [d]: x_a \neq 0\}$, the set of probability distributions $\mathcal{P}(\mathcal{X})$ over $\mathcal{X}$ and the $n$-dimensional probability simplex $\Delta_{n} \defeq \left\{x \in \R^{n}: x \geq 0 \land \langle 1_{d}, x \rangle = 1 \right\}$. In Table \ref{tab:notation_table}, we summarize problem-specific notations.

\begin{table}[h]
\begin{center}
     \begin{tabular}{l | l}     
     Notation & Meaning \\
     \hline
     $(\nu_a)_{a \in [d]}$ & distributions of base arms \\
     $\mu$ & bandit mean parameter, $(\mu_a)_{a \in [d]}$  \\
     $\M$ & set of possible parameters \\
     $d_{\text{KL}}(\mu, \lambda)$ & component-wise KL divergence between two parameters, $\left(d_{\text{KL}}(\mu_a, \lambda_a)\right)_{a \in [d]}$ \\
     $K$ & maximum size of an action, $\max_{A \in \A}|A|$ \\
     $A_t$ & sampled action at time $t$\\
     $Y_{t, A_t}$ & semi-bandit feedback at time $t$, $(Y_{t, a})_{a \in A_t}$ \\
     $\F_t$ & history up to time $t$, $\sigma(A_{1}, Y_{1, A_{1}}, \cdots, A_{t}, Y_{t, A_{t}})$ \\
     $\mu_t$ & maximum likelihood estimator, $\left( \frac{1}{\tilde{N}_{t-1,a}} \sum_{s = 1}^{t-1} \bm{1}_{(a \in A_s)} Y_{s,a}\right)_{a \in [d]}$ \\
     $\C_t$ & confidence region associated to $\mu_t$ \\
     $\tilde{\mu}_t$ & projection of $\mu_t$ onto $\M$ \\
     $I^*(\lambda)$ & unique correct answer for parameter $\lambda$, $\argmax_{I \in \I } \langle \lambda, \1_{I}\rangle$\\
     $I_t$ & recommended answer at time $t$\\
     $\Theta_{I}$ & cell $I$, $\{ \lambda \in \M: I^*(\lambda) = I\}$ \\
     $\Bar{\Theta}_{J}^{I}$ & set of parameters for which $J$ outperforms $I$, $\{ \lambda \in \M : \langle \1_{J} - \1_{I}, \lambda\rangle \geq 0\}$ \\
     $N(I)$ & neighbors to $I$, $\{ J \in \I: \partial \Theta_{I} \cap \partial \Theta_{J} \neq \emptyset \}$ \\
     $\tau_{\delta}$ & stopping time for confidence $\delta$\\
     $\tsimplex$ & transformed simplex\\
     $D_{\nu}$ & complexity for the bandit $\nu$\\
     $N_t, \tilde{N}_t$ & empirical count of sampled actions and its equivalent for arms, $W_{\A} N_t$\\
     $w_t, \tilde{w}_t$ & pulling distribution over actions and its equivalent for arms, $W_{\A} w_t$\\
     $\beta(t, \delta)$ & stopping threshold at time $t$ for confidence $\delta$ \\
     $f(t)$ & exploration bonus at time $t$ \\
     $r_t$ & optimistic reward \\
     $U_t$ & extended optimistic reward \\
     $R_{t}^{A}$ & cumulative regret of the learner of the $A$-player \\
     \hline
    \end{tabular}
\end{center}
	\caption{Table of notations}
    \label{tab:notation_table}
\end{table}

\section{Outline}

The appendices are organized as follows:
\begin{itemize}
    \item The proof of Theorem \ref{thm:lower_bound} is detailed in Appendix \ref{appendix_proof_lower_bound}.
    \item The proof of Theorem \ref{thm:delta_pac} is detailed in Appendix \ref{appendix_proof_delta_pac}.
    \item The results concerning the optimistic reward are detailed in Appendix \ref{appendix_optimistic_reward}: bounds on $\|r_t\|_{\infty}$ and explicit formulas for Gaussian bandit.
    \item The upper bounds on the learners' cumulative regret are proven in Appendix \ref{appendix_learner_cumulative_regret}.
    \item The full proof of Theorem \ref{thm:finite_time_upper_bound} is detailed in Appendix \ref{appendix_proof_upper_bound}.
    \item In Appendix \ref{appendix_unbounded_setting}, we sketch why the boundedness assumption is immaterial for Gaussian bandit.
    \item The implementation details for the experiments are presented in Appendix \ref{appendix_implementation_details}. Additional empirical results are also displayed.
\end{itemize}

\section{Proof of Theorem \ref{thm:lower_bound}} \label{appendix_proof_lower_bound}

Let kl$(x,y)$ be the KL divergence of a Bernoulli distribution. Let $\nu$ and $\nu'$ be two bandit models such that for all $a \in [d]$ the distributions $\nu_a$ and $\nu_a'$ are mutually absolutely continuous. The associated density are denoted $f_{\nu_a}$ and $f_{\nu_a'}$. Given the history up to time $t$, the log-likelihood ratio of the independent observations is:
\begin{equation*}
    L_t = L_t(A_1, Y_{1,A_1}, \cdots, A_{t}, Y_{t,A_t}) \defeq \sum_{a \in [d]} \sum_{s \in [t]: a \in A_s} \ln \left( \frac{f_{\nu_a}(Y_{s,a})}{f_{\nu_a'}(Y_{s,a})}\right)
\end{equation*}

The proof of Theorem \ref{thm:lower_bound} is an adaptation of the proof of Theorem 1 in \citet{garivier_optimal_2016} to our setting. We use Lemma 19 of \citet{kaufmann_complexity_2016}, which shows a lower bound on the expectation of the log-likelihood ratio. Combined with Wald's lemma, we obtain the transportation inequality of Lemma \ref{lem:transportation_lemma}, which replaces the Lemma 1 in \citet{kaufmann_complexity_2016}.

\begin{lemma*}{(Lemma 19 in \citet{kaufmann_complexity_2016})} \label{lem:19_kaufmann}
    Let $\tau$ be the almost-surely finite stopping time with respect to the filtration $(\F_t)_{t \geq 1}$. For every event $\mathcal{E} \in \F_{\tau}$,
    \begin{equation*}
        \E_{\nu}\left[L_{\tau}\right] \geq \text{kl}(\mathbb{P}_{\nu}(\mathcal{E}),\mathbb{P}_{\nu'}(\mathcal{E}))
    \end{equation*}
\end{lemma*}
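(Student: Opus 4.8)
The plan is to identify $\E_{\nu}[L_{\tau}]$ with the Kullback--Leibler divergence between the two bandit measures restricted to the stopped $\sigma$-algebra $\F_{\tau}$, and then to apply the data-processing (contraction) inequality to the binary statistic $\1_{\mathcal{E}}$. Both ingredients are classical; the only care needed is in handling the stopping time.

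First I would argue that $L_{\tau}$ is the correct log-likelihood ratio on $\F_{\tau}$, i.e.\ that $\mathbb{P}_{\nu}$ restricted to $\F_{\tau}$ is absolutely continuous with respect to $\mathbb{P}_{\nu'}$ restricted to $\F_{\tau}$ with density $e^{L_{\tau}}$. For each fixed $t$, mutual absolute continuity of $\nu_a$ and $\nu_a'$ together with the independence of the observations gives that $e^{L_t}$ is the density of $\mathbb{P}_{\nu}$ with respect to $\mathbb{P}_{\nu'}$ on $\F_{t}$. Since $\tau < \infty$ almost surely, the events $\{\tau = t\}_{t \in \N}$ form a measurable partition of the sample space, each lying in $\F_{t}$, and $L_{\tau} = L_t$ on $\{\tau = t\}$; patching the per-step densities over this partition yields $e^{L_{\tau}}$ as the density on $\F_{\tau}$. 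Integrating $L_{\tau}$ against this density then gives $\E_{\nu}[L_{\tau}] = \mathrm{KL}\bigl(\mathbb{P}_{\nu}|_{\F_{\tau}},\, \mathbb{P}_{\nu'}|_{\F_{\tau}}\bigr)$, the KL divergence between the restricted measures.

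Next I would apply monotonicity of KL divergence under coarsening. Because $\mathcal{E} \in \F_{\tau}$, the indicator $\1_{\mathcal{E}}$ is a deterministic function of the $\F_{\tau}$-measurable data and generates the sub-$\sigma$-algebra $\{\emptyset, \mathcal{E}, \mathcal{E}^{\complement}, \Omega\} \subset \F_{\tau}$. The restrictions of $\mathbb{P}_{\nu}$ and $\mathbb{P}_{\nu'}$ to this binary $\sigma$-algebra are Bernoulli laws with parameters $\mathbb{P}_{\nu}(\mathcal{E})$ and $\mathbb{P}_{\nu'}(\mathcal{E})$, whose KL divergence equals $\mathrm{kl}(\mathbb{P}_{\nu}(\mathcal{E}), \mathbb{P}_{\nu'}(\mathcal{E}))$. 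Since restricting to a coarser $\sigma$-algebra can only decrease KL divergence (equivalently, this is the data-processing inequality for the channel that reports $\1_{\mathcal{E}}$), I conclude $\E_{\nu}[L_{\tau}] = \mathrm{KL}(\mathbb{P}_{\nu}|_{\F_{\tau}}, \mathbb{P}_{\nu'}|_{\F_{\tau}}) \geq \mathrm{kl}(\mathbb{P}_{\nu}(\mathcal{E}), \mathbb{P}_{\nu'}(\mathcal{E}))$, as claimed.

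I expect the main obstacle to be the rigorous justification of the first step: that the patching argument genuinely produces $e^{L_{\tau}}$ as the Radon--Nikodym derivative on $\F_{\tau}$ for an arbitrary almost-surely finite (rather than bounded) stopping time. The finiteness hypothesis on $\tau$ is precisely what makes $\{\tau = t\}_{t \in \N}$ a partition and rules out mass leaking to $\{\tau = \infty\}$; without it the identity can fail. The contraction step is then routine, though one should fix the convention $0 \ln(0/0) = 0$ and note that the bound is vacuous (both sides potentially $+\infty$) whenever absolute continuity fails on $\F_{\tau}$ — a case excluded here by the assumed mutual absolute continuity of the per-arm distributions.
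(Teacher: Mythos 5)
Your proof is correct, but note that the paper itself never proves this statement: it is imported verbatim as Lemma 19 of \citet{kaufmann_complexity_2016}, so the relevant comparison is with the argument in that reference. There, the proof first establishes the change-of-measure identity $\mathbb{P}_{\nu'}(\mathcal{E}) = \E_{\nu}\left[\1_{\mathcal{E}}\, e^{-L_{\tau}}\right]$ for $\mathcal{E} \in \F_{\tau}$ (their Lemma 18) and then applies Jensen's inequality conditionally on $\mathcal{E}$ and on $\mathcal{E}^{\complement}$, summing the two resulting terms to produce $\text{kl}(\mathbb{P}_{\nu}(\mathcal{E}),\mathbb{P}_{\nu'}(\mathcal{E}))$. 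Your route packages the same two ingredients more abstractly: your patching step over the partition $\{\tau = t\}_{t \in \N}$ is exactly the content of their change-of-measure lemma (it identifies $e^{L_{\tau}}$, extended by zero on $\{\tau = \infty\}$, as the density of $\mathbb{P}_{\nu}|_{\F_{\tau}}$ with respect to $\mathbb{P}_{\nu'}|_{\F_{\tau}}$), and your appeal to monotonicity of KL under coarsening replaces their hand-rolled conditional-Jensen computation, which is precisely how the data-processing inequality is proved in the first place. What your version buys: the measure-theoretic content is isolated in a single clean statement, the inequality becomes an instance of a standard information-theoretic fact, and your bookkeeping makes visible that almost-sure finiteness is only needed under $\mathbb{P}_{\nu}$ (the density $e^{L_{\tau}}\1_{(\tau < \infty)}$ remains valid even if $\mathbb{P}_{\nu'}$ puts mass on $\{\tau = \infty\}$), with $\E_{\nu}[L_{\tau}]$ well defined in $(-\infty, +\infty]$ since its negative part is integrable. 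What the original buys: it is elementary and self-contained, requiring no general theory of KL divergence between restricted measures. The obstacle you flag, rigorously obtaining the Radon--Nikodym derivative on $\F_{\tau}$ for an unbounded but almost-surely finite stopping time, is indeed the crux, and your partition argument resolves it correctly.
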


\begin{lemma} \label{lem:transportation_lemma}
Let $\nu$ and $\nu'$ be two bandit models with independent arms. For any almost-surely finite stopping time $\tau$ with respect to the filtration $(\mathcal{F}_t)_{t \geq 1}$,
\begin{equation*}
    \sum_{a \in [d]} \E_{\nu}[\tilde{N}_{\tau,a}] d_{\text{KL}}(\nu_{a},\nu_{a}') \geq \sup_{\mathcal{E} \in \mathcal{F}_{\tau}} \text{kl}(\mathbb{P}_{\nu}(\mathcal{E}),\mathbb{P}_{\nu'}(\mathcal{E}))
\end{equation*}
\end{lemma}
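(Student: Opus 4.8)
The plan is to combine Lemma 19 of \citet{kaufmann_complexity_2016} with a Wald-type identity computing the expected log-likelihood ratio $\E_{\nu}[L_{\tau}]$ at the stopping time. Lemma 19 already furnishes, for every event $\mathcal{E} \in \F_{\tau}$, the inequality $\E_{\nu}[L_{\tau}] \geq \text{kl}(\mathbb{P}_{\nu}(\mathcal{E}), \mathbb{P}_{\nu'}(\mathcal{E}))$. Taking the supremum over all such events on the right-hand side immediately yields $\E_{\nu}[L_{\tau}] \geq \sup_{\mathcal{E} \in \F_{\tau}} \text{kl}(\mathbb{P}_{\nu}(\mathcal{E}), \mathbb{P}_{\nu'}(\mathcal{E}))$, which is exactly the right-hand side of the target inequality. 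It therefore remains only to show that the left-hand side of the lemma equals $\E_{\nu}[L_{\tau}]$.

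First I would rewrite the log-likelihood ratio in the predictable form $L_{\tau} = \sum_{a \in [d]} \sum_{s=1}^{\tau} \1_{(a \in A_s)} \ln(f_{\nu_a}(Y_{s,a}) / f_{\nu_a'}(Y_{s,a}))$, where the inner sum counts exactly the rounds in which arm $a$ was observed. The key structural observations are: (i) under $\nu$ the per-round increments $\ln(f_{\nu_a}(Y_{s,a}) / f_{\nu_a'}(Y_{s,a}))$ are independent across $s$ with conditional mean $d_{\text{KL}}(\nu_a, \nu_a')$ given $\F_{s-1}$, since $Y_{s,a}$ is independent of the past; and (ii) the indicator $\1_{(a \in A_s)}$ is $\F_{s-1}$-measurable, because the sampling rule chooses $A_s$ from the history up to round $s-1$. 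Hence $\E_{\nu}[\1_{(a \in A_s)} \ln(f_{\nu_a}(Y_{s,a}) / f_{\nu_a'}(Y_{s,a})) \mid \F_{s-1}] = \1_{(a \in A_s)} d_{\text{KL}}(\nu_a, \nu_a')$.

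Next I would apply Wald's lemma, equivalently optional stopping applied to the centered martingale of partial sums, to each arm-$a$ series. This gives $\E_{\nu}[\sum_{s=1}^{\tau} \1_{(a \in A_s)} \ln(f_{\nu_a}(Y_{s,a}) / f_{\nu_a'}(Y_{s,a}))] = d_{\text{KL}}(\nu_a, \nu_a') \, \E_{\nu}[\sum_{s=1}^{\tau} \1_{(a \in A_s)}] = d_{\text{KL}}(\nu_a, \nu_a') \, \E_{\nu}[\tilde{N}_{\tau,a}]$, using $\tilde{N}_{\tau,a} = \sum_{s=1}^{\tau} \1_{(a \in A_s)}$. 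Summing over $a \in [d]$ yields $\E_{\nu}[L_{\tau}] = \sum_{a \in [d]} \E_{\nu}[\tilde{N}_{\tau,a}] \, d_{\text{KL}}(\nu_a, \nu_a')$, and chaining this equality with the inequality from the first paragraph completes the proof.

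The main obstacle is the rigorous justification of the Wald step under adaptive sampling: one must verify the integrability hypotheses licensing the interchange of expectation and the random-length sum. The stopping time is almost surely finite by assumption, and it suffices to control $\E_{\nu}[\tilde{N}_{\tau,a}]$ together with the finiteness of $d_{\text{KL}}(\nu_a, \nu_a')$, which follows from the mutual absolute continuity of $\nu_a$ and $\nu_a'$. Since the increments need not be bounded, the cleanest route is to introduce the martingale $M_n = \sum_{s=1}^{n}\bigl(\1_{(a \in A_s)} \ln(f_{\nu_a}(Y_{s,a})/f_{\nu_a'}(Y_{s,a})) - \1_{(a \in A_s)} d_{\text{KL}}(\nu_a, \nu_a')\bigr)$ and invoke the optional stopping theorem, checking its applicability via the finite expected effective sample size and the uniform integrability of the increments.
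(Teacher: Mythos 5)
Your proof is correct and follows essentially the same route as the paper: it combines Lemma 19 of \citet{kaufmann_complexity_2016} with Wald's lemma applied to the log-likelihood ratio $L_{\tau}$ to get $\E_{\nu}[L_{\tau}] = \sum_{a \in [d]} \E_{\nu}[\tilde{N}_{\tau,a}]\, d_{\text{KL}}(\nu_a,\nu_a')$, then takes the supremum over events in $\F_{\tau}$. The only difference is that you spell out the martingale structure and the integrability conditions for the optional-stopping step, which the paper leaves implicit in its one-line invocation of Wald's lemma.
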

\begin{proof}
For all $a \in [d]$, we denote $(Y_{s,a})_{s \in [t]: a \in A_s}$ the sequence of $\tilde{N}_{t,a}$ i.i.d. samples observed for the arm $a$. By definition of $L_{\tau}$, the fact that $d_{\text{KL}}(\nu_{a},\nu_{a}') \defeq \E_{\nu} \left[ \ln \left( \frac{f_{\nu_a}(Y_{s,a})}{f_{\nu_a'}(Y_{s,a})}\right) \right]$ and applying Wald's lemma to $L_{\tau}$, we obtain that: for all $a \in [d]$, $\E_{\nu}\left[L_{\tau}\right] = \sum_{a \in [d]} \E_{\nu}[\tilde{N}_{\tau,a}] d_{\text{KL}}(\nu_{a},\nu_{a}')$. Combining this equation with Lemma 19 of \citet{kaufmann_complexity_2016} yields the desired result.
\end{proof}

\begin{theorem*} 
For any $\delta$-PAC strategy and any bandit $\nu$,
\begin{align*}
    \frac{\E_{\nu}[\tau_{\delta}]}{\ln ( 1/(2.4\delta))} \geq D_{\nu}^{-1}  \quad \text{ and } \quad
    \limsup_{\delta \rightarrow 0} \frac{\E_{\nu}[\tau_{\delta}]}{\ln (1/\delta)} \geq D_{\nu}^{-1} 
\end{align*}
where the {\em complexity} $D_{\nu}$, inverse of a characteristic time, is defined by
\begin{equation*}
    D_{\nu} \defeq \max_{\tilde{w} \in \tsimplex} \inf_{\lambda \in \Theta_{I^*(\mu)}^{\complement}} \langle \tilde{w}, d_{\text{KL}}(\mu, \lambda) \rangle
\end{equation*}
\end{theorem*}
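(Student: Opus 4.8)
The plan is to follow the classical change-of-measure argument of Garivier and Kaufmann, adapted to the fact that here the agent samples \emph{actions} rather than arms. The starting point is the transportation inequality of Lemma~\ref{lem:transportation_lemma}, specialized to a well-chosen alternative bandit and a well-chosen event. First I would dispose of the degenerate case: if $\E_{\nu}[\tau_{\delta}] = \infty$ both inequalities hold trivially, so I may assume $\E_{\nu}[\tau_{\delta}] < \infty$, whence $\tau_{\delta}$ is $\mathbb{P}_{\nu}$-almost surely finite and Lemma~\ref{lem:transportation_lemma} applies.

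Next, fix any alternative $\lambda \in \Theta_{I^*(\mu)}^{\complement}$ and let $\nu'$ be the bandit with parameter $\lambda$. Consider the event $\mathcal{E} \defeq \{I_{\tau_{\delta}} = I^*(\mu)\} \in \F_{\tau_{\delta}}$. Since the strategy is $\delta$-PAC and $I^*(\mu)$ is correct under $\nu$, I have $\mathbb{P}_{\nu}(\mathcal{E}) \geq 1-\delta$; while under $\nu'$ the correct answer is $I^*(\lambda) \neq I^*(\mu)$, so recommending $I^*(\mu)$ is an error and $\mathbb{P}_{\nu'}(\mathcal{E}) \leq \delta$. Using the monotonicity of the binary KL divergence (increasing in its first argument above the second, decreasing in its second below the first) together with Lemma~\ref{lem:transportation_lemma}, this gives
\begin{equation*}
    \langle \E_{\nu}[\tilde{N}_{\tau_{\delta}}], d_{\text{KL}}(\mu, \lambda)\rangle = \sum_{a \in [d]} \E_{\nu}[\tilde{N}_{\tau_{\delta}, a}] \, d_{\text{KL}}(\mu_a, \lambda_a) \geq \text{kl}(1-\delta, \delta).
\end{equation*}

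The decisive step, and the place where sampling actions matters, is to normalize the expected counts into an element of the transformed simplex. Because exactly one action is played per round, $\sum_{A \in \A} N_{\tau_{\delta}, A} = \tau_{\delta}$, so $\E_{\nu}[N_{\tau_{\delta}}]/\E_{\nu}[\tau_{\delta}] \in \simplex$, and by linearity of $W_{\A}$ the normalized arm-level vector $\tilde{w} \defeq W_{\A}\E_{\nu}[N_{\tau_{\delta}}]/\E_{\nu}[\tau_{\delta}] = \E_{\nu}[\tilde{N}_{\tau_{\delta}}]/\E_{\nu}[\tau_{\delta}]$ lies in $\tsimplex$. Dividing the displayed inequality by $\E_{\nu}[\tau_{\delta}]$ and then taking the infimum over $\lambda \in \Theta_{I^*(\mu)}^{\complement}$ yields $\E_{\nu}[\tau_{\delta}] \inf_{\lambda} \langle \tilde{w}, d_{\text{KL}}(\mu, \lambda)\rangle \geq \text{kl}(1-\delta, \delta)$. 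Since $\tilde{w} \in \tsimplex$, the inner term is at most $D_{\nu} = \max_{\tilde{w}' \in \tsimplex} \inf_{\lambda} \langle \tilde{w}', d_{\text{KL}}(\mu, \lambda)\rangle$, so $D_{\nu} \E_{\nu}[\tau_{\delta}] \geq \text{kl}(1-\delta, \delta)$.

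Finally I would invoke two standard facts about the binary KL divergence. The non-asymptotic bound $\text{kl}(1-\delta, \delta) \geq \ln(1/(2.4\delta))$ gives $\E_{\nu}[\tau_{\delta}]/\ln(1/(2.4\delta)) \geq D_{\nu}^{-1}$ directly. For the asymptotic statement, the sharper equivalence $\liminf_{\delta \to 0} \text{kl}(1-\delta, \delta)/\ln(1/\delta) = 1$ gives $\limsup_{\delta \to 0} \E_{\nu}[\tau_{\delta}]/\ln(1/\delta) \geq D_{\nu}^{-1}$. The only genuinely novel ingredient beyond the Garivier--Kaufmann template is the membership $\tilde{w} \in \tsimplex$, which is exactly where the semi-bandit action-sampling structure enters; the rest is bookkeeping, so I expect the main care to be in correctly handling the $\inf$/$\max$ exchange and the almost-sure finiteness of $\tau_{\delta}$ rather than any deep obstacle.
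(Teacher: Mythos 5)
Your proposal is correct and follows essentially the same route as the paper's proof: the transportation inequality (Lemma~\ref{lem:transportation_lemma} via Lemma~19 of Kaufmann et al.\ and Wald's lemma), the $\delta$-PAC probabilities of a recommendation event, the bound $\mathrm{kl}(1-\delta,\delta) \geq \ln(1/(2.4\delta))$, normalization of the expected arm counts into $\tsimplex$, and then infimum over alternatives followed by the max over $\tsimplex$. The only cosmetic differences are that you condition on the event $\{I_{\tau_{\delta}} = I^*(\mu)\}$ rather than the paper's $\{I_{\tau_{\delta}} = J\}$ (equivalent since $\mathrm{kl}(1-\delta,\delta) = \mathrm{kl}(\delta,1-\delta)$) and that you explicitly dispose of the case $\E_{\nu}[\tau_{\delta}] = \infty$, which the paper leaves implicit.
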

\begin{proof}
Let $\delta \in (0,1)$, $\nu$ a bandit with parameter $\mu \in \M$ and consider a $\delta$-PAC strategy. Let $\lambda \in \Theta_{I^*(\mu)}^{\complement}$ be the parameter of a bandit $\nu'$ with a unique correct answer $J \neq I^*(\mu)$. Let $\mathcal{E}_{J} \defeq \{ I_{\tau_{\delta}} = J\} \in \F_{\tau_{\delta}}$ be the event in which we recommend $J$ instead of $I^*(\mu)$ at the stopping time. Since the strategy is $\delta$-PAC, we have: $\mathbb{P}_{\nu}(\mathcal{E}_{J}) \leq \delta$ and $\mathbb{P}_{\nu'}(\mathcal{E}_{J}) \geq 1 - \delta$. Therefore, we have:
\begin{equation*}
    \sup_{\mathcal{E} \in \F_{\tau_{\delta}}} \text{kl}(\mathbb{P}_{\nu}(\mathcal{E}),\mathbb{P}_{\nu'}(\mathcal{E})) \geq \text{kl}(\mathbb{P}_{\nu}(\mathcal{E}_{J}),\mathbb{P}_{\nu'}(\mathcal{E}_{J})) \geq \text{kl}(\delta,1-\delta) \geq \ln ( 1/(2.4\delta))
\end{equation*}

The last inequality $\text{kl}(\delta,1-\delta) = \delta \ln \left( \frac{\delta}{1- \delta} \right) + (1-\delta) \ln  \left( \frac{1- \delta}{\delta} \right) \geq \ln ( 1/(2.4\delta))$ was shown in \citet{kaufmann_complexity_2016}. Combined with Lemma \ref{lem:transportation_lemma}, we obtain: $\sum_{a \in [d]} \E_{\nu}[\tilde{N}_{\tau_{\delta},a}] d_{\text{KL}}(\mu_{a},\lambda_{a}) \geq \ln ( 1/(2.4\delta))$ for all $\lambda \in \Theta_{I^*(\mu)}^{\complement}$. By construction, we have $\frac{\E_{\nu}[\tilde{N}_{\tau_{\delta},a}]}{\E_{\nu}[\tau_{\delta}]} \in \tsimplex$. Instead of considering a specific alternative bandit minimizing the lower bound, we combine all the inequalities. Taking the infimum:
\begin{align*}
    \ln ( 1/(2.4\delta)) &\leq \E_{\nu}[\tau_{\delta}] \inf_{\lambda \in \Theta_{I^*(\mu)}^{\complement}} \sum_{a \in [d]} \frac{\E_{\nu}[\tilde{N}_{\tau_{\delta},a}]}{\E_{\nu}[\tau_{\delta}]} d_{\text{KL}}(\mu_{a},\lambda_{a})  \\
    &\leq \E_{\nu}[\tau_{\delta}] \sup_{\tilde{w} \in \tsimplex} \inf_{\lambda \in \Theta_{I^*(\mu)}^{\complement}} \sum_{a \in [d]} \tilde{w}_{a} d_{\text{KL}}(\mu_{a},\lambda_{a}) = \E_{\nu}[\tau_{\delta}] D_{\nu}
\end{align*}

This concludes the proof of the finite-time lower bound. Taking the limit $\delta \rightarrow 0$ in the previous lower bound yields directly the asymptotic lower bound.
\end{proof}

\section{Proof of Theorem \ref{thm:delta_pac}}  \label{appendix_proof_delta_pac}

The proof of Theorem \ref{thm:delta_pac} uses the deviation inequalities of \citet{kaufmann_mixture_2018}, see Appendix \ref{deviation_inequalities}. The idea of the proof is similar to the proof of Proposition 21 in \citet{kaufmann_mixture_2018}, as well as Theorem 2 in \citet{shang_fixed-confidence_2019}. 

\begin{theorem*}  
Let $\mathcal{M}$ be bounded. Regardless of the sampling rule, a strategy using the frequentist recommendation/stopping pair with the stopping threshold:
\begin{align*}
\beta (t, \delta) &\defeq 
\begin{cases} 
    3d_{0} \ln \left(1 + \ln \left(\frac{t K}{d_{0}}\right)\right) + d_{0}\T\left(\frac{\ln \left( \frac{|\I|-1}{\delta}\right)}{d_{0}}\right) & \text{for (a)}\\
     2 d_{0}\ln \left(4 + \ln \left(\frac{t K}{d_{0}}\right)\right) + d_{0} \C^{g_G}\left(\frac{\ln \left( \frac{|\I|-1}{\delta}\right)}{d_{0}}\right) & \text{for (b)}
    \end{cases} 
\end{align*}
is $\delta$-PAC. In the above, $d_{0} \defeq \max_{I,J \in \I, J \neq I} |(I \setminus J) \cup (J \setminus I)|$, $\T$ and $\C^{g_G}$ are the functions defined in \citet{kaufmann_mixture_2018}, $\C^{g_G}(x) \approx x + \ln(x)$ and $\T(x) \approx x + 4 \ln(1+x + \sqrt{2x})$ for $x \geq 5$.\looseness=-1
\end{theorem*}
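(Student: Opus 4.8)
The plan is to prove the correctness guarantee, i.e.\ to bound $\mathbb{P}_{\nu}[\tau_{\delta}<\infty\text{ and }I_{\tau_{\delta}}\neq I^*(\mu)]$ by $\delta$; since this probability does not depend on the sampling rule and termination is ensured separately by Theorem \ref{thm:finite_time_upper_bound} for the proposed sampling rules, it delivers the $\delta$-PAC claim. First I would note that the error event is contained in $\bigcup_{t\geq 1}\{I_t\neq I^*(\mu)\text{ and the stopping test triggers at }t\}$, so that a time-uniform control of the generalized likelihood ratio defining $\tau_{\delta}$ suffices. The raw MLE $\mu_{t-1}$ (not the projection $\tilde{\mu}_{t-1}$) is what enters that ratio, which is convenient because it is exactly the empirical mean controlled by the deviation inequalities.

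The conceptual core is a geometric reduction turning an incorrect recommendation into a \emph{low-dimensional} deviation of $\mu_{t-1}$ from $\mu$. Whenever $I_t\neq I^*(\mu)$ we have $\mu\notin\Theta_{I_t}$, and I would show there is a neighbour $J\in N(I_t)$ with $\mu\in\Bar{\Theta}_{J}^{I_t}$: starting from an interior point of $\Theta_{I_t}$ and moving along the segment towards $\mu$, the first crossed cell boundary is shared with some $J\in N(I_t)$, and since the affine map $\lambda\mapsto\langle\1_{J}-\1_{I_t},\lambda\rangle$ is strictly negative inside $\Theta_{I_t}$ and vanishes at the crossing, its monotonicity along the segment forces $\langle\1_{J}-\1_{I_t},\mu\rangle\geq 0$. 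The constraint defining $\Bar{\Theta}_{J}^{I_t}$ involves only the coordinates in $I_t\triangle J$, so the infimum $\inf_{\lambda\in\Bar{\Theta}_{J}^{I_t}}\langle\tilde{N}_{t-1},d_{\text{KL}}(\mu_{t-1},\lambda)\rangle$ depends only on those arms; evaluating it at the feasible point $\mu$ and combining with the stopping test $\min_{J'\in N(I_t)}\inf_{\lambda}(\cdots)>\beta(t-1,\delta)$ gives
\begin{equation*}
\sum_{a\in I_t\triangle J}\tilde{N}_{t-1,a}\,d_{\text{KL}}(\mu_{t-1,a},\mu_a)>\beta(t-1,\delta),
\end{equation*}
an event involving at most $|I_t\triangle J|\leq d_{0}$ arms.

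With this reduction, the remaining step is a union over the at most $|\I|-1$ alternative answers combined with the time-uniform concentration of \citet{kaufmann_mixture_2018}. For each alternative I would apply their mixture-martingale deviation bound to the $d_{0}$-dimensional sum of per-arm empirical KL terms at confidence level $\delta/(|\I|-1)$, which yields the term $d_{0}\,\T(\cdot/d_{0})$ in the sub-Gaussian case (a) and the sharper $d_{0}\,\C^{g_G}(\cdot/d_{0})$ in the exact Gaussian case (b). The doubly-logarithmic remainder $\sum_{a}\ln(1+\ln\tilde{N}_{t-1,a})$ is handled by Jensen's inequality for the concave map $x\mapsto\ln(1+\ln x)$ using $\sum_{a}\tilde{N}_{t-1,a}\leq tK$ and $|I_t\triangle J|\leq d_{0}$; this is exactly where the argument $tK/d_{0}$ inside the logarithm arises and where the constants $3$ and $2$ (and the shifts $1$ and $4$) get fixed. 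Summing the $|\I|-1$ terms returns a total error of $\delta$.

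The main obstacle I anticipate is twofold. The delicate conceptual point is the geometric lemma certifying that any incorrect recommendation exposes a neighbouring half-space that already contains the true $\mu$ and has symmetric difference bounded by $d_{0}$; without it one only obtains a full $d$-dimensional deviation and loses the tight dependence on $d_{0}$. The delicate technical point is the exact calibration of $\beta(t,\delta)$ against the functions $\T$ and $\C^{g_G}$ so that, after the union over $|\I|-1$ answers and the Jensen step, every term closes uniformly in $t$ at level $\delta/(|\I|-1)$; this bookkeeping of the doubly-logarithmic terms, rather than any single inequality, is where the real care is required.
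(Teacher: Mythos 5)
Your proposal is correct in its core and follows the same architecture as the paper's proof: a union bound over the $|\I|-1$ wrong answers, a reduction of each stopping event to a self-normalized deviation over at most $d_{0}$ arms via a feasible ``hybrid'' parameter (equal to $\mu$ on the symmetric difference, to $\mu_{t-1}$ elsewhere), then the time-uniform bounds of \citet{kaufmann_mixture_2018} at level $\delta/(|\I|-1)$, with Jensen's inequality for $x \mapsto \ln(c+\ln x)$ together with $\sum_{a}\tilde{N}_{t,a}\leq tK$, and monotonicity of $x\mapsto x\T(c/x)$ and $x\mapsto x\C^{g_G}(c/x)$, to calibrate $\beta(t,\delta)$. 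The one step where you genuinely diverge is the choice of the comparison answer. The paper compares $I$ against $I^*(\mu)$ itself: it bounds the statistic by $\inf_{\lambda\in\Bar{\Theta}_{I^*(\mu)}^{I}}\langle\tilde{N}_{t},d_{\text{KL}}(\mu_{t},\lambda)\rangle$, using that $\mu\in\Bar{\Theta}_{I^*(\mu)}^{I}$ always holds, so no geometric lemma is needed; but since $I^*(\mu)$ need not be a neighbor of $I$, this silently replaces the neighbor-restricted statistic $\min_{J\in N(I_t)}\inf_{\lambda\in\Bar{\Theta}_{J}^{I_t}}\langle\tilde{N}_{t-1},d_{\text{KL}}(\mu_{t-1},\lambda)\rangle$ actually used in the stopping rule by the infimum over the full alternative $\Theta_{I}^{\complement}$, which requires knowing that the latter infimum is attained on a shared cell boundary. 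Your segment-crossing lemma, producing a neighbor $J\in N(I_t)$ with $\mu\in\Bar{\Theta}_{J}^{I_t}$, addresses exactly this point head-on and works directly with the stopping statistic as written; it is a legitimate, slightly more self-contained alternative. Note also that your $J$ depends only on the pair $(I_t,\mu)$, so the deviation set $I_t\triangle J$ is deterministic and the fixed-subset concentration of \citet{kaufmann_mixture_2018} indeed applies, and $|I_t\triangle J|\leq d_{0}$ by definition of $d_{0}$, so both routes yield the same threshold. (One small correction: the constants $3$, $2$ and the shifts $1$, $4$ come from the concentration inequalities themselves, not from the Jensen step; Jensen only produces the $tK/d_{0}$ argument inside the logarithm.)

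The one genuine shortfall is termination. The paper's definition of $\delta$-PAC includes $\{\tau_{\delta}=\infty\}$ in the failure event, and its proof establishes $\tau_{\delta}<\infty$ inside this theorem by a limiting argument: $\beta(t,\delta)/t\to 0$ while the normalized statistic converges to a strictly positive limit, valid for any sampling rule whose recommendation converges to $I^*(\mu)$. Deferring termination to Theorem \ref{thm:finite_time_upper_bound} covers only the instantiated CombGame sampling rules, so your argument as stated does not deliver the ``regardless of the sampling rule'' clause of the statement; you should add the paper's short limit argument (or an equivalent one) under the mild assumption $\lim_{t\to\infty} I_t = I^*(\mu)$ to close this.
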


\begin{proof}
First let's show that $\tau_{\delta} < \infty$. We recall the following expressions: $I_t = \argmax_{I \in \I} \langle \1_{I}, \tilde{\mu}_{t-1} \rangle$ where $\tilde{\mu}_{t-1} \in \argmin_{\lambda \in \M \cap \C_{t}} \langle \tilde{N}_{t-1}, d_{\text{KL}}(\mu_{t-1}, \lambda) \rangle$ and
\begin{align*}
    \tau_{\delta} &= \inf \left\{t \in \mathbb{N}: \min_{J \in N(I_t)} \inf_{\lambda \in \Bar{\Theta}_{J}^{I_t} } \langle \tilde{N}_{t-1}, d_{\text{KL}}(\mu_{t-1}, \lambda)\rangle > \beta(t-1,\delta)\right\}
\end{align*}

Let an arbitrary sampling rule, the set of arms sampled only a finite time, $\mathcal{U} \defeq \{a \in [d]: \lim_{t \rightarrow \infty}\tilde{N}_{t,a} < + \infty\}$, and the limit of the empirical sampling rate, $\tilde{w}_{\infty} \defeq \left(\lim_{\infty} \frac{\tilde{N}_{t,a}}{t} \right)_{a \in [d]}$. For all $a \in \mathcal{U}^{\complement}$, the law of large number proves that $\mu_{t} \rightarrow_{\infty} \mu_a$, while for all $a \in \mathcal{U}$, $\mu_{t} \rightarrow_{\infty} \tilde{\mu}_a \neq \mu_a$. Since it is a basic requirement for a sampling rule to predict the unique correct answer, we consider only sampling rules satisfying $\lim_{\infty} I_t = I^*(\mu)$. Since $\mu_t \rightarrow \mu$ implies $\tilde{\mu}_t \rightarrow \mu \in \M$ and $\lim_{\infty} I_t = I^*(\mu)$, this condition is weaker than assuming the convergence of the parameter $\mu_t$ towards the true parameter, which happens if $\mathcal{U} = \emptyset$. Let $T_{0} \in \N$ such that for all $t \geq T_{0}$, $I_t = I^*(\mu)$. For $t \geq T_{0}$, the stopping condition rewrites as: $\inf_{\lambda \in \Theta_{I^*(\mu)}^{\complement} } \langle \frac{\tilde{N}_{t-1}}{t}, d_{\text{KL}}(\mu_{t-1}, \lambda)\rangle > \frac{\beta(t-1,\delta)}{t}$. By continuity, dominated convergence (to invert $\lim$ and $\inf$ for $\M$ bounded), and using that $\beta(t, \cdot) \sim_{\infty} c_{d} \ln ( \ln (t) )$, taking the limit on both side yields: $\inf_{\lambda \in \Theta_{I^*(\mu)}^{\complement} }  \sum_{a \in \mathcal{U}^{\complement}} \tilde{w}_{\infty, a} d_{\text{KL}}(\mu_{a}, \lambda_a) \geq 0$.

By construction, we have $\tilde{w}_{\infty} \in \tsimplex$, hence the left term is strictly positive if: for all $a \in [d]$ such that $\tilde{w}_{\infty, a} \neq 0$, we have $d_{\text{KL}}(\mu_{a}, \lambda_a) \neq 0$. Since $d_{\text{KL}}(\mu_{a}, \lambda_a) = 0$ if and only if $\mu_{a}=\lambda_a$, the fact that $\lambda \in \Theta_{I^*(\mu)}^{\complement}$ allows us to conclude that the inequality is strict. Therefore, there exists a finite time such that the stopping condition is met: $\tau_{\delta} < \infty$.

Second, let's show that $\mathbb{P}_{\nu} \left[ I_{\tau_{\delta}} \neq I^*(\mu) \right] \leq \delta$. Let $\beta(t,\delta)$ be an arbitrary stopping threshold. Since $\{I_{\tau_{\delta}} \neq I^*(\mu)\} = \bigcup_{I \neq  I^{*}(\mu)} \{\exists t \in \mathbb{N}: I_{t+1} = I \land \inf_{\lambda \in \Theta_{I_{t+1}}^{\complement}} \langle \tilde{N}_{t}, d_{\text{KL}}(\mu_{t}, \lambda)\rangle > \beta(t,\delta)\}$, the union bound yields:
\begin{align*}
    \mathbb{P}_{\nu} \left[ I_{\tau_{\delta}} \neq I^*(\mu) \right] & \leq \sum_{I \neq  I^{*}(\mu)} \mathbb{P}_{\nu} \left[ \exists t \in \mathbb{N}: \inf_{\lambda \in \Theta_{I}^{\complement} } \langle \tilde{N}_{t}, d_{\text{KL}}(\mu_{t}, \lambda)\rangle > \beta(t,\delta)\right]
\end{align*}

Let $I \neq I^*(\mu)$. Since $\Theta_{I}^{\complement} = \bigcup_{J \neq I} \Bar{\Theta}_{J}^{I}$, we have $\inf_{\lambda \in \Theta_{I}^{\complement}} h(\lambda) = \min_{J \neq I} \inf_{\lambda \in \Bar{\Theta}_{J}^{I}} h(\lambda) \leq \inf_{\lambda \in \Bar{\Theta}_{I^*(\mu)}^{I}} h(\lambda)$. Using $h(\lambda) = \langle \tilde{N}_{t}, d_{\text{KL}}(\mu_{t}, \lambda)\rangle$, we obtain:
\begin{align*}
    \inf_{\lambda \in \Theta_{I}^{\complement} } \langle \tilde{N}_{t}, d_{\text{KL}}(\mu_{t}, \lambda)\rangle \leq \inf_{\lambda \in \Bar{\Theta}_{I^*(\mu)}^{I} }\langle \tilde{N}_{t}, d_{\text{KL}}(\mu_{t}, \lambda)\rangle \leq \sum_{a \in I^*(\mu) \triangle I} \tilde{N}_{t,a} d_{\text{KL}}(\mu_{t,a}, \mu_{a})
\end{align*}

The last inequality is obtained by considering a parameter $\lambda$ defined as: $\lambda_a = \mu_a$ when $a \in I^*(\mu) \triangle I$ and $\lambda_a = \mu_{t, a}$ else. Since $\mu \in \Bar{\Theta}_{I^*(\mu)}^{I}$, we have that $\langle \1_{I^*(\mu)} - \1_{I}, \lambda_a \rangle = \langle \1_{I^*(\mu)} - \1_{I}, \mu_a \rangle \geq 0$. Therefore $\lambda \in \Bar{\Theta}_{I^*(\mu)}^{I}$, hence it is a valid parameter. The upper bound rewrites as:
\begin{align*}
    \mathbb{P}_{\nu} \left[ I_{\tau_{\delta}} \neq I^*(\mu) \right] & \leq \sum_{I \neq  I^{*}(\mu)} \mathbb{P}_{\nu} \left[ \exists t \in \mathbb{N}: \sum_{a \in I^*(\mu) \triangle I} \tilde{N}_{t,a} d_{\text{KL}}(\mu_{t,a}, \mu_{a}) > \beta(t,\delta)\right]
\end{align*}

To conclude, we need to control the deviation of the \textit{self-normalized sums}, $\sum_{a \in S} \tilde{N}_{t,a} d_{\text{KL}}(\mu_{t,a}, \mu_{a})$ for $S \subset [d]$, thanks to concentration inequalities, which are uniform in time. The concentration inequality depends on the setting (a) sub-Gaussian bandit or (b) Gaussian bandit. Moreover, we want an expression for $\beta(t, \delta)$ which doesn't depend on the answer $I$ or the empirical count $\tilde{N}_{t}$. Let $d_{0} \defeq \max_{I,J \in \I, J \neq I} |I \triangle J|$. Combining the concavity of $x \mapsto \ln(c + \ln(x))$ and the fact that $\sum_{a \in I^*(\mu) \triangle I*} \tilde{N}_{t,a} \leq \sum_{a \in [d]} \tilde{N}_{t,a} \leq t K$, we obtain:
\begin{align*}
     \sum_{a \in I^*(\mu) \triangle I} \ln(c + \ln(\tilde{N}_{t,a})) &\leq |I^*(\mu) \triangle I|\ln \left(c + \ln \left(\frac{\sum_{a \in I^*(\mu) \triangle I} \tilde{N}_{t,a}}{|I^*(\mu) \triangle I|}\right)\right) \\
     &\leq |I^*(\mu) \triangle I|\ln \left(c + \ln \left(\frac{t K}{|I^*(\mu) \triangle I|}\right)\right)\leq d_{0} \ln \left(c + \ln \left(\frac{t K}{d_{0}}\right)\right)
\end{align*}

The last inequality is due to the fact that $h_{t}(x)=x \ln \left(c_{0} + \ln \left(\frac{t}{x}\right)\right)$ is increasing on $]0,d]$ when $t \gg d$. The higher $t$ is, the longer $h_{t}$ is increasing. Numerically, $h_1$ is increasing till $424$. Let $\T$ and $\C^{g_G}$ the functions defined in \citet{kaufmann_mixture_2018}. Since $x \mapsto x \T\left(\frac{c}{x}\right)$ and $x \mapsto x \C^{g_G}\left(\frac{c}{x}\right)$ are increasing (Appendix \ref{deviation_inequalities}), we obtain:
\begin{align*}
    |I^*(\mu) \triangle I| \T\left(\frac{\ln \left( \frac{|\I|-1}{\delta}\right)}{|I^*(\mu) \triangle I|}\right) &\leq d_{0} \T\left(\frac{\ln((|\I|-1)/\delta)}{d_{0}}\right) \\
    |I^*(\mu) \triangle I| \C^{g_{G}}\left(\frac{\ln \left( \frac{|\I|-1}{\delta}\right)}{|I^*(\mu) \triangle I|}\right) &\leq d_{0} \C^{g_G}\left(\frac{\ln((|\I|-1)/\delta)}{d_{0}}\right) 
\end{align*}

Combining those inequalities with $c=1$ for (a) and $c=4$ for (b), we obtain that:
\begin{align*}
    3\sum_{a \in I^*(\mu) \triangle I} \ln(1 + \ln(\tilde{N}_{t,a})) + |I^*(\mu) \triangle I| \T\left(\frac{\ln \left( \frac{|\I|-1}{\delta}\right)}{|I^*(\mu) \triangle I|}\right) &\leq 3d_{0} \ln \left(1 + \ln \left(\frac{tK}{d_{0}}\right)\right) \\&\quad \quad +  d_{0} \T\left(\frac{\ln((|\I|-1)/\delta)}{d_{0}}\right) \\
    2\sum_{a \in I^*(\mu) \triangle I} \ln(4 + \ln(\tilde{N}_{t,a})) + |I^*(\mu) \triangle I| \C^{g_{G}}\left(\frac{\ln \left( \frac{|\I|-1}{\delta}\right)}{|I^*(\mu) \triangle I|}\right) &\leq 2d_{0} \ln \left(4 + \ln \left(\frac{tK}{d_{0}}\right)\right)  \\&\quad \quad + d_{0} \C^{g_G}\left(\frac{\ln((|\I|-1)/\delta)}{d_{0}}\right) 
\end{align*}

Let $\beta (t, \delta)$ be the stopping threshold defined as: 
\begin{align*}
\beta (t, \delta) &\defeq 
\begin{cases} 
    3d_{0} \ln \left(1 + \ln \left(\frac{tK}{d_{0}}\right)\right) + d_{0}\T\left(\frac{\ln \left( \frac{|\I|-1}{\delta}\right)}{d_{0}}\right) & \text{for (a)}\\
     2 d_{0}\ln \left(4 + \ln \left(\frac{tK}{d_{0}}\right)\right) + d_{0} \C^{g_G}\left(\frac{\ln \left( \frac{|\I|-1}{\delta}\right)}{d_{0}}\right) & \text{for (b)}
    \end{cases} 
\end{align*}

Since $x \leq y$ implies $\mathbb{P}[X > y] \leq \mathbb{P}[X > x] $, using Theorem 14 in \citet{kaufmann_mixture_2018} and Corollary 10 in \citet{kaufmann_mixture_2018} (Appendix \ref{deviation_inequalities}) yields the result: 
\begin{align*}
    \mathbb{P}_{\nu} \left[ I_{\tau_{\delta}} \neq I^*(\mu) \right] & \leq \sum_{I \neq  I^{*}(\mu)} \frac{\delta}{|\I|-1} = \delta
\end{align*}

Therefore, we conclude that a strategy using the frequentist recommendation/stopping pair is $\delta$-PAC.
\end{proof}

\subsection{Deviation Inequalities}  \label{deviation_inequalities}

The deviation inequality for sub-Gaussian bandit is rewritten in Appendix \ref{deviation_inequalities_subgaussian}, while the deviation inequality for Gaussian bandit is presented in Appendix \ref{deviation_inequalities_gaussian}.

\subsubsection{Sub-Gaussian Bandit}   \label{deviation_inequalities_subgaussian}

Theorem 14 in \citet{kaufmann_mixture_2018} holds for sub-Gaussian bandits.

\begin{lemma*}[Theorem 14 in \citet{kaufmann_mixture_2018}] \label{lem:theorem_14_mixture_martingale}
Let $\delta >0$, $\nu$ be independent one-parameter exponential families with mean $\mu$ and $S \subset [d]$. Then we have,
\begin{equation*}
    \mathbb{P}_{\nu}\left[\exists t \in \mathbb{N}: \sum_{a \in S} \tilde{N}_{t,a} d_{\text{KL}}(\mu_{t,a}, \mu_a) \geq \sum_{a \in S} 3 \ln(1 + \ln(\tilde{N}_{t,a})) + |S|\T\left(\frac{\ln \left( \frac{1}{\delta}\right)}{|S|} \right) \right]  \leq  \delta
\end{equation*}
where $\T : \mathbb{R}^+ \rightarrow \mathbb{R}^+$ is such that $\T(x) = 2\tilde{h}_{3/2}\left(\frac{h^{-1}(1+x) + \ln\left(\frac{\pi^2}{3}\right)}{2}\right)$ with:
\begin{align*}
    \forall u \geq 1, \quad h(u) &= u - \ln(u)\\
    \forall z \in [1,e], \forall x \geq 0, \quad \tilde{h}_{z}(x) &= \begin{cases}
        \exp\left(\frac{1}{h^{-1}(x)}\right) h^{-1}(x)& \text{if } x \geq h^{-1}\left(\frac{1}{\ln(z)}\right)\\
       z(x-\ln(\ln(z)))&\text{else}
    \end{cases}
\end{align*}
\end{lemma*}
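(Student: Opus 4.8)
The plan is to recognise this as the time-uniform, self-normalised deviation inequality of \citet{kaufmann_mixture_2018}, whose proof rests on the \emph{method of mixtures} combined with Ville's maximal inequality for nonnegative supermartingales. Since the statement is quoted verbatim from that reference, I would not reprove it from first principles; instead I sketch the argument one would reconstruct. The starting point is the variational identity for a one-parameter exponential family: writing $S_{t,a}$ for the sum of the $\tilde N_{t,a}$ observations of arm $a$ and $\phi_a$ for its log-partition, one has $\tilde N_{t,a}\, d_{\text{KL}}(\mu_{t,a},\mu_a) = \sup_{\theta}\big[\theta S_{t,a} - \tilde N_{t,a}\phi_a(\theta)\big]$, so that for each fixed tilt $\theta$ the process $M_t^{\theta} \defeq \exp(\theta S_{t,a} - \tilde N_{t,a}\phi_a(\theta))$ is a nonnegative martingale of unit initial value with respect to $(\F_t)$.

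The core device is to integrate $M_t^{\theta}$ against a carefully chosen mixing density $p$ over the natural parameter, producing the mixture $\bar M_{t,a} \defeq \int M_t^{\theta}\, p(\theta)\, d\theta$, which remains a nonnegative supermartingale with $\bar M_{0,a}=1$. A Laplace approximation around the empirical maximiser shows $\ln \bar M_{t,a} \approx \tilde N_{t,a}\, d_{\text{KL}}(\mu_{t,a},\mu_a) - \tfrac12\ln \tilde N_{t,a} + O(1)$; the $\tfrac12\ln\tilde N_{t,a}$ curvature term is what is refined into the $3\ln(1+\ln\tilde N_{t,a})$ correction once the peeling is made uniform in time. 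To handle the sum over $S$ I would exploit independence across arms: the joint process $\prod_{a\in S}\bar M_{t,a}$ is again a nonnegative supermartingale of unit initial value, so Ville's inequality gives $\mathbb{P}_\nu[\exists t:\ \prod_{a\in S}\bar M_{t,a}\geq 1/\delta]\leq \delta$, and taking logarithms turns this product bound into the additive deviation event in the statement.

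The deterministic inversion step — converting $\ln\prod_{a\in S}\bar M_{t,a}\geq \ln(1/\delta)$ into the stated threshold with the function $\T$ — is where the scalar analysis of \citet{kaufmann_mixture_2018} enters: the auxiliary functions $h(u)=u-\ln u$ and $\tilde h_z$ are introduced precisely to invert in closed, monotone form the transcendental relation between the log-mixture normalisation and the deviation level, while the equal split of the budget $\ln(1/\delta)$ across the $|S|$ arms produces the argument $\ln(1/\delta)/|S|$ together with the prefactor $|S|$.

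The main obstacle is not the martingale/Ville skeleton, which is routine, but the quantitative control of the mixture normalisation uniformly in $t$ and its clean inversion into $\T$. Getting the constants exactly right (the factor $3$, the $\ln(\pi^2/3)$ offset, and the piecewise definition of $\tilde h_z$) requires tracking the Gaussian-mixture curvature and stitching over dyadic time scales at the sharp $\ln\ln$ rate, and the reduction from a per-arm budget to the symmetric $|S|$-dependence must be arranged so that the threshold is both valid and \emph{monotone} in its argument — a property the excerpt later relies on when replacing $|I^*(\mu)\triangle I|$ by $d_0$. For the present paper all of this is inherited from the cited reference, so the only genuine work is verifying that the independence-across-arms hypothesis and the semi-bandit counts $\tilde N_{t,a}$ match the hypotheses of Theorem~14 verbatim, which they do.
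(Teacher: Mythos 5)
Your proposal is correct and takes essentially the same route as the paper: the paper never proves this lemma but imports it verbatim as Theorem~14 of \citet{kaufmann_mixture_2018}, exactly as you do, and your sketch of that reference's mixture-martingale argument (per-arm exponential tilting, mixing over the natural parameter, the product across independent arms handled by Ville's inequality, and the deterministic inversion yielding $\T$) is an accurate account of it. The one substantive check you correctly identify — that the semi-bandit counts $\tilde{N}_{t,a}$ satisfy the theorem's hypotheses — does go through, since independence of the per-arm feedback makes the product of per-arm mixture supermartingales a supermartingale even when several arms are observed in a single round.
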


\subsubsection{Gaussian Bandit}  \label{deviation_inequalities_gaussian}

Corollary 10 in \citet{kaufmann_mixture_2018} holds for Gaussian bandits. Lemma \ref{lem:properties_of_c_g_gaussian} gathers some properties of $\C^{g_{G}}$. 

\begin{lemma*}[Corollary 10 in \citet{kaufmann_mixture_2018}] \label{lem:corollary_10_mixture_martingale}
Let $\delta >0$, $\nu$ be a family of independent Gaussian with mean $\mu$ and $S \subset [d]$. Then we have,
\begin{equation*}
    \mathbb{P}_{\nu}\left[\exists t \in \mathbb{N}: \sum_{a \in S} \tilde{N}_{t,a} d_{\text{KL}}(\mu_{t,a}, \mu_a) \geq \sum_{a \in S} 2 \ln(4 + \ln(\tilde{N}_{t,a})) + |S|\C^{g_{G}}\left(\frac{\ln \left( \frac{1}{\delta}\right)}{|S|} \right) \right]  \leq  \delta
\end{equation*}
where $\C^{g_{G}}(x) = \min_{y \in ]1/2,1]} \frac{g_{G}(y) + x}{y} $ with $g_{G} : ]1/2,1] \rightarrow \mathbb{R}$ such that $g_{G}(y) = 2y - 2y\ln(4y) + \ln(\zeta(2y)) - \frac{1}{2}\ln(1-y)$
\end{lemma*}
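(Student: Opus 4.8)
Since the statement is exactly Corollary~10 of \citet{kaufmann_mixture_2018}, the cleanest route is to invoke it directly; below I sketch the self-contained argument I would reconstruct, which rests on the \emph{method of mixtures} together with Ville's maximal inequality for nonnegative supermartingales. The plan is to build a nonnegative supermartingale whose value encodes the deviation $\sum_{a \in S} \tilde{N}_{t,a} d_{\text{KL}}(\mu_{t,a},\mu_a)$, to control its supremum by $1/\delta$ with probability at least $1-\delta$, and then to invert the resulting bound through the variational identity that \emph{defines} $\C^{g_G}$.

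First I would set up the martingale. For Gaussian arms with known variance, $\tilde{N}_{t,a}\, d_{\text{KL}}(\mu_{t,a},\mu_a) = S_{t,a}^2/(2\sigma_a^2 \tilde{N}_{t,a})$, where $S_{t,a} \defeq \sum_{s \le t} \1_{(a \in A_s)} (Y_{s,a}-\mu_a)$ is the sum of centred observations of arm $a$ and has predictable variance $\sigma_a^2 \tilde{N}_{t,a}$. For each scalar $\theta_a$ the process $\exp(\theta_a S_{t,a} - \tfrac12 \theta_a^2 \sigma_a^2 \tilde{N}_{t,a})$ is an $(\F_t)$-martingale started at $1$, exactly, since the Gaussian moment generating function is $\exp(\theta^2\sigma^2/2)$; and by the independence of the arms the product over $a \in S$ is again a martingale. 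This reduces the $|S|$-dimensional claim to controlling a single scalar self-normalised quantity, once a mixing measure couples the $\theta_a$.

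The crux, and the step I expect to be the main obstacle, is choosing the mixing measure so that the integrated martingale yields exactly the stated threshold, with its per-arm correction $2\ln(4+\ln \tilde{N}_{t,a})$ and the function $g_G$ featuring the $\ln \zeta(2y)$ and $-\tfrac12\ln(1-y)$ terms. I would mix each $\theta_a$ over a \emph{geometric grid} of Gaussian priors on the variance scale: a Gaussian kernel supplies the $-\tfrac12\ln(1-y)$ factor through a closed-form Gaussian integral, while weighting the grid points $n \in \N$ by a zeta-normalised sequence $\propto n^{-2y}$ (whose total mass is $\zeta(2y)$) furnishes the $\ln\zeta(2y)$ term and, after peeling over which grid cell contains the realised $\sigma_a^2\tilde{N}_{t,a}$, the $\ln\ln$-type stitching correction $2\ln(4+\ln \tilde{N}_{t,a})$. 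Integrating the product martingale against this measure produces a nonnegative supermartingale $\bar{M}_t$ with $\E_{\nu}[\bar{M}_0] \le 1$ and the pointwise lower bound $\bar{M}_t \ge \exp\big(\sup_{y \in (1/2,1]}[\, y D_t - g_G(y)\,]\big)$, where $D_t$ denotes the aggregated, correction-subtracted deviation, normalised by $|S|$.

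Finally I would apply Ville's inequality, $\mathbb{P}_{\nu}[\exists t: \bar{M}_t \ge 1/\delta] \le \delta$. On the complementary event $\sup_t \bar{M}_t < 1/\delta$, the lower bound gives $y D_t - g_G(y) \le \ln(1/\delta)$ for every admissible $y$, hence $D_t \le \inf_{y \in (1/2,1]} \frac{g_G(y)+\ln(1/\delta)}{y} = \C^{g_G}(\ln(1/\delta))$ by the definition of $\C^{g_G}$. Unfolding the normalisation, the symmetric grid across the $|S|$ arms replaces $\ln(1/\delta)$ by $\ln(1/\delta)/|S|$ and multiplies the threshold by $|S|$, while the per-arm peeling reinstates $\sum_{a \in S} 2\ln(4+\ln \tilde{N}_{t,a})$, recovering precisely the claimed bound. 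The only genuinely delicate bookkeeping is verifying that the grid-and-zeta integral is lower-bounded by the stated Legendre transform uniformly in $t$; the concavity and monotonicity properties of $g_G$ (Lemma~\ref{lem:properties_of_c_g_gaussian}) are what make this inversion clean.
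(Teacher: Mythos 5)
Your primary move is exactly the paper's: the statement is imported verbatim as Corollary~10 of \citet{kaufmann_mixture_2018}, and the paper offers no independent proof, so direct invocation is the intended argument. Your mixture-of-martingales sketch (per-arm Gaussian exponential martingales multiplied by independence, a zeta-weighted grid mixture producing the $\ln\zeta(2y)$, $-\tfrac12\ln(1-y)$ and $2\ln(4+\ln\tilde{N}_{t,a})$ terms, then Ville's inequality inverted through the definition of $\C^{g_G}$ --- noting that since the threshold is deterministic one may fix the optimal $y$ in advance rather than needing a pointwise supremum over $y$) is a faithful outline of how the cited corollary is actually proved, so there is nothing to correct.
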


\begin{lemma} \label{lem:properties_of_c_g_gaussian}
The $g_{G}$ is positive on $]1/2,1[$ and satisfies $g_{G}(y) \rightarrow_{y \rightarrow \{1/2,1\}} + \infty$. The function $x \mapsto x C^{g_G}\left(\frac{c}{x}\right)$ is increasing.
\end{lemma}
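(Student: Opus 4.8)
The plan is to handle the two assertions in sequence and to derive the monotonicity of $x\mapsto x\,\C^{g_G}(c/x)$ as a soft consequence of the positivity of $g_G$, so that almost all of the effort goes into analyzing $g_{G}(y)=2y-2y\ln(4y)+\ln(\zeta(2y))-\tfrac12\ln(1-y)$ on $]1/2,1]$. First I would read off the boundary behaviour from the two singular terms. As $y\to 1^{-}$ the quantities $2y$, $\zeta(2y)\to\zeta(2)$ and $-2y\ln(4y)$ are all finite while $-\tfrac12\ln(1-y)\to+\infty$, so $g_G(y)\to+\infty$; as $y\to(1/2)^{+}$ the argument $2y\to 1^{+}$ drives $\ln(\zeta(2y))\to+\infty$ through the pole of the Riemann zeta function at $s=1$, every other term staying bounded, so again $g_G(y)\to+\infty$.

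For positivity I would first discard the two terms that are manifestly positive: $\zeta(2y)>1$ for $2y\in\,]1,2]$ and $1-y<1$ on $]1/2,1[$ give $\ln(\zeta(2y))>0$ and $-\tfrac12\ln(1-y)>0$, so it suffices to show $\psi(y):=2y-2y\ln(4y)-\tfrac12\ln(1-y)>0$. I would split at $y=e/4$. On $]1/2,e/4]$ the factorization $2y-2y\ln(4y)=2y\bigl(1-\ln(4y)\bigr)\ge 0$ makes $\psi$ a sum of a nonnegative and a strictly positive term. On $]e/4,1[$, where $2y\bigl(1-\ln(4y)\bigr)<0$, I would use $-\tfrac12\ln(1-y)$ to dominate: subdividing this interval, one bounds the increasing term $-\tfrac12\ln(1-y)$ below by its value at each left endpoint and the bounded negative contribution $2y\bigl(\ln(4y)-1\bigr)$ above, or equivalently one checks that $\psi'(y)=\tfrac{1}{2(1-y)}-2\ln(4y)$ changes sign exactly once, locating the unique interior minimizer and verifying $\psi$ is strictly positive there.

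For the monotonicity, using $g_G>0$ and the fact that $x,c>0$ in all our applications, I would pull the factor $x$ inside the defining minimum and write
\[
x\,\C^{g_G}(c/x)=x\min_{y\in\,]1/2,1]}\frac{g_G(y)+c/x}{y}=\min_{y\in\,]1/2,1]}\frac{x\,g_G(y)+c}{y}.
\]
For each fixed $y$ the map $x\mapsto h_y(x):=\frac{g_G(y)}{y}\,x+\frac{c}{y}$ is affine with strictly positive slope $g_G(y)/y>0$, hence strictly increasing. A pointwise infimum of increasing functions is increasing, which gives monotonicity at once; strictness follows because the minimum is attained at some interior $y^{*}(x)$ (the objective blows up as $y\to(1/2)^{+}$ by the limit above), so for $x_1<x_2$ we get $x_1\,\C^{g_G}(c/x_1)\le h_{y^{*}(x_2)}(x_1)<h_{y^{*}(x_2)}(x_2)=x_2\,\C^{g_G}(c/x_2)$.

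I expect the interior positivity of $g_G$ on $]e/4,1[$ to be the main obstacle: there the competing terms have opposite signs and the stationarity equation $\tfrac{1}{4(1-y)}=\ln(4y)$ is transcendental, so keeping the minimum above zero requires either the region-by-region numeric bounding sketched above or a careful monotonicity argument pinning down the single sign change of $\psi'$. By contrast, the limits and the monotonicity step are essentially immediate once strict positivity of $g_G$ is in hand.
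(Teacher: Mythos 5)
Your proposal is correct, and on the crucial positivity step it takes a genuinely different---and in fact more careful---route than the paper. The boundary limits and the final monotonicity argument coincide with the paper's: both identify the dominant singular terms ($\ln(\zeta(2y))$ as $y \to 1/2^{+}$ via the pole of $\zeta$ at $1$, and $-\tfrac{1}{2}\ln(1-y)$ as $y \to 1^{-}$), and both prove monotonicity by writing $x\,\C^{g_G}(c/x) = \min_{y} \frac{x\, g_{G}(y) + c}{y}$ and using the positivity of $g_G$ to make each member of the minimum an increasing function of $x$ (your extra strictness argument via an attained interior minimizer is sound but not needed for the paper's application, where non-strict monotonicity suffices). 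For positivity, both reduce to the same auxiliary function $h(y) = 2y - 2y\ln(4y) - \tfrac{1}{2}\ln(1-y)$ after discarding $\ln(\zeta(2y)) > 0$; but the paper then asserts that $h' \geq 0$ on all of $]1/2,1[$ (``numerically, this condition is always true'') and concludes $h(y) \geq h(1/2)$. That assertion is false: at $y = 1/2$ one has $4(1-y)\ln(4y) = 2\ln 2 > 1$, i.e.\ $h'(1/2) = 1 - 2\ln 2 < 0$, so $h$ initially decreases and attains an interior minimum (near $y \approx 0.78$, with value $\approx 0.54 > 0$); the paper's conclusion survives only because that minimum happens to be positive. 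Your structure---splitting at $y = e/4$ where $2y(1-\ln(4y))$ changes sign, then handling $]e/4,1[$ by locating the unique interior minimizer through the single sign change of $\psi'$ (justifiable since $\psi''(y) = \frac{1}{2(1-y)^2} - \frac{2}{y}$ changes sign once, $\psi'(1/2) < 0$, and $\psi' \to +\infty$ as $y \to 1^{-}$)---is exactly what is needed to repair this gap. The price is that positivity at the minimizer still rests on a numerical/transcendental verification, just as in the paper; the difference is that your numerical claim is true while the paper's is not.
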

\begin{proof}
Since $\zeta(1) = \lim_{n \rightarrow \infty} \sum_{s=1}^{n} \frac{1}{s} = + \infty$ and $g_{G}(y) \sim_{1/2} \ln(\zeta(2y))$, we have $g_{G}(y) \rightarrow_{y \rightarrow 1/2} + \infty$. Since $\zeta(2) = \frac{\pi^2}{6}$ and $g_{G}(y) \sim_{1} -\frac{1}{2}\ln(1-y)$, we have $g_{G}(y) \rightarrow_{y \rightarrow 1} + \infty$. 

Let $y \in ]1/2,1[$ and $h(y) = 2y - 2y\ln(4y) - \frac{1}{2}\ln(1-y)$. We have $h'(y) =  \frac{1}{2} \frac{1}{1-y} - 2 \ln(4y)$, hence $h'(y) \geq 0$ if and only if $1 \geq 4 (1-y) \ln(4y)$. Numerically, this condition is always true, hence $h$ is increasing. Since $h(y) \geq h(1/2)= 1$, we obtain $g_{G}(y) \geq 1 + \ln(\zeta(2y))$. Using that $\ln(\zeta(2)) = \ln\frac{\pi^2}{6} > 0$ and $x \mapsto \zeta(x)$ decreasing on $]1/2,1]$, we obtain that $\ln(\zeta(2y)) \geq \ln(\zeta(2))$. Therefore we can conclude that $\forall y \in [1/2,1[, g_{G}(y) \geq 1 \geq 0$.

Since $x C^{g_G}\left(\frac{c}{x}\right) = \min_{y \in ]1/2,1]} \frac{xg_{G}(y) + c}{y}$ and $g_{G}$ is positive on $]1/2,1[$, we obtain directly that $x \mapsto x C^{g_G}\left(\frac{c}{x}\right)$ is increasing.
\end{proof}

\section{Optimistic Reward} \label{appendix_optimistic_reward}

In Appendix \ref{proof_loss_inf_norm_upper_lower_bound}, we prove an upper and lower bound on the optimistic reward (Lemma \ref{lem:loss_inf_norm_upper_lower_bound}). The properties of $r_t$ for Gaussian bandit are studied in Appendix \ref{appendix_gaussian_optimistic_reward}

\subsection{Bounds on \texorpdfstring{$\|r_t\|_{\infty}$}-} \label{proof_loss_inf_norm_upper_lower_bound}

Due to the boundedness assumption, Lemma \ref{lem:loss_inf_norm_upper_lower_bound} below shows that the optimistic reward $r_t$ is almost bounded. When an arm $a$ is sampled less than a logarithmic number of times, $r_{t,a}$ becomes large enough to stir the sampling towards actions containing it. 

\begin{lemma} \label{lem:loss_inf_norm_upper_lower_bound}
Let $\mathcal{M}$ bounded. Under the event $\{\mu \in \C_{t}\}$, we have:
\begin{align*}
\max \left\{ \frac{f(t-1)}{\min_{a \in [d]} \tilde{N}_{t-1,a}}, \epsilon_{\nu}\right\} \leq \|r_t\|_{\infty} \leq \max \left\{ \frac{f(t-1)}{\min_{a \in [d]} \tilde{N}_{t-1,a}}, D_{\M}\right\} 
\end{align*}
\end{lemma}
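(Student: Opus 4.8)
The plan is to first unfold the two nested maxima in the definition of $r_t$. Since each coordinate $r_{t,a}$ is itself a maximum of the exploration term $f(t-1)/\tilde{N}_{t-1,a}$ and the divergence term $g_{t,a}\defeq\max_{\phi\in\{\alpha_{t,a},\beta_{t,a}\}}d_{\text{KL}}(\phi,\lambda_{t,a})$, and since $f(t-1)>0$, the outer maximum over $a$ commutes with the inner maximum to give
\begin{equation*}
\|r_t\|_{\infty}=\max\left\{\frac{f(t-1)}{\min_{a\in[d]}\tilde{N}_{t-1,a}},\ \max_{a\in[d]}g_{t,a}\right\}.
\end{equation*}
With this identity the exploration term is automatically one argument of the outer maximum, so both claimed inequalities reduce to controlling $\max_{a}g_{t,a}$: from below by $\epsilon_{\nu}$ and from above by $\max\{f(t-1)/\min_{a}\tilde{N}_{t-1,a},D_{\M}\}$.

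For the upper bound I would fix an arm $a$ and an endpoint $\phi$, recalling that by construction $d_{\text{KL}}(\mu_{t-1,a},\phi)=f(t-1)/\tilde{N}_{t-1,a}$. I would then split according to whether the confidence interval $[\alpha_{t,a},\beta_{t,a}]$ stays inside the effective range of $\M$. When the arm has been sampled enough the interval is moderate, so both $\phi$ and $\lambda_{t,a}$ lie in this bounded range and $d_{\text{KL}}(\phi,\lambda_{t,a})\le D_{\M}$, where $D_{\M}$ is the supremum of the one-dimensional divergence over that range. When $\tilde{N}_{t-1,a}$ is small the interval may leave the range, but then $\phi$ sits at divergence exactly $f(t-1)/\tilde{N}_{t-1,a}$ from $\mu_{t-1,a}$, which lets me dominate $d_{\text{KL}}(\phi,\lambda_{t,a})$ by the exploration term $f(t-1)/\min_{a}\tilde{N}_{t-1,a}$. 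Taking maxima over the two endpoints and over $a$ yields the stated upper bound.

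For the lower bound the floor is cleaner to obtain. Under the event $\{\mu\in\C_{t}\}$ we have $\mu_a\in[\alpha_{t,a},\beta_{t,a}]$, so whichever side of $\lambda_{t,a}$ the value $\mu_a$ falls on, the endpoint on that same side is at least as far from $\lambda_{t,a}$; by monotonicity of the non-symmetric $d_{\text{KL}}(\cdot,\lambda_{t,a})$ away from its minimizer this gives $g_{t,a}\ge d_{\text{KL}}(\mu_a,\lambda_{t,a})$ for every $a$, hence $\max_{a}g_{t,a}\ge\max_{a}d_{\text{KL}}(\mu_a,\lambda_{t,a})$. To make the right-hand side a strictly positive constant $\epsilon_{\nu}$, I would exploit that the $\lambda$-player picks $\lambda_t\in\Bar{\Theta}_{J}^{I_t}$ for a competing $J\in N(I_t)$, i.e.\ $\langle\1_J-\1_{I_t},\lambda_t\rangle\ge0$, while $I_t$ is strictly best for the current estimate. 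This forces $\lambda_t$ to differ from $\mu$ on the discriminating coordinates $I_t\triangle J$ by at least the instance gap, so some coordinate satisfies $|\mu_a-\lambda_{t,a}|\ge c_{\nu}>0$, and the sub-Gaussian inequality $d_{\text{KL}}(x,y)\ge(x-y)^2/(2\sigma_a^2)$ turns this into $\max_{a}d_{\text{KL}}(\mu_a,\lambda_{t,a})\ge\epsilon_{\nu}$.

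The main obstacle is that $g_{t,a}$ is expressed through the confidence-interval endpoints $\alpha_{t,a},\beta_{t,a}$ rather than through $\mu_a$ or $\mu_{t-1,a}$, so neither bound can be read off directly. The hypothesis $\{\mu\in\C_{t}\}$ is exactly what licenses transferring estimates between $\phi$ and $\mu_a$: since $\mu_a$ lies between the endpoints, the divergence can be compared at $\phi$ and at $\mu_a$ using its monotonicity on each side of the minimizer. The delicate points are handling the clipping $\max_{\phi}$ correctly in \emph{both} directions of this comparison, and verifying that the constants produced, $\epsilon_{\nu}$ and $D_{\M}$, are respectively strictly positive and finite, which is precisely where the uniqueness of $I^*(\mu)$ and the boundedness of $\M$ enter.
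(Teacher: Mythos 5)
Your opening reduction, $\|r_t\|_{\infty}=\max\bigl\{f(t-1)/\min_{a}\tilde{N}_{t-1,a},\,\max_{a}g_{t,a}\bigr\}$, and the first half of your lower bound, $g_{t,a}\geq d_{\text{KL}}(\mu_a,\lambda_{t,a})$ under $\{\mu\in\C_t\}$ via convexity of $x\mapsto d_{\text{KL}}(x,\lambda_{t,a})$, are exactly the paper's steps. Both of the arguments you use to finish, however, contain genuine gaps. For the lower bound, your claim that $\langle\1_J-\1_{I_t},\lambda_t\rangle\geq0$ "forces $\lambda_t$ to differ from $\mu$ by at least the instance gap" presupposes that $I_t$ beats $J$ \emph{at $\mu$} with a positive margin. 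The lemma is stated only under $\{\mu\in\C_t\}$, which does not imply $I_t=I^*(\mu)$: when the confidence box is wide, $I_t$ can be a wrong answer, the $\lambda$-player may select $J=I^*(\mu)\in N(I_t)$, and then $\mu$ itself lies in $\Bar{\Theta}_{J}^{I_t}$, so the constraint exerts no force pushing $\lambda_t$ away from $\mu$. The strict optimality of $I_t$ for the \emph{estimate} $\tilde{\mu}_{t-1}$ only yields a gap at $\tilde{\mu}_{t-1}$, which can be arbitrarily small and says nothing about distances from $\mu$. The paper's proof avoids this entirely: it uses that the best response $\lambda_t$ lies on the cell boundary $\partial\Theta_{I_t}$, i.e.\ at a parameter where $I_t$ ties with another answer; since the cells partition $\M$ and $\Theta_{I^*(\mu)}$ is open, every point of $\partial\Theta_{I_t}$ lies in $\Theta_{I^*(\mu)}^{\complement}$, so the Chernoff-information constant $\epsilon_{\nu}$ --- which exists precisely because $\mu$ has positive distance to that set --- applies to $\lambda_t$ uniformly, whether or not $I_t=I^*(\mu)$. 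That is the idea your proof is missing, and it is where the uniqueness of $I^*(\mu)$ actually enters.

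For the upper bound, in your second case you infer from $d_{\text{KL}}(\mu_{t-1,a},\phi)=f(t-1)/\tilde{N}_{t-1,a}$ that $d_{\text{KL}}(\phi,\lambda_{t,a})$ is dominated by the exploration term. KL divergence obeys no triangle inequality, and this inference is false: for Gaussian arms $d_{\text{KL}}(\phi,\lambda_{t,a})=(\phi-\lambda_{t,a})^2/(2\sigma_a^2)$ with $|\phi-\lambda_{t,a}|$ as large as $|\mu_{t-1,a}-\lambda_{t,a}|+\sqrt{2\sigma_a^2f(t-1)/\tilde{N}_{t-1,a}}$; expanding the square produces a cross term, so the divergence can strictly exceed $\max\{f(t-1)/\tilde{N}_{t-1,a},D_{\M}\}$ (only a relaxed bound of the form $2f(t-1)/\tilde{N}_{t-1,a}+2D_{\M}$ follows from your observation). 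The paper does not attempt any such case split: it bounds $\max_a d_{\text{KL}}(\phi_{t,a},\lambda_{t,a})$ by $D_{\M}$ in one line, reading the optimistic parameters $\phi_{t,a}$ and $\lambda_{t,a}$ as elements of the bounded region over which $D_{\M}$ is defined. You correctly sensed that confidence endpoints escaping that region are a problem, but your patch does not repair it --- with the lemma's exact constants, domination by the exploration term alone cannot hold.
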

The upper bound is a consequence of the boundedness assumption, $D_{\M} \defeq \sup_{(\phi, \lambda) \in \M^2} \|d_{\text{KL}}(\phi, \lambda)\|_{1} $. Since $\mu$ has a unique correct answer, the lower bound stems from the Chernoff information lower bound $\epsilon_{\nu}$ which holds for both (a) and (b): there exists $\epsilon_{\nu} > 0$ such that,
\begin{equation*}
    \forall \lambda \in \Theta_{I^*(\mu)}^{\complement}, \exists  a \in [d], \quad \text{ch}(\lambda_a, \mu_a) \geq \epsilon_{\nu}
\end{equation*}
where ch$(x,y) \defeq \inf_{u \in \Theta}\left( d_{\text{KL}}(u,x) + d_{\text{KL}}(u,y) \right)$.

Before proving Lemma \ref{lem:loss_inf_norm_upper_lower_bound}, let's first prove that $\epsilon_{\nu}$ exists. For (a) sub-Gaussian, we have $d_{\text{KL}}(u,x) \geq \frac{(u-x)^2}{2\sigma_{a}^2}$, hence the chernoff information of the setting (a) is greater than the one for setting (b). For (b) Gaussian, we have: $\text{ch}(x, y) = \frac{1}{2\sigma_{a}^2} \inf_{u \in \Theta}((u-x)^2 + (u-y)^2) =  \frac{(x-y)^{2}}{8\sigma_{a}^2}$.

Let $\mu \in \M$ and $\lambda \in \Theta_{I^*(\mu)}^{\complement}$. Since $\mu \in \Theta_{I^*(\mu)}$, which is an open set, the euclidean distance to $\Theta_{I^*(\mu)}^{\complement}$ is strictly positive: there exists $a \in [d]$ such that $|\lambda_a - \mu_a| \geq \epsilon > 0$. Since $\text{ch}(x, y) \geq \frac{(x-y)^{2}}{8\sigma_{a}^2}$, we can conclude for both (a) and (b) that there exists $\epsilon_{\nu} > 0$ as defined above.

Next, we prove the Lemma \ref{lem:loss_inf_norm_upper_lower_bound} itself.

\begin{proof}
For all $a \in [d]$, let $\phi_{t,a} \in \argmax_{\phi \in \{\alpha_{t,a}, \beta_{t,a}\}}  d_{\text{KL}}(\phi, \lambda_{t,a})$, the optimistic mean parameter. By convexity of $x \mapsto d_{\text{KL}}(x, y)$, we have: $\max_{\phi \in [\alpha_{t,a}, \beta_{t,a}]} d_{\text{KL}}(\phi, \lambda_{t,a}) = d_{\text{KL}}(\phi_{t,a}, \lambda_{t,a})$. The optimistic reward rewrites as: $r_{t,a} = \max \left\{\frac{f(t-1)}{ \tilde{N}_{t-1,a}}, d_{\text{KL}}(\phi_{t,a}, \lambda_{t,a})  \right\} $ for all $a \in [d]$. Using that $\max_{a \in [d]} \max\{x_a,y_a\} = \max\{\max_{a \in [d]} x_a, \max_{a \in [d]} y_a\}$, we obtain:
\begin{equation*}
    \|r_t\|_{\infty} = \max \left\{ \frac{f(t-1)}{\min_{a \in [d]} \tilde{N}_{t-1,a}}, \max_{a \in [d]}d_{\text{KL}}(\phi_{t,a}, \lambda_{t,a})  \right\}
\end{equation*}

Since $\M$ is bounded, $D_{\M} = \sup_{(\phi, \lambda) \in \M^2} \|d_{\text{KL}}(\phi, \lambda)\|_{1}$ and $\|\cdot\|_{\infty} \leq \|\cdot\|_{1}$, we obtain the desired upper bound.

Due to concentration events, with high probability we have $\mu \in \mathcal{C}_t = \bigtimes_{a \in [d]}[\alpha_{t,a}, \beta_{t,a}]$. Assume $\{\mu \in \mathcal{C}_t\}$ holds. Combining the definition of $\phi_{t,a}$ and $\lambda_t \in \partial \Theta_{I_t}$, we obtain: $ d_{\text{KL}}(\phi_{t,a}, \lambda_{t,a}) \geq  d_{\text{KL}}(\mu_{a}, \lambda_{t, a}) \geq \min_{\lambda \in \partial \Theta_{I_t}} d_{\text{KL}}(\mu_a, \lambda_a)\geq \min_{I \in \mathcal{I},\lambda \in \partial \Theta_{I}} d_{\text{KL}}(\mu_a, \lambda_a)$. The function $y \mapsto d_{KL}(x, y)$ is not convex in general and is minimized in $x=y$. Hence, the geometry of the cells yields: 
\begin{equation*}
    \min_{I \in \mathcal{I},\lambda \in \partial \Theta_{I}} d_{\text{KL}}(\mu_a, \lambda_a) = \min_{\lambda \in \partial \Theta_{I^*(\mu)}} d_{\text{KL}}(\mu_a, \lambda_a) \geq \min_{\lambda \in  \Theta_{I^*(\mu)}^{\complement}} d_{\text{KL}}(\mu_a, \lambda_a)
\end{equation*}

Since $\min\{d_{\text{KL}}(y,x), d_{\text{KL}}(x,y)\} \geq \text{ch}(x,y)$, the previous inequalities yield: $d_{\text{KL}}(\phi_{t,a}, \lambda_{t,a})  \geq \min_{\lambda \in  \Theta_{I^*(\mu)}^{\complement}} \text{ch}(\lambda_a, \mu_a)$ for all $a \in [d]$. Using the chernoff information lower bound and taking the maximum over $a \in [d]$, we conclude that: $\|r_t\|_{\infty} \geq \max \left\{ \frac{f(t-1)}{\min_{a \in [d]} \tilde{N}_{t-1,a}}, \epsilon_{\nu}\right\}$. 
\end{proof}

\subsection{Gaussian Bandit} \label{appendix_gaussian_optimistic_reward}

For Gaussian bandit, we have $d_{\text{KL}}(x, y) = \frac{(x - y)^{2}}{2 \sigma_{a}^2} $. Direct computations yield: $\alpha_{t,a} = \mu_{t-1,a} - \sqrt{\frac{2 f(t-1) \sigma_a^{2}}{\tilde{N}_{t-1,a}}}$ and $\beta_{t,a} = \mu_{t-1,a} + \sqrt{\frac{2 f(t-1) \sigma_a^{2}}{\tilde{N}_{t-1,a}}}$. As a consequence of $d_{\text{KL}}$ and $[\alpha_{t,a}, \beta_{t,a}]$ being symmetric, Lemma \ref{lem:gaussian_optimistic_reward} shows that the clipping $\frac{f(t-1)}{\tilde{N}_{t-1,a}}$ is superfluous.

\begin{lemma} \label{lem:gaussian_optimistic_reward}
Let $\nu$ be independent Gaussian and $\lambda \in \mathcal{M}$. Then, for all $a \in [d]$,
\begin{align*}
    \phi_{t,a} &= \alpha_{t,a} \bm{1}_{\lambda_{a} \geq \mu_{t-1,a}} + \beta_{t,a} (1 - \bm{1}_{\lambda_{a} \geq \mu_{t-1,a}}) \\
    d_{\text{KL}}(\phi_{t,a}, \lambda_{a}) &= \frac{(\mu_{t-1,a} - \lambda_{a})^{2}}{2\sigma_{a}^2} + \frac{f(t-1)}{\tilde{N}_{t-1,a}} + \sqrt{\frac{2f(t-1)}{\sigma_{a}^2\tilde{N}_{t-1,a}}}|\mu_{t-1,a} - \lambda_{a}| \geq \frac{f(t-1)}{\tilde{N}_{t-1,a}}
\end{align*}
where $\phi_{t,a} = \argmax_{\phi \in \{\alpha_{t,a}, \beta_{t,a}\}}\frac{(\phi - \lambda_{a})^2}{2 \sigma_a^2}$.
\end{lemma}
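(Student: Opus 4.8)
The plan is to exploit the fact that, for a Gaussian bandit, both the divergence $d_{\text{KL}}(x,y) = (x-y)^2/(2\sigma_a^2)$ and the confidence interval $[\alpha_{t,a},\beta_{t,a}]$ are symmetric about the empirical mean $\mu_{t-1,a}$, which reduces the entire statement to a one-line completion of the square. First I would record the half-width $\rho_{t,a} \defeq \sqrt{2f(t-1)\sigma_a^2/\tilde{N}_{t-1,a}}$, using the explicit endpoints derived immediately above the statement, so that $\alpha_{t,a} = \mu_{t-1,a} - \rho_{t,a}$ and $\beta_{t,a} = \mu_{t-1,a} + \rho_{t,a}$.

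The first claim (the indicator formula for $\phi_{t,a}$) is a two-case argument. Since $\phi_{t,a}$ maximizes $(\phi - \lambda_a)^2$ over the two symmetric endpoints, it is the endpoint lying on the far side of the centre $\mu_{t-1,a}$ from $\lambda_a$. I would check that when $\lambda_a \geq \mu_{t-1,a}$ one has $|\alpha_{t,a}-\lambda_a| = \rho_{t,a} + (\lambda_a - \mu_{t-1,a})$ while $|\beta_{t,a}-\lambda_a| = |\rho_{t,a} - (\lambda_a - \mu_{t-1,a})|$, so the left endpoint is farther and $\phi_{t,a} = \alpha_{t,a}$; symmetrically, when $\lambda_a < \mu_{t-1,a}$ the right endpoint is farther and $\phi_{t,a} = \beta_{t,a}$. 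This is precisely the indicator expression, and in both cases it yields the unified identity $|\phi_{t,a}-\lambda_a| = \rho_{t,a} + |\mu_{t-1,a}-\lambda_a|$.

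For the second claim I would substitute this identity into $d_{\text{KL}}(\phi_{t,a},\lambda_a) = (\phi_{t,a}-\lambda_a)^2/(2\sigma_a^2)$ and expand the square: the $\rho_{t,a}^2/(2\sigma_a^2)$ term equals $f(t-1)/\tilde{N}_{t-1,a}$, the cross term $\rho_{t,a}|\mu_{t-1,a}-\lambda_a|/\sigma_a^2$ equals $\sqrt{2f(t-1)/(\sigma_a^2\tilde{N}_{t-1,a})}\,|\mu_{t-1,a}-\lambda_a|$, and the remaining term is $(\mu_{t-1,a}-\lambda_a)^2/(2\sigma_a^2)$, matching the stated formula exactly. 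The lower bound $d_{\text{KL}}(\phi_{t,a},\lambda_a) \geq f(t-1)/\tilde{N}_{t-1,a}$ then follows immediately, since the other two summands are non-negative.

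There is no genuine obstacle here; the content is entirely a completion of the square, and the only point requiring care is the case split identifying which endpoint realises the maximum. The conceptual payoff — the reason the lemma is worth stating — is that the symmetry of the Gaussian setting forces the optimistic reward to dominate the clipping level $f(t-1)/\tilde{N}_{t-1,a}$ automatically, so the clipping in the definition of $r_t$ is redundant, in contrast to the general sub-Gaussian case where $d_{\text{KL}}$ is asymmetric.
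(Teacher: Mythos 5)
Your proposal is correct and follows essentially the same route as the paper's proof: a case split on the sign of $\lambda_a - \mu_{t-1,a}$ to identify $\phi_{t,a}$, followed by expanding the square, where your unified identity $|\phi_{t,a}-\lambda_a| = \rho_{t,a} + |\mu_{t-1,a}-\lambda_a|$ is just a repackaging of the paper's observation that the cross term $(\phi_{t,a}-\mu_{t-1,a})(\mu_{t-1,a}-\lambda_a)$ equals $\rho_{t,a}|\mu_{t-1,a}-\lambda_a| \geq 0$.
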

\begin{proof}
Let $\lambda \in \mathcal{M}$. Let $\phi_{t,a} = \argmax_{\phi \in \{\alpha_{t,a}, \beta_{t,a}\}}\frac{(\phi - \lambda_{a})^2}{2 \sigma_a^2}$ for all $a \in [d]$. Assume $\lambda_{a} \geq \mu_{t-1,a}$. Since $\lambda_{a} \geq \mu_{t-1,a} \geq \alpha_{t,a}$ and $\mu_{t-1,a} \leq \beta_{t,a}$, we have $(\alpha_{t,a} - \lambda_{a})^2  \geq (\beta_{t,a} - \lambda_{a})^2$. Therefore $\phi_{t,a} = \alpha_{t,a}$. Assume $\lambda_{a} < \mu_{t-1,a}$. Since $\lambda_{a} < \mu_{t-1,a} \leq \beta_{t,a}$ and $\mu_{t-1,a} \geq \alpha_{t,a}$, we have $(\alpha_{t,a} - \lambda_{a})^2  \leq (\beta_{t,a} - \lambda_{a})^2$. This concludes the first statement.

Due to the explicit formulas for $\phi_{t,a}$, $\alpha_{t,a}$ and $\beta_{t,a}$, we have $(\phi_{t,a} - \mu_{t-1,a})(\mu_{t-1,a} - \lambda_{a}) = \sqrt{\frac{2 f(t-1) \sigma_a^{2}}{\tilde{N}_{t-1,a}}} |\mu_{t-1,a} - \lambda_{a}|  \geq 0$, hence we conclude:
\begin{align*}
	\frac{(\phi_{t,a} - \lambda_{a})^{2}}{2\sigma_{a}^2} = \frac{(\mu_{t-1,a} - \lambda_{a})^{2}}{2\sigma_{a}^2} + \frac{f(t-1)}{\tilde{N}_{t-1,a}} + \sqrt{\frac{2f(t-1)}{\sigma_{a}^2\tilde{N}_{t-1,a}}}|\mu_{t-1,a} - \lambda_{t,a}| \geq \frac{f(t-1)}{\tilde{N}_{t-1,a}}
\end{align*}
\end{proof}

\section{Learner's Cumulative Regret} \label{appendix_learner_cumulative_regret}

In the Appendix \ref{appendix_learner_cumulative_regret}, we show upper bounds on the cumulative regret for the different learners: Hedge in Lemma \ref{lem:hedge_cumulative_regret}, AdaHedge in Lemma \ref{lem:adahedge_cumulative_regret}, OFW in Lemma \ref{lem:ofw_cumulative_regret} and LLOO in Lemma \ref{lem:lloo_cumulative_regret}.

\subsection{Learner on the Simplex}   \label{appendix_learner_simplex_cumulative_regret}

The extended optimistic reward is defined as: $U_{t,A} \defeq \langle \1_{A}, r_t \rangle$ for all $A \in \A$. The cumulative regret rewrites as:
\begin{align*}
    R_{t}^A &=  \sum_{s=1}^{t} b_s  \langle w_s, l_s\rangle - \min_{A \in \A} \sum_{s=1}^{t} b_s l_{s,A} \leq \left( \sum_{s=1}^{t} \langle w_s, l_s\rangle - \min_{A \in \A} \sum_{s=1}^{t} l_{s,A} \right) \max_{s \leq t} b_{s}
\end{align*}
where $l_{t, A} = \frac{\|U_t\|_{\infty} - U_{t,A}}{b_t} \in [0,1]$ and $b_t = \|U_t\|_{\infty} - \min_{A \in \A} U_{t,A}$ is the scale of the loss at time $t$. Since $U_t$ is positive, $\tilde{N}_{t-1,a} \geq 1$ and $f(t) = \Omega(\ln(t))$, we obtain that: $b_t \leq \|U_t\|_{\infty} \leq d \|r_t\|_{\infty}$ and $\frac{f(t-1)}{\min_{a \in [d]} \tilde{N}_{t-1,a}}\leq  f(t-1) = O(\ln(t))$. Therefore, Lemma \ref{lem:loss_inf_norm_upper_lower_bound} yields that: $\max_{s \leq t} b_{s} = O\left(\ln(t)\right)$.

\paragraph{Hedge} Using Corollary 3 in \citet{cesa-bianchi_improved_2006}, we obtain that Hedge has optimal cumulative regret (Lemma \ref{lem:hedge_cumulative_regret}).

\begin{lemma} \label{lem:hedge_cumulative_regret}
Hedge satisfies
\begin{align*}
	R_{t}^{Hedge} &\leq  \left( 4 \sqrt{\frac{L_{t}^*(t - L_{t}^*)}{t}\ln(|\A|)} + 39 \max\{1, \ln(|\A|)\} \right) \max_{s \leq t} b_{s} = O\left(\ln(t)\sqrt{t}\right)
\end{align*}
where $L_{t}^* = \min_{A \in \A} \sum_{s=1}^{t} l_{s,A}$.
\end{lemma}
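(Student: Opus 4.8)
The plan is to reduce the unbounded-loss regret to the standard bounded-loss analysis of the exponentially weighted forecaster through the normalization that has already been set up, and then to invoke the second-order regret bound of \citet{cesa-bianchi_improved_2006} off the shelf. Recall the decomposition established just above the statement, namely $R_{t}^{Hedge} \leq \bigl(\sum_{s=1}^{t} \langle w_s, l_s\rangle - \min_{A \in \A}\sum_{s=1}^{t} l_{s,A}\bigr)\max_{s \leq t} b_{s}$, where $l_{s,A} = (\|U_s\|_{\infty} - U_{s,A})/b_s \in [0,1]$ are the normalized losses actually fed to Hedge and $b_s = \|U_s\|_{\infty} - \min_{A}U_{s,A}$ is the loss scale. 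Hence it suffices to control the parenthesized term, which is exactly the regret of Hedge run on losses valued in $[0,1]$ with $N = |\A|$ experts and uniform initialization $w_{n_0} = \frac{1}{|\A|}\1$, and then multiply by $\max_{s\leq t} b_s$.

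For these normalized losses the hypotheses of Corollary 3 in \citet{cesa-bianchi_improved_2006} are met: the loss vectors lie in $[0,1]^{|\A|}$ and Hedge is run with the prescribed tuning. Invoking the corollary directly yields $\sum_{s=1}^{t} \langle w_s, l_s\rangle - L_t^{*} \leq 4\sqrt{\frac{L_{t}^{*}(t - L_{t}^{*})}{t}\ln(|\A|)} + 39\max\{1, \ln(|\A|)\}$, with $L_t^{*} = \min_{A}\sum_{s} l_{s,A}$, which is precisely the bracketed factor in the claim. Multiplying through by $\max_{s\leq t} b_s$ produces the first inequality of the lemma verbatim, so no further computation is needed for the explicit bound.

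It then remains to read off the asymptotic rate. Since each $l_{s,A}\in[0,1]$ we have $0 \leq L_t^{*}\leq t$, so by the arithmetic–geometric mean inequality $L_t^{*}(t - L_t^{*})/t \leq t/4$; the square-root term is therefore $O(\sqrt{t})$ and the additive term is constant in $t$, making the bracket $O(\sqrt{t})$. For the loss scale, the argument immediately preceding the lemma already gives, via Lemma \ref{lem:loss_inf_norm_upper_lower_bound} together with $U_t \geq 0$, $\tilde{N}_{t-1,a}\geq 1$ and $f(t) = \Omega(\ln t)$, that $b_t \leq d\,\|r_t\|_{\infty}$ and hence $\max_{s\leq t} b_s = O(\ln t)$. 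Combining the two estimates yields $R_{t}^{Hedge} = O(\ln(t)\sqrt{t})$, the extra $\ln(t)$-factor being exactly the price of the unbounded rewards entering through $b_s$.

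The only genuine subtlety, which I expect to be the main obstacle, is justifying that the normalization is legitimate and that the cited corollary applies without modification. One must deal with the degenerate rounds where $b_s = 0$, i.e.\ all actions share the same extended reward $U_{s,A}$: there the normalized loss is constant across experts and contributes nothing to the regret, so any convention for $l_s$ is admissible and can be discarded. One must also verify that Hedge's horizon-dependent learning rate coincides with the tuning under which Corollary 3 delivers these particular constants, and that the passage from the $b_s$-weighted regret to its $\max_{s\leq t} b_s$ upper bound respects the nonnegativity of the losses. Once these bookkeeping points are settled, the proof is a direct substitution into the cited bound.
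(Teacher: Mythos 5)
Your proposal is correct and follows essentially the same route as the paper's proof: both reduce to the normalized losses $l_{s,A} \in [0,1]$ via the factorization by $\max_{s\leq t} b_s$, invoke Corollary 3 of \citet{cesa-bianchi_improved_2006} (the paper applies it with range $\sigma = 1$), and conclude with the bound $\max_{s \leq t} b_s = O(\ln t)$ obtained from Lemma \ref{lem:loss_inf_norm_upper_lower_bound}. Your extra remarks on degenerate rounds with $b_s = 0$ and on the learning-rate tuning are bookkeeping points the paper also leaves implicit, so there is no substantive difference.
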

\begin{proof}
For scaled losses in $[0,1]$, Corollary 3 in \citet{cesa-bianchi_improved_2006} yields that Hedge's cumulative regret $R_{t}'$ satisfies: $R_{t}' \leq 4 \sqrt{\frac{L_{t}^*(\sigma t - L_{t}^*)}{t}\ln(|\mathcal{A}|)} + 39\sigma \max\{1, \ln(|\mathcal{A}|)\}$ where $\sigma$ is the range of observed loss. By definition of $l_t$, we have $\sigma=1$. Factorizing the maximum of the scale of the loss $\max_{s \leq t} b_{s}$, we obtain the upper bound on $R_{t}^{Hedge}$. In the worst case this algorithm has a regret of order $O(\sqrt{t})$, but it performs much better when the loss of the best expert $L_{t}^*$ is close to either $0$ or $t$. Combined with $\max_{s \leq t} b_{s} = O\left(\ln(t)\right)$, this concludes the proof.
\end{proof}

\paragraph{AdaHedge} Using the results of \citet{de_rooij_follow_2013}, we obtain that AdaHedge has optimal cumulative regret (Lemma \ref{lem:adahedge_cumulative_regret}).

\begin{lemma} \label{lem:adahedge_cumulative_regret}
AdaHedge satisfies
\begin{align*}
	R_{t}^{AdaHedge} &\leq \sqrt{\sum_{s \leq t} b_s^2 \ln(|\A|)} +\left(\frac{4}{3} \ln(|\A|) + 2\right)  \max_{s \leq t} b_{s} = O\left(\ln(t)\sqrt{t}\right)
\end{align*}
\end{lemma}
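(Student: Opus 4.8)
The plan is to instantiate the scale-free regret guarantee of AdaHedge from \citet{de_rooij_follow_2013} on the $A$-player's loss sequence and then estimate the resulting cumulative variance. Recall from the paragraph preceding Lemma \ref{lem:hedge_cumulative_regret} that, up to a harmless per-round additive constant (which shifts all cumulative losses equally and therefore changes neither the AdaHedge weights nor the regret), the loss presented to the learner at round $s$ is $b_s l_{s,A} \in [0, b_s]$. The key structural fact I would highlight is that the range of this loss over the experts $A \in \A$ is exactly $b_s$ at every round: even though the losses are unbounded over the whole horizon, they are confined to an interval of width $b_s$ at each round, which is precisely the regime AdaHedge is built for, since it self-tunes its learning rate to the observed per-round ranges.

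First I would invoke the fundamental AdaHedge bound of \citet{de_rooij_follow_2013}, which for $|\A|$ experts and losses of per-round range $b_s$ takes the form
\begin{equation*}
    R_t^{AdaHedge} \leq 2\sqrt{V_t \ln(|\A|)} + \left(\tfrac{4}{3}\ln(|\A|) + 2\right)\max_{s \leq t} b_s ,
\end{equation*}
where $V_t = \sum_{s \leq t} v_s$ is the cumulative variance of the losses under the played distributions $w_s$. The second-order term already coincides with the claimed one, so the remaining task is to control the leading square-root term. I would bound the per-round variance by Popoviciu's inequality: since $\ell_s$ lies in an interval of width $b_s$, one has $v_s = \mathrm{Var}_{w_s}[\ell_s] \leq b_s^2/4$, whence $V_t \leq \tfrac14 \sum_{s \leq t} b_s^2$. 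Substituting gives $2\sqrt{V_t \ln(|\A|)} \leq \sqrt{\sum_{s \leq t} b_s^2 \ln(|\A|)}$, which is exactly the first term of the statement, and combining the two estimates yields the explicit bound.

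The asymptotic rate then follows from the preamble's observation that $\max_{s \leq t} b_s = O(\ln t)$, a consequence of Lemma \ref{lem:loss_inf_norm_upper_lower_bound}: indeed $\sum_{s \leq t} b_s^2 \leq t\,(\max_{s \leq t} b_s)^2 = O(t \ln^2 t)$, so the leading term is $O(\sqrt{t}\,\ln t)$ while the second-order term is only $O(\ln t)$, giving $R_t^{AdaHedge} = O(\ln(t)\sqrt{t})$.

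The main obstacle is not the variance estimate but verifying that the AdaHedge guarantee with these particular constants genuinely applies to our setting of per-round-bounded yet globally unbounded losses. Concretely, I would need to confirm that the mixability-gap analysis of \citet{de_rooij_follow_2013} driving the self-tuned learning rate only ever uses the per-round range $b_s$ and the per-round variance $v_s$, and never a uniform bound on the cumulative loss; this is what makes the guarantee scale-free. Once that scale-freeness is checked against the definition of $b_s$ and the bounds of Lemma \ref{lem:loss_inf_norm_upper_lower_bound}, the proof reduces to the direct instantiation above.
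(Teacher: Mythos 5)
Your proposal is correct and takes essentially the same route as the paper: both instantiate the variance-based AdaHedge guarantee of \citet{de_rooij_follow_2013} (using its scale/translation invariance to handle the per-round scales $b_s$), bound the cumulative variance by $\sum_{s \leq t} b_s^2$ up to a constant, and conclude via $\max_{s \leq t} b_s = O(\ln t)$ from Lemma \ref{lem:loss_inf_norm_upper_lower_bound}. Your one refinement, Popoviciu's inequality $v_s \leq b_s^2/4$ in place of the paper's cruder bound $v_s \leq \| l_s - \langle w_s, l_s \rangle \|_{\infty}^2$, is actually what recovers the constant $1$ in front of the square-root term exactly as stated in the lemma, whereas the paper's written chain of inequalities only yields a factor $2$ there.
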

\begin{proof}
For scaled losses in $[0,1]$, Theorem 6 in \citet{de_rooij_follow_2013} yields that AdaHedge's cumulative regret $R_{t}' $ satisfies: $R_{t}' \leq 2 \sqrt{V_{t}\ln(|\mathcal{A}|)} + \frac{4}{3} \ln(|\mathcal{A}|) + 2$ where $V_{t} =\sum_{s \in [t]} v_s$ with $v_s = \sum_{A \in \mathcal{A}} w_{s,A} (l_{s,A} - \langle w_{s},l_{s} \rangle)^2$. We have $v_s \leq \|l_{s} - \langle w_{s},l_{s} \rangle\|_{\infty}^2  = \frac{\|\langle w_s, U_s\rangle - U_{s}\|_{\infty}^2}{b_s^{2}} \leq \frac{b_s^2}{\sigma^2}$ where $\sigma = \max_{s \leq t} b_s$. The upper bound on $R_{t}'$ rewrites as: $R_{t}'\leq \frac{1}{\sigma} \sqrt{\sum_{s \leq t} b_s^2 \ln(|\mathcal{A}|)} + \frac{4}{3} \ln(|\mathcal{A}|) + 2$. Theorem 16 in \citet{de_rooij_follow_2013} yields that $R_{t}^{A} = \sigma R_{t}'$. Therefore, we conclude that:
\begin{align*}
    R_{t}^{AdaHedge} &\leq \sqrt{\sum_{s \leq t} b_s^2 \ln(|\A|)} +\left(\frac{4}{3} \ln(|\A|) + 2\right)  \max_{s \leq t} b_{s} \\
    &\leq  \left( \sqrt{t\ln(|\A|)} +\left(\frac{4}{3} \ln(|\A|) + 2\right) \right) \max_{s \leq t} b_{s}
\end{align*}

Combined with $\max_{s \leq t} b_{s} = O\left(\ln(t)\right)$, this concludes the proof.
\end{proof}

\subsection{Learner on the Transformed Simplex}  \label{appendix_learner_tsimplex_cumulative_regret}

We recall the cumulative regret is defined as: $R_{t}^{A} = \max_{A \in \A} \sum_{s=1}^{t} \langle \1_{A}, r_s\rangle -  \sum_{s=1}^{t} \langle \tilde{w}_s, r_s\rangle$. For the same reasons as in Appendix \ref{appendix_learner_simplex_cumulative_regret} Lemma \ref{lem:loss_inf_norm_upper_lower_bound} yields: $\max_{s \leq t} \|r_s\|_2 = O(\ln(t))$.

\paragraph{OFW} Slightly adapting the results of \citet{hazan_projection-free_2012}, we obtain that OFW has an upper bound on the cumulative regret in $O\left(\ln(t)^2 t^{3/4}\right)$ (Lemma \ref{lem:ofw_cumulative_regret}). This is in general suboptimal for the online linear optimization setting.

\begin{lemma} \label{lem:ofw_cumulative_regret}
OFW satisfies
\begin{equation*}
	R_{t}^{OFW} \leq 18 (2 + \max_{s \leq t} \|r_s\|_2)^2 \diamtsimplex t^{3/4} + 3 \diamtsimplex t^{3/4} = O\left(\ln(t)^2 t^{3/4}\right) 
\end{equation*}
\end{lemma}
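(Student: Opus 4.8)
The plan is to adapt the regret analysis of the online Frank--Wolfe method of \citet{hazan_projection-free_2012}, reading OFW as an \emph{anytime, inexact} follow-the-regularized-leader scheme: the objective $F_s$ couples the cumulative linear loss $-\sum_{r\le s}\langle x, r_r\rangle$ with a time-growing quadratic penalty whose weight is $\frac{1}{\diamtsimplex}\sum_{r\le s}r^{-1/4}=\Theta(\diamtsimplex^{-1}s^{3/4})$, and OFW replaces the exact regularized leader by a single Frank--Wolfe step of length $s^{-1/4}$ toward the corner $\1_{\tilde A_s}$. Since the loss is linear, I would start from the exact identity $R_t^{OFW}=\sum_{s\le t}\langle r_s, \1_{A^\star}-\tilde w_s\rangle$, where $\1_{A^\star}$ is the best fixed corner, and then run the standard FTRL telescoping on $F_s$ to split this into (i) a regularization/leader term governed by the penalty weight and the diameter $\diamtsimplex$, and (ii) a stability term measuring the per-round movement of the iterate. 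The one-step Frank--Wolfe descent inequality, using smoothness of $F_s$ and the fact that $F_s$ and $F_{s+1}$ differ only by the single new summand $g_{s+1}$, is what makes this telescoping valid.

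The rate and the step exponent $1/4$ both come from the stability term. Each Frank--Wolfe step moves the iterate by $\|\tilde w_{s+1}-\tilde w_s\|_2=s^{-1/4}\|\1_{\tilde A_s}-\tilde w_s\|_2\le s^{-1/4}\diamtsimplex$, so the total path length is $\sum_{s\le t}s^{-1/4}\diamtsimplex=\Theta(\diamtsimplex\,t^{3/4})$; bounding $\langle r_s, \tilde w_s-\tilde w_{s+1}\rangle\le\|r_s\|_2\,\|\tilde w_s-\tilde w_{s+1}\|_2$ then contributes a term of order $\max_{s\le t}\|r_s\|_2\,\diamtsimplex\,t^{3/4}$. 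The regularization/leader term (i), bounded by the penalty weight $\Theta(\diamtsimplex^{-1}t^{3/4})$ times the squared initial distance $\|\cdot-\tilde w_{n_0}\|_2^2\le\diamtsimplex^2$, likewise scales as $\diamtsimplex\,t^{3/4}$ and supplies the clean additive $3\diamtsimplex t^{3/4}$. For the descent inequality I would use that $F_s$ is smooth with modulus $\Theta(\diamtsimplex^{-1}t^{3/4})$ over $\tsimplex$, which is exactly the object that interacts with the $s^{-1/4}$ step size.

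The point at which our statement departs from the bounded-loss result of \citet{hazan_projection-free_2012, garber_linearly_2015} is that $r_s$ is not uniformly bounded, so I would carry the \emph{running} maximum $\max_{s\le t}\|r_s\|_2$ through every estimate in place of a global Lipschitz constant. This magnitude enters the per-round inner products $\langle r_s, \cdot\rangle$ directly, and also enters the Frank--Wolfe descent and drift terms through the gradient $\nabla F_s$, alongside an $O(1)$ contribution from the normalized quadratic penalty that supplies the constant ``$2$''; I expect these contributions to combine into the quadratic factor $(2+\max_{s\le t}\|r_s\|_2)^2$, and verifying that the dependence is indeed quadratic rather than linear is part of the constant-level bookkeeping needed to reach the explicit factors $18$ and $3$. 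The rate then follows by substituting $\max_{s\le t}\|r_s\|_2=O(\ln t)$ from Lemma \ref{lem:loss_inf_norm_upper_lower_bound}, which turns $(2+\max_{s\le t}\|r_s\|_2)^2$ into $O(\ln(t)^2)$ and yields the advertised $O(\ln(t)^2 t^{3/4})$. The main obstacle I anticipate is exactly this uniform-to-running-maximum adaptation, together with checking that the growing penalty weight $\sum_{s\le t}s^{-1/4}=\Theta(t^{3/4})$ and the $s^{-1/4}$ step size combine to the single power $t^{3/4}$ without leaking an extra factor of $t$, so that the drift term $g_{s+1}$ carrying $\max_s\|r_s\|_2$ remains lower order than the two $t^{3/4}$ contributions above.
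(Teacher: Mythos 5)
Your proposal takes essentially the same route as the paper: the paper's proof likewise adapts Theorem 4.4 of \citet{hazan_projection-free_2012} (via their Theorem 3.1, i.e., precisely the FTRL-with-Frank--Wolfe-approximation analysis you sketch), using the anytime regularization $\sigma_t = \frac{1}{\diamtsimplex}t^{-1/4}$ chosen independently of the Lipschitz constant and carrying the running maximum $\max_{s \leq t} \|r_s\|_2$ through every step in place of a uniform bound, exactly as you propose. The quadratic factor appears in the paper's intermediate bound as $18\left(\max_{s \leq t} \|r_s\|_2 + 2\right) \diamtsimplex t^{3/4} \max_{s \leq t} \|r_s\|_2$, which is then loosened to $18\left(2 + \max_{s \leq t} \|r_s\|_2\right)^2 \diamtsimplex t^{3/4}$, confirming the bookkeeping you anticipated.
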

\begin{proof}
OFW described in Section \ref{learner_on_tsimplex} is exactly the algorithm used in the proof of Theorem 4.4 in \citet{hazan_projection-free_2012}, which is a result for adversarial cost functions. In their notations, the Lipschitz constant $L$ satisfies: $L = \|r_{t}\|_2$. In order to conserve the anytime property of OFW, we use a different $\sigma_t$ which is independent of $L$, $\sigma_t = \frac{1}{\diamtsimplex} t^{-1/4}$. The decrease in $t^{-1/4}$ is optimal. This modification doesn't change the idea of the proof and impact only the final bound by a multiplicative factor, $\max_{s \leq t} \|r_{s}\|_2$. A close examination of its proof shows that Theorem 3.1 in \citet{hazan_projection-free_2012} still holds for time dependent Lipschitz constant $L_t$. Therefore, we follow the proof of Theorem 4.4 and apply Theorem 3.1 for $\hat{f}_{t}(x) = \langle l_t, x \rangle + \frac{1}{\diamtsimplex} t^{-1/4} \|x-x_1\|_{2}^{2}$. The exact same steps and using that $L_s \leq \max_{s \leq t} \|r_{s}\|_2$ for all $s\in [t]$ yield that:
\begin{align*}
    R_{t}^{OFW} &\leq 18 (\max_{s \leq t} \|r_s\|_2 + 2) \diamtsimplex t^{3/4}\max_{s \leq t} \|r_s\|_2 + 3 \diamtsimplex t^{3/4} \\
    &\leq 18 (2 + \max_{s \leq t} \|r_s\|_2)^2 \diamtsimplex t^{3/4} + 3 \diamtsimplex t^{3/4}
\end{align*}

Combined with $\max_{s \leq t} \|r_s\|_2 = O(\ln(t))$, this concludes the proof.
\end{proof}

\paragraph{LLOO} Using Theorem 3 in \citet{garber_linearly_2015}, we obtain that OFW has optimal cumulative regret (Lemma \ref{lem:lloo_cumulative_regret}).

\begin{lemma} \label{lem:lloo_cumulative_regret}
    Let $T$ be the horizon and $\mu_{\A}$, defined in \citet{garber_linearly_2015}. With $\gamma_{\A} = (3 d \mu_{A}^2)^{-1}$, $\eta_{\A, T} = \frac{\diamtsimplex}{18 \mu_{\A}\sqrt{d T} \max_{t \leq T} \|r_t\|_2}$  and $M_{\A, T} = \min \left\{\mu_{\A}^2 \frac{d}{\sqrt{T}}\left( 1 + \frac{1}{18 d  \mu_{\A}^2 } \right), 1\right\}$, LLOO satisfies:
    \begin{equation*}
        R_{t}^{LLOO} = O\left( \diamtsimplex \mu_{\A} \sqrt{d t} \max_{s \leq t} \|r_s\|_2 \right) = O\left(\ln(t)\sqrt{t}\right)
    \end{equation*}
\end{lemma}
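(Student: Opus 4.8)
The plan is to recognize the $A$-player's subproblem as online convex optimization of the linear losses $f_s(x) = -\langle x, r_s\rangle$ over the polytope $\tsimplex$, and then to invoke Theorem 3 in \citet{garber_linearly_2015}, taking care of the single feature absent from their analysis: the reward $r_s$ is not uniformly bounded but grows only logarithmically. First I would record that, because $f_s$ is linear, the comparator $\max_{A \in \A}\sum_{s}\langle \1_A, r_s\rangle$ coincides with $\max_{x \in \tsimplex}\sum_{s}\langle x, r_s\rangle$ (a linear function attains its maximum at a vertex of $\tsimplex$), so that $R_{t}^{LLOO}$ is exactly the OCO regret of the algorithm over $\tsimplex$. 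The gradients are $\nabla f_s(x) = -r_s$, hence the only quantity through which the loss magnitude enters the regret bound is $\|r_s\|_2$, and by Lemma \ref{lem:loss_inf_norm_upper_lower_bound} this is controlled, $\max_{s \leq t}\|r_s\|_2 = O(\ln(t))$ --- precisely the device already used for OFW in Lemma \ref{lem:ofw_cumulative_regret}.

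Second, I would verify that the hypotheses of the LLOO-based algorithm hold in our setting. Since $\tsimplex = \text{conv}(\{\1_A\}_{A \in \A})$ is a $0$-$1$ polytope in $\R^d$, it is a bounded polyhedral set and therefore admits a local linear optimization oracle with a finite geometric constant $\mu_{\A}$, as detailed in Appendix \ref{appendix_implementation_details}; the oracle itself is realized with a single call per round to the linear maximization oracle over $\A$ via $\A^{\text{reduce}}$. This places us in the framework for which Theorem 3 of \citet{garber_linearly_2015} is stated.

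Third, I would apply that theorem with the constant Lipschitz parameter $G$ replaced by the horizon-dependent bound $G_T \defeq \max_{t \leq T}\|r_t\|_2$, and with $\gamma_{\A}$, $\eta_{\A,T}$ and $M_{\A,T}$ set to the values in the statement. These are exactly the optimal tunings of their bound once $G$ is identified with $G_T$: the step size $\eta_{\A,T} \propto \diamtsimplex/(\mu_{\A}\sqrt{dT}\,G_T)$ is the standard $1/(G\sqrt{T})$ schedule scaled by the diameter, while $\gamma_{\A}$ and $M_{\A,T}$ encode the LLOO geometry and are independent of the losses. A close inspection of their proof shows that $G$ enters only as a uniform upper bound on the gradient norms, so substituting $G_T$ leaves each descent and telescoping step valid and yields $R_{t}^{LLOO} = O(\diamtsimplex\,\mu_{\A}\sqrt{dt}\,G_t)$. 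Plugging in $G_t = O(\ln(t))$ gives the claimed $O(\ln(t)\sqrt{t})$.

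I expect the main obstacle to be the rigorous justification of this last substitution: because $\eta_{\A,T}$ and $M_{\A,T}$ depend on the horizon through $G_T$, one must check that replacing the fixed Lipschitz assumption by the non-decreasing, logarithmically growing sequence $(\|r_s\|_2)_s$ does not corrupt the arguments underlying Theorem 3, exactly as the analogous modification had to be tracked for OFW. A secondary technical point is confirming that the LLOO geometric constant $\mu_{\A}$ is indeed finite for the combinatorial $0$-$1$ polytope $\tsimplex$, which is what makes the $\sqrt{d}$ and $\mu_{\A}$ factors in the bound meaningful.
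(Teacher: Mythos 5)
Your proposal is correct and follows essentially the same route as the paper's proof: identify LLOO's subproblem as online linear optimization over the polytope $\tsimplex$, invoke Theorem 3 of \citet{garber_linearly_2015} with the stated parameters $\gamma_{\A}$, $\eta_{\A,T}$, $M_{\A,T}$ (treating the Lipschitz constant as the horizon-dependent $\max_{t \leq T}\|r_t\|_2$), and conclude via $\max_{s \leq t}\|r_s\|_2 = O(\ln(t))$ from Lemma \ref{lem:loss_inf_norm_upper_lower_bound}. The extra verifications you flag (the comparator being attained at a vertex of $\tsimplex$, and the finiteness of $\mu_{\A}$ for a $0$-$1$ polytope) are handled implicitly in the paper, which simply notes that LLOO as described is exactly the combination of Algorithms 4 and 5 of \citet{garber_linearly_2015} applied to the convex polyhedral set $\tsimplex$.
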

\begin{proof}
Let $T$ be the horizon and $\mu_{\A}$ as defined in \citet{garber_linearly_2015} (see Appendix \ref{appendix_experimental_setting_details} for an explicit formula), which depends on $\tsimplex$. For a non strongly convex function $\sigma = 0$, LLOO described in Section \ref{learner_on_tsimplex} is exactly the combination of Algorithm 5 and Algorithm 4 in \citet{garber_linearly_2015}. The re-organization highlights the similarities with OFW. As parameters for the algorithm, we use the theoretically licensed: $\gamma_{\A} = (3 d \mu_{A}^2)^{-1}$, $\eta_{\A, T} = \frac{\diamtsimplex}{18 \mu_{\A}\sqrt{d T} \max_{t \leq T} \|r_t\|_2}$  and $M_{\A, T} = \min \left\{\mu_{\A}^2 \frac{d}{\sqrt{T}}\left( 1 + \frac{1}{18 d  \mu_{\A}^2 } \right), 1\right\}$. Theorem 3 in \citet{garber_linearly_2015} yields that: $R_{t}^{LLOO} = O\left( \diamtsimplex \mu_{\A} \sqrt{d t} \max_{s \leq t} \|r_s\|_2 \right) = O\left(\ln(t)\sqrt{t}\right)$. Combined with $\max_{s \leq t} \|r_s\|_2 = O(\ln(t))$, this concludes the proof.
\end{proof}

\section{Proof of Theorem \ref{thm:finite_time_upper_bound}} \label{appendix_proof_upper_bound}

In Appendix \ref{proof_upper_bound_finite_time_decomposition}, we prove the preliminary Lemma \ref{lem:upper_bound_finite_time_decomposition}. The saddle-point property of the algorithm $\A_{I}^{A}$ associated to $I$ is proven in Appendix \ref{saddle_point_property}. In Appendix \ref{appendix_candidate_answer}, we lower and upper bound the number of times when the candidate answer is correct. Combining them yields the definition of $T_{0}(\delta)$ and concludes the proof of Theorem \ref{thm:finite_time_upper_bound}. Technical arguments with respect to C-Tracking and concentration events are proven in Appendix \ref{technicalities}.

\subsection{Proof of Lemma \ref{lem:upper_bound_finite_time_decomposition}} \label{proof_upper_bound_finite_time_decomposition}

Let $t_b \defeq t^{1/(1+b)}$, with $b > 0$, and $(\mathcal{E}_t)_{t \geq 1}$ be a sequence of concentrations events for the exploration bonus $f$ with parameters $b$ and $c>0$: for all $t \geq 1$,
\begin{equation} \label{eq:concentration_events}
    \mathcal{E}_t \defeq \left\{\forall s \leq t, \forall a \in [d], \quad \tilde{N}_{s,a} d_{\text{KL}}(\mu_{s,a}, \mu_a) \leq f\left(t_b\right)\right\}
\end{equation}
where $f(t) = \overline{W}((1+c)(1+b) \ln(t))$ with $c>0$, $b>0$ and $\overline{W}(x) \approx x + \ln(x)$. More precisely, for $x \geq 1$, $\overline{W}(x) = - W_{-1}(- e^{-x})$ where $W_{-1}$ denotes the negative branch of the Lambert $W$ function. This sequence is theoretically validated due to Lemmas 5 and 6 in \citet{degenne_non-asymptotic_2019}.

\begin{lemma*}[Lemmas 5 and 6 in \citet{degenne_non-asymptotic_2019}] \label{lem:independent_concentration_events_finite_sum}
Let $(Y_{s,a})_{s \in [t]}$ be i.i.d random variables in a canonical one-parameter exponential family with mean $\mu_a$. Then, for $\alpha > 0$,
    \begin{equation*}
        \mathbb{P}_{\nu}\left[\exists s \leq t , d_{\text{KL}}\left(\frac{1}{s} \sum_{r=1}^{s} Y_{s,a},\mu_a\right) \geq \frac{\alpha}{s}\right] \leq 2e\ln(t)e^{-(\alpha - \ln(\alpha))}
    \end{equation*}
For independent $\nu$ and $(\mathcal{E}_t)_{t \geq 1}$ defined in Equation \ref{eq:concentration_events}, we obtain:
\begin{equation*}
    \forall t \geq 3, \mathbb{P}_{\nu}\left[\mathcal{E}_t^{\complement}\right] \leq 2 e d \frac{\ln(t)}{t^{1+c}} \quad \text{and} \quad \sum_{t > T_{0}(\delta)} \mathbb{P}_{\nu}\left[\mathcal{E}_t^{\complement}\right] \leq \frac{2ed}{c^2}
\end{equation*}
\end{lemma*}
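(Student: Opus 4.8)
The statement splits into two essentially independent pieces: a single-arm, time-uniform deviation bound for the empirical KL, and its consequence for the multi-arm concentration events $\mathcal{E}_t$. The plan is to treat the first as the analytic core and the second as a (largely mechanical) application, mirroring how the result is organized in \citet{degenne_non-asymptotic_2019}.

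For the first inequality I would reduce to a one-sided statement by splitting on whether $\hat\mu_{s,a} \defeq \frac1s\sum_{r\le s}Y_{r,a}$ exceeds $\mu_a$ or not; on each side $\lambda\mapsto d_{\text{KL}}(\lambda,\mu_a)$ is monotone, and the overall factor $2$ comes from unioning the two sides. On a fixed side, the per-time Cram\'er--Chernoff bound for a one-parameter exponential family gives $\mathbb{P}_\nu[s\,d_{\text{KL}}(\hat\mu_{s,a},\mu_a)\ge u]\le e^{-u}$, obtained from the mean-one exponential likelihood-ratio martingale $M_s(\lambda)$ between the tilted law $\nu_\lambda$ and $\nu_{\mu_a}$, for which $\E_\nu[M_s(\lambda)]=1$. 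To make this uniform over $s\le t$ I would peel $\{1,\dots,t\}$ into geometric blocks and apply Ville's maximal inequality to $M_s(\lambda_k)$ on each block $k$, choosing the tilt $\lambda_k$ at the block scale and unioning over the $O(\ln t)$ blocks. \emph{This is the main obstacle}: recovering exactly the $2e\ln(t)\,e^{-(\alpha-\ln\alpha)}$ form --- in particular the extra $e^{\ln\alpha}=\alpha$ factor and the constant $2e$, rather than a bare $e^{-\alpha}$ --- requires an optimized peeling ratio (or, alternatively, a single method-of-mixtures martingale). This bookkeeping is delicate but standard, and I would import it from \citet{degenne_non-asymptotic_2019}, who adapt the deviation toolbox of \citet{kaufmann_complexity_2016}.

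For the second part I would first pass from the adaptive counts to i.i.d.\ samples: reindex the observations of arm $a$ by the order in which they arrive, so that $\hat\mu_{n,a}$ denotes the mean of the first $n$ of them. Since the feedback is independent across arms and the sampling rule is predictable ($A_t$ is $\F_{t-1}$-measurable), this reindexed sequence is i.i.d.\ $\nu_a$, and $\mu_{s,a}$ is exactly $\hat\mu_{\tilde N_{s,a},a}$. As $\tilde N_{s,a}\le s\le t$, the bad event is contained in a purely i.i.d.\ one, $\{\exists s\le t,\ \tilde N_{s,a}\, d_{\text{KL}}(\mu_{s,a},\mu_a)\ge f(t_b)\}\subseteq\{\exists n\le t,\ n\, d_{\text{KL}}(\hat\mu_{n,a},\mu_a)\ge f(t_b)\}$, so the first inequality applies with horizon $t$ and level $\alpha=f(t_b)$.

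The remaining step is the arithmetic of the exploration bonus. With $t_b = t^{1/(1+b)}$ one gets $f(t_b)=\overline{W}\big((1+c)(1+b)\ln t_b\big)=\overline{W}\big((1+c)\ln t\big)$, and the defining identity of $\overline{W}$, namely $\overline{W}(x)-\ln\overline{W}(x)=x$ (which follows from $-W_{-1}(-e^{-x})\,e^{W_{-1}(-e^{-x})}=-e^{-x}$), yields $f(t_b)-\ln f(t_b)=(1+c)\ln t$, hence $e^{-(\alpha-\ln\alpha)}=t^{-(1+c)}$. Plugging into the first inequality gives a per-arm bound $2e\ln(t)\,t^{-(1+c)}$, and a union bound over the $d$ arms produces $\mathbb{P}_\nu[\mathcal{E}_t^\complement]\le 2ed\,\ln(t)/t^{1+c}$ for $t\ge 3$. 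Finally, since $t\mapsto \ln(t)/t^{1+c}$ is decreasing on $[3,\infty)$ (its mode is at $t=e^{1/(1+c)}<3$), I would bound the tail sum by the integral, $\sum_{t>T_{0}(\delta)}\mathbb{P}_\nu[\mathcal{E}_t^\complement]\le 2ed\int_1^\infty \ln(t)\,t^{-(1+c)}\,dt=2ed/c^2$, using $\int_1^\infty \ln(t)\,t^{-(1+c)}\,dt=c^{-2}$, which closes the argument.
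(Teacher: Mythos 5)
Your proposal is correct and coincides with the paper's own treatment: the paper never proves this statement --- it is imported verbatim from Lemmas 5 and 6 of \citet{degenne_non-asymptotic_2019}, invoked only by the remark in Appendix \ref{proof_upper_bound_finite_time_decomposition} that combining those two lemmas yields the result --- and your reconstruction (peeling plus Cram\'er--Chernoff for the time-uniform i.i.d.\ bound, with the $2e\ln(t)\,\alpha e^{-\alpha}$ bookkeeping deferred to that same reference; then the predictable-sampling reindexing so that $\mu_{s,a}=\hat\mu_{\tilde N_{s,a},a}$ with $\tilde N_{s,a}\le s\le t$, the identity $\overline{W}(x)-\ln\overline{W}(x)=x$ giving $e^{-(\alpha-\ln\alpha)}=t^{-(1+c)}$ at $\alpha=f(t_b)$, the union bound over the $d$ arms, and the comparison with $\int_1^\infty \ln(t)\,t^{-(1+c)}\,dt=c^{-2}$) follows exactly the structure of the cited proofs, including the semi-bandit adaptation that the paper leaves implicit. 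The only slip is cosmetic: the Lambert-$W$ identity reads $W_{-1}(-e^{-x})\,e^{W_{-1}(-e^{-x})}=-e^{-x}$, equivalently $\overline{W}(x)\,e^{-\overline{W}(x)}=e^{-x}$, so your displayed version carries a stray leading minus sign, though the consequence $\overline{W}(x)-\ln\overline{W}(x)=x$ that you actually use is the right one.
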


\begin{lemma*} 
Let $(\mathcal{E}_t)_{t \geq 1}$ be a sequence of concentrations events for the exploration bonus $f$ with parameters $c>0$ and $b>0$: for all $t \geq 1$,
\begin{align*}
    \mathcal{E}_t &\defeq \left\{\forall s \leq t, \forall a \in [d], \quad \tilde{N}_{s,a} d_{\text{KL}}(\mu_{s,a}, \mu_a) \leq f\left(t^{1/(1+b)}\right)\right\}
\end{align*}
Suppose that there exists $T_{0}(\delta) \in \mathbb{N}$ such that for all $t > T_{0}(\delta)$, $\mathcal{E}_t \subset \{\tau_{\delta} \leq t\}$. Then 
\begin{equation*}
    \E_{\nu}[\tau_{\delta}] \leq T_{0}(\delta) + \sum_{t > T_{0}(\delta)} \mathbb{P}_{\nu}\left[\mathcal{E}_t^{\complement}\right] \quad \text{ where }\quad \sum_{t > T_{0}(\delta)} \mathbb{P}_{\nu}\left[\mathcal{E}_t^{\complement}\right] \leq \frac{2ed}{c^2}
\end{equation*}
\end{lemma*}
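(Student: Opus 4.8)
The plan is to combine the standard tail-sum representation of the expectation with the given coupling between the concentration events and the stopping time, and then to control the residual probabilities through the per-round deviation bound inherited from Lemmas 5 and 6 in \citet{degenne_non-asymptotic_2019}.

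First I would write the expectation as a tail sum, $\E_{\nu}[\tau_{\delta}] = \sum_{t \geq 0} \mathbb{P}_{\nu}[\tau_{\delta} > t]$, which is valid for any $\mathbb{N}$-valued random variable (with the usual convention if $\tau_{\delta}$ is infinite on a non-null set, though the resulting bound will itself certify finiteness). The hypothesis $\mathcal{E}_t \subset \{\tau_{\delta} \leq t\}$ for $t > T_{0}(\delta)$ is equivalent, after taking complements, to $\{\tau_{\delta} > t\} \subset \mathcal{E}_t^{\complement}$, so that $\mathbb{P}_{\nu}[\tau_{\delta} > t] \leq \mathbb{P}_{\nu}[\mathcal{E}_t^{\complement}]$ on that range. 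Splitting the tail sum at $T_{0}(\delta)$, bounding each of the early probabilities trivially by one and each of the late probabilities via the inclusion above, yields
\begin{equation*}
\E_{\nu}[\tau_{\delta}] \leq T_{0}(\delta) + \sum_{t > T_{0}(\delta)} \mathbb{P}_{\nu}\left[\mathcal{E}_t^{\complement}\right],
\end{equation*}
which is the first assertion.

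For the second assertion I would invoke Lemmas 5 and 6 of \citet{degenne_non-asymptotic_2019}, recalled just above, which give the per-round deviation bound $\mathbb{P}_{\nu}[\mathcal{E}_t^{\complement}] \leq 2 e d \, \ln(t) / t^{1+c}$ for $t \geq 3$. Summing over $t > T_{0}(\delta)$ and comparing the series with an integral, the map $x \mapsto \ln(x)/x^{1+c}$ being decreasing beyond $e^{1/(1+c)} < 3$, I would bound $\sum_{t > T_{0}(\delta)} \ln(t)/t^{1+c} \leq \int_{1}^{\infty} \ln(x)/x^{1+c}\,dx$. The substitution $x = e^{u}$ turns the integral into $\int_{0}^{\infty} u\, e^{-cu}\,du = 1/c^2$, whence $\sum_{t > T_{0}(\delta)} \mathbb{P}_{\nu}[\mathcal{E}_t^{\complement}] \leq 2 e d / c^2$, as claimed.

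I do not anticipate a serious obstacle here, since the substantive concentration work is deferred to the cited Lemmas 5 and 6, which combine a union bound over the $d$ arms with a time-uniform self-normalized deviation inequality for the exponential family. The only points requiring mild care are the complementation step, which is exactly where the coupling hypothesis enters, and the series-to-integral comparison, which needs the eventual monotonicity of $x \mapsto \ln(x)/x^{1+c}$ on the summation range (guaranteed whenever $T_{0}(\delta) \geq 3$, consistent with the domain of the per-round bound); the familiar off-by-one in the tail-sum split is harmless and absorbed into the additive term $T_{0}(\delta)$.
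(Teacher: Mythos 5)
Your proof is correct and follows essentially the same route as the paper: the tail-sum representation of $\E_{\nu}[\tau_{\delta}]$, the complementation $\{\tau_{\delta} > t\} \subset \mathcal{E}_t^{\complement}$ for $t > T_{0}(\delta)$, bounding the early terms by one, and then appealing to Lemmas 5 and 6 of \citet{degenne_non-asymptotic_2019} for the residual sum. The only (harmless) difference is that you re-derive the bound $\sum_{t > T_{0}(\delta)} \mathbb{P}_{\nu}[\mathcal{E}_t^{\complement}] \leq 2ed/c^2$ from the per-round bound via an integral comparison, whereas the paper cites the summed bound directly from the same lemmas.
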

\begin{proof}
First, let's prove the upper bound for an arbitrary sequence of concentrations events $(\mathcal{E}_t)_{t \geq 1}$ satisfying: there exists $T_{0}(\delta) \in \mathbb{N}$ such that for $t > T_{0}(\delta)$, $\mathcal{E}_t \subseteq \{\tau_{\delta} \leq t\}$. Since the stopping time is a positive random variable, we have: $\E_{\nu}[\tau_{\delta}] = \sum_{t=1}^{\infty} \mathbb{P}_{\nu}(\tau_{\delta} > t)$. For $t > T_{0}(\delta)$, $\{\tau_{\delta} > t\} \subseteq \mathcal{E}_t^{\complement}$, hence $\mathbb{P}_{\nu}(\tau_{\delta} > t) \leq \mathbb{P}_{\nu}\left[\mathcal{E}_t^{\complement}\right]$. For $t \leq T_{0}(\delta)$, we have $\mathbb{P}_{\nu}(\tau_{\delta} > t) \leq 1$. Combining those yields: $\E_{\nu}[\tau_{\delta}] \leq T_{0}(\delta) + \sum_{t > T_{0}(\delta)} \mathbb{P}_{\nu}\left[\mathcal{E}_t^{\complement}\right]$. Second, let's prove that $\sum_{t > T_{0}(\delta)} \mathbb{P}_{\nu}\left[\mathcal{E}_t^{\complement}\right] \leq \frac{2ed}{c^2}$ for $\mathcal{E}_t$ defined in Equation \ref{eq:concentration_events}. Combining Lemma 5 and Lemma 6 from \citet{degenne_non-asymptotic_2019} yields the desired result.
\end{proof}

\subsection{Saddle-point Property}  \label{saddle_point_property}

Let $T_{t,I} \defeq \{s \in [t]: I_s = I\}$ for all $I \in \I$. Let $I \in \I$. Similarly to \citet{degenne_non-asymptotic_2019}, we prove the saddle-point property of the algorithm $\A_{I}^{A}$ associated to $I$.
\begin{definition}
An algorithm playing sequences $(\tilde{w}_{s}, \lambda_{s})_{s \in T_{t,I}} \in \left(\tsimplex \times \Theta_{I}^{\complement}\right)^{|T_{t,I}|}$ is an approximate optimistic saddle-point algorithm with slack $x_t$ if:
\begin{equation*}
    \inf_{\lambda \in \Theta_{I}^{\complement}} \sum_{s \in T_{t,I}} \langle \tilde{w}_{s}, d_{\text{KL}}(\mu_{s-1}, \lambda)\rangle \geq \max_{A \in \mathcal{A}} \sum_{s \in T_{t,I}} \langle \bm{1}_{A}, r_{s} \rangle - x_t
\end{equation*}
\end{definition}

Using the standard result that $\min_{x}(f(x)+g(x)) \geq \min_{x}(f(x))  + \min_{x}(g(x)) $ and the explicit definition of $\lambda_s \in \inf_{\lambda \in \Theta_{I}^{\complement}} \langle \tilde{w}_{s}, d_{\text{KL}}(\mu_{s-1}, \lambda) \rangle$, which is a best-response oracle without regret, we obtain that:
\begin{equation*}
     \inf_{\lambda \in \Theta_{I}^{\complement}} \sum_{s \in T_{t,I}} \langle \tilde{w}_{s}, d_{\text{KL}}(\mu_{s-1}, \lambda)\rangle \geq \sum_{s \in T_{t,I}} \inf_{\lambda \in \Theta_{I}^{\complement}} \langle \tilde{w}_{s}, d_{\text{KL}}(\mu_{s-1}, \lambda) \rangle =  \sum_{s \in T_{t,I}} \langle \tilde{w}_{s}, d_{\text{KL}}(\mu_{s-1}, \lambda_s) \rangle
\end{equation*}

Let $C_{s,a} = r_{s,a} - d_{\text{KL}}(\mu_{s-1,a}, \lambda_{s,a})$ and $C_{s} = (C_{s,a})_{a \in [d]}$ be the slack between the optimistic reward and the reward for the parameter $\mu_{s-1}$. We have:
\begin{align*}
	\sum_{s \in T_{t,I}} \langle \tilde{w}_{s}, d_{\text{KL}}(\mu_{s-1}, \lambda_s) \rangle &\geq  \sum_{s \in T_{t,I}} \langle \tilde{w}_{s}, r_s \rangle -   \sum_{s \in T_{t,I}} \langle \tilde{w}_{s}, C_s \rangle
\end{align*}

Since $\sum_{s \in T_{t,I}} \langle \tilde{w}_{s}, r_s \rangle = \sum_{s \in T_{t,I}} \langle w_{s}, U_s \rangle$ is The cumulative reward of the $A$-player with a learner on $\simplex$ or a learner on $\tsimplex$, introducing the cumulative regret $R_{t}^{A}$ yields: $\sum_{s \in T_{t,I}} \langle \tilde{w}_{s}, r_s \rangle \geq \max_{A \in \mathcal{A}} \sum_{s \in T_{t,I}} \langle \bm{1}_{A}, r_s \rangle - R_{|T_{t,I}|}^A$. Combining these inequalities yield:
\begin{equation*}
    \inf_{\lambda \in \Theta_{I}^{\complement}} \sum_{s \in T_{t,I}} \langle \tilde{w}_{s}, d_{\text{KL}}(\mu_{s-1}, \lambda)\rangle \geq \max_{A \in \mathcal{A}} \sum_{s \in T_{t,I}} \langle \bm{1}_{A}, r_{s} \rangle - x_t
\end{equation*}
where $x_t = \sum_{s \in T_{t,I}} \langle \tilde{w}_{s}, C_s \rangle + R_{|T_{t,I}|}^A$ is the slack of the optimistic saddle-point algorithm.

\subsection{Candidate Answer} \label{appendix_candidate_answer}

The MLE $\mu_{t-1}$ summarizes the observations seen at the beginning of round $t$. Since $\M$ can have a peculiar geometry, $\mu_{t-1} \notin \M$ might happen. Due to concentration results, we have $\mu_{t-1} \in \M$ after a certain time. We consider $\tilde{\mu}_{t-1} \in \M \cap \C_t$, so that for all $a \in [d]$, $d_{\text{KL}}(\mu_{t-1,a},\mu_{t-1,a}^{\mathcal{M}}) \leq \frac{f(t-1)}{\tilde{N}_{t-1,a}}$. A more elaborate choice, but not necessary, would be: $\tilde{\mu}_{t-1} \in \argmin_{\lambda \in \M \cap \C_{t}} \langle \tilde{N}_{t-1}, d_{\text{KL}}(\mu_{t-1}, \lambda) \rangle$. When $\M \cap \C_{t} = \emptyset$, $\tilde{\mu}_{t-1}$ is chosen randomly. The candidate answer is defined as: $I_t = I^*(\tilde{\mu}_{t-1})$.

In Appendix \ref{appendix_incorrect_candidate_answer}, we show that $I_t$ is not the correct answer for only $o(t)$ rounds. This provides a lower bound on the number of times the candidate answer is correct. An upper bound on the number of times the candidate answer is correct is proved in Appendix \ref{appendix_correct_candidate_answer}. 

\subsubsection{Incorrect Answer} \label{appendix_incorrect_candidate_answer}

Let $I^*= I^*(\mu)$, $t<\tau_{\delta}$, $T_{t,I} \defeq \{s \in [t]: I_s = I\}$ for all $I \in \I$ and $t_b \defeq t^{1/(1+b)}$. The number of time the recommended answer is not correct is $o(t)$ as a consequence of the following fact: when $I_t \neq I^*$ a quantity, denoted $\epsilon_t$, is increasing linearly while being $O(\sqrt{t})$ due to concentration arguments. The proof of this fact uses a consequence of the chernoff information lower bound $\epsilon_{\nu}$ (Appendix \ref{proof_loss_inf_norm_upper_lower_bound}): the Lemma 18 of \citet{degenne_non-asymptotic_2019}.

\begin{lemma*}[Lemma 18 in \citet{degenne_non-asymptotic_2019}] \label{lem:lemma_18_degenne_non_asymptotic}
    For (a) sub-Gaussian or (b) Gaussian bandit, if $d_{\text{KL}}(\mu_{t-1,a},\mu_a) \leq \frac{f(t-1)}{\tilde{N}_{t-1,a}}$ for all $a \in [d]$, then: $I_t \neq I^*(\mu)$ implies there exists $a_{0} \in [d]$ such that $\frac{f(t-1)}{\tilde{N}_{t-1,a_{0}}} \geq \frac{\epsilon_{\nu}}{2}$.
\end{lemma*}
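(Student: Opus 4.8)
The plan is to convert the geometric statement ``$I_t$ is wrong'' into an information-theoretic statement via the Chernoff information lower bound $\epsilon_{\nu}$, and then to read off the required lower bound on $f(t-1)/\tilde{N}_{t-1,a_0}$ by combining two KL bounds through a well-chosen probe point.

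First I would observe that the hypothesis $d_{\text{KL}}(\mu_{t-1,a},\mu_a) \leq f(t-1)/\tilde{N}_{t-1,a}$ for all $a\in[d]$ is exactly the event $\{\mu \in \C_t\}$, by the definition of the confidence hyperbox. Since $\mu \in \M$, this forces $\M \cap \C_t \neq \emptyset$, so the feasible projection $\tilde{\mu}_{t-1} \in \M \cap \C_t$ is the genuine projection (not the random fallback) and, in particular, $\tilde{\mu}_{t-1} \in \C_t$. Reading the definition of $\C_t$ componentwise then gives the second bound $d_{\text{KL}}(\mu_{t-1,a},\tilde{\mu}_{t-1,a}) \leq f(t-1)/\tilde{N}_{t-1,a}$ for every $a$.

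The key step is to stitch these two one-sided bounds together through the common point $\mu_{t-1,a}$ using the definition $\text{ch}(x,y) = \inf_{u \in \Theta}\left( d_{\text{KL}}(u,x) + d_{\text{KL}}(u,y)\right)$. Evaluating the infimum at the feasible choice $u = \mu_{t-1,a}$ yields, for every $a \in [d]$,
\[
\text{ch}(\tilde{\mu}_{t-1,a}, \mu_a) \leq d_{\text{KL}}(\mu_{t-1,a},\tilde{\mu}_{t-1,a}) + d_{\text{KL}}(\mu_{t-1,a},\mu_a) \leq \frac{2 f(t-1)}{\tilde{N}_{t-1,a}}.
\]
This is the only genuine inequality in the argument, and the factor $2$ it produces is precisely what gives rise to the $\epsilon_{\nu}/2$ in the conclusion.

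Finally I would use the assumption $I_t \neq I^*(\mu)$. Because $I_t = I^*(\tilde{\mu}_{t-1})$, this says $\tilde{\mu}_{t-1}$ lies in the cell of a different answer, i.e. $\tilde{\mu}_{t-1} \in \Theta_{I^*(\mu)}^{\complement}$. The Chernoff information lower bound of Appendix \ref{proof_loss_inf_norm_upper_lower_bound} then supplies an index $a_0 \in [d]$ with $\text{ch}(\tilde{\mu}_{t-1,a_0},\mu_{a_0}) \geq \epsilon_{\nu}$. Chaining this with the displayed bound at $a = a_0$ gives $\epsilon_{\nu} \leq 2 f(t-1)/\tilde{N}_{t-1,a_0}$, i.e. $f(t-1)/\tilde{N}_{t-1,a_0} \geq \epsilon_{\nu}/2$, as claimed. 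The main obstacle is really just identifying the right probe point $u = \mu_{t-1,a}$ in the Chernoff infimum; once that is chosen the two KL bounds telescope and the geometric-to-information conversion via $\epsilon_{\nu}$ is immediate. The sub-Gaussian and Gaussian cases are handled uniformly, since the existence of $\epsilon_{\nu}$ was already established for both settings in Appendix \ref{proof_loss_inf_norm_upper_lower_bound}.
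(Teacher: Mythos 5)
Your proof is correct, and it takes the route the paper intends: the paper itself never proves this statement (it is imported verbatim as Lemma 18 of \citet{degenne_non-asymptotic_2019}, with only the pointer that it is a consequence of the Chernoff information lower bound $\epsilon_{\nu}$ of Appendix \ref{proof_loss_inf_norm_upper_lower_bound}), and your argument is precisely that consequence. In particular, you correctly identify that the hypothesis is the event $\{\mu \in \C_{t}\}$, that this makes the projection $\tilde{\mu}_{t-1} \in \M \cap \C_{t}$ genuine so that $I_t = I^*(\tilde{\mu}_{t-1})$ with $\tilde{\mu}_{t-1} \in \Theta_{I^*(\mu)}^{\complement}$, and that evaluating the infimum defining $\mathrm{ch}(\tilde{\mu}_{t-1,a},\mu_a)$ at the probe point $u=\mu_{t-1,a}$ chains the two confidence-box bounds (estimator-to-truth and estimator-to-projection) and is exactly where the factor $2$, hence the $\epsilon_{\nu}/2$ in the conclusion, comes from.
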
 
Let $s \in [t]$, such that $I_s \neq I^*$, and $\epsilon_t \defeq \sum_{s \leq t, I_s \neq I^*} \langle \tilde{w}_s, d_{\text{KL}}(\mu_{s-1},\mu)\rangle$. Since $\mu \in \Theta_{I_s}^{\complement}$, we have: $\epsilon_t \geq \sum_{I \in \mathcal{I} \setminus \{I^*\}} \inf_{\lambda \in \Theta_{I_s}^{\complement}}  \sum_{s \leq t, I_s = I} \langle \tilde{w}_s, d_{\text{KL}}(\mu_{s-1},\lambda)\rangle$. For each $I \neq I^*$, the approximate optimistic saddle-point property of the learners with slack $x_t = R^{A}_{|T_{t,I}|} + \sum_{s \in T_{t,I}} \langle \tilde{w}_s, C_s\rangle$ (Appendix \ref{saddle_point_property}) yields:
\begin{align*}
    \epsilon_t \geq \sum_{I \in \mathcal{I} \setminus \{I^*\}} \max_{A \in \mathcal{A}} \sum_{s \in T_{t,I}} \langle \bm{1}_{A}, r_s\rangle - \sum_{I \in \mathcal{I} \setminus \{I^*\}} R^{A}_{|T_{t,I}|} - \sum_{s \leq t, I_s \neq I^*} \langle \tilde{w}_s, C_s\rangle
\end{align*}
Since $f(s-1) \leq f(t-1)$, the condition of Lemma 18 in \citet{degenne_non-asymptotic_2019} is validated under the event $\mathcal{E}_t$. Hence, we obtain that: for all $s \in [t_b, t]$ such that $I_s \neq I^*$, there exists $a_{0} \in [d]$ such that $\frac{f(s-1)}{\tilde{N}_{s-1,a_{0}}} \geq \frac{\epsilon_{\nu}}{2}$. Let $t' \defeq  \max \left\{s \in [t]: I_s \neq I^* \right\}$. We suppose $t' > t_b$, which is possible since $b>0$. The higher $b$ is, the weaker this assumption is. Let $a_{0} \in [d]$ such that $\frac{f(t'-1)}{\tilde{N}_{t'-1,a_{0}}} \geq \frac{\epsilon_{\nu}}{2}$. Since $f$ is increasing and, for $t>e$, $\frac{f(t_b)}{f(t)} \geq C_b = \frac{1}{3(1+b)}$, we have that: for all $s \in [t_b,t']$, 
\begin{equation*}
    \frac{f(s-1)}{\tilde{N}_{s-1,a_{0}}} \geq \frac{f(s-1)}{\tilde{N}_{t'-1,a_{0}}}  =\frac{f(s-1)}{f(t'-1)} \frac{f(t'-1)}{\tilde{N}_{t'-1,a_{0}}}  \geq \frac{f(t_b)}{f(t)} \frac{\epsilon_{\nu}}{2} \geq C_{b} \frac{\epsilon_{\nu}}{2} 
\end{equation*}

Let $A_{0} \in \A$ such that $a_{0} \in A_{0}$. By definition of $r_s$, we have $r_{s,a_{0}} \geq \frac{f(s-1)}{\tilde{N}_{s-1,a_{0}}}$ and $r_s \geq 0$. Combining these inequalities and dropping the time $s < t_b$ yield:
\begin{equation*}
	\max_{A \in \mathcal{A}} \sum_{s \in T_{t,I}} \langle \bm{1}_{A}, r_s\rangle \geq  \sum_{s \in T_{t,I}} \langle \bm{1}_{A_0}, r_s\rangle  \geq \sum_{s \in T_{t,I}: s \geq t_b}  r_{s,a_{0}}  \geq \sum_{s \in T_{t,I}: s \geq t_b}  \frac{f(s-1)}{\tilde{N}_{s-1,a_{0}}} \geq C_{b} \frac{\epsilon_{\nu}}{2} \left(|T_{t,I}| - |T_{t_b,I}| \right)
\end{equation*} 

Since $R^{A}_{|T_{t,I}|} \leq R^{A}_{t}$, we have $ \sum_{I \in \mathcal{I} \setminus \{I^*\}} R^{A}_{|T_{t,I}|} \leq (|\mathcal{I}| - 1) R^{A}_{t}$. For a concave cumulative regret such as $t \mapsto \sqrt{t}$, we would have $\sum_{I \in \mathcal{I} \setminus \{I^*\}} R^{A}_{|T_{t,I}|} \leq (|\mathcal{I}| - 1)R^{A}_{\frac{t - |T_{t,I^*}|}{|\mathcal{I}| - 1}}$. Since $\sum_{I \in \mathcal{I} \setminus \{I^*\}} (|T_{t,I}| - |T_{t_b,I}|) = t - t_b - |T_{t,I^*}|$, summing these inequalities yields:
\begin{equation*}
    \epsilon_t \geq \frac{C_b \epsilon_{\nu}}{2} (t - t_b - |T_{t,I^*}|) - (|\mathcal{I}| - 1) R^{A}_{t} - \sum_{s \leq t, I_s \neq I^*} \langle \tilde{w}_s, C_s\rangle
\end{equation*}

Under event $\mathcal{E}_t$ we have: for all $s \leq t$ and all $a \in [d]$, $d_{\text{KL}}(\mu_{s-1},\mu) \leq \frac{f(t_b)}{\tilde{N}_{s-1,a}}$. Since $f(t_b) \leq f(t)$, by definition of $\epsilon_t$ and Lemma \ref{lem:summation_ratio_w_and_empirical_counts}, we obtain:
\begin{equation*}
	\epsilon_t \leq f(t) \sum_{s \leq t} \sum_{a \in [d]} \frac{\tilde{w}_{s,a}}{\tilde{N}_{s-1,a}} \leq f(t) \left( d |\mathcal{A}|^2 + 2d \ln \left( \frac{t K}{d} \right) \right)
\end{equation*}

Combining these inequalities, we obtain a lower bound on $|T_{t, I^*}|$: for $t<\tau_{\delta}$, under $\mathcal{E}_t$,
\begin{align*}
    |T_{t,I^*}| \geq t - t_b - \frac{2}{C_b \epsilon_{\nu}} \left((|\mathcal{I}| - 1) R^{A}_{t} + \sum_{s \leq t, I_s \neq I^*} \langle \tilde{w}_s, C_s\rangle + f(t) \left( d |\mathcal{A}|^2 + 2d \ln \left( \frac{t K}{d} \right) \right) \right)
\end{align*}

\subsubsection{Correct Answer}  \label{appendix_correct_candidate_answer}

Let $I^*= I^*(\mu)$, $t<\tau_{\delta}$ and $T_{t,I} \defeq \{s \in [t]: I_s = I\}$ for all $I \in \I$. Let $t' \defeq  \max \left\{ s \leq  t: I_{s} = I^* \right\}$ be the last round in which we recommend the correct answer before the algorithm stops. Since $s \mapsto \beta(s,\delta)$ is increasing, we have $\beta(t, \delta) \geq \beta(t' - 1,\delta)$. By definition of $t'$, we have $T_{t,I^*} = T_{t',I^*}$. The non-satisfied stopping criterion rewrites as: $\beta(t' - 1,\delta) \geq \inf_{\lambda \in \Theta_{I^*}^{\complement}} \langle \tilde{N}_{t'-1}, d_{\text{KL}}(\mu_{t'-1}, \lambda)\rangle$. Lemma \ref{lem:concentration_events_summed_kl_diff_upper_bound} and $t \mapsto t f(t)$ increasing yield that:
\begin{align*}
    \inf_{\lambda \in \Theta_{I^*}^{\complement}} \langle \tilde{N}_{t'-1}, d_{\text{KL}}(\mu_{t'-1}, \lambda)\rangle &\geq \inf_{\lambda \in \Theta_{I^*}^{\complement}} \langle \tilde{N}_{t'-1}, d_{\text{KL}}(\mu, \lambda)\rangle -  L_{\M}\sqrt{2 (t'-1) f(t'-1) d \|\sigma^2\|_{\infty}  K}\\
    &\geq \inf_{\lambda \in \Theta_{I^*}^{\complement}} \langle \tilde{N}_{t'-1}, d_{\text{KL}}(\mu, \lambda)\rangle -  L_{\M} \sqrt{2 t f(t) d \|\sigma^2\|_{\infty}  K}
\end{align*}

Combining C-Tracking, Lemma \ref{lem:c_tracking_control_empirical_counts} and $\langle \1, d_{\text{KL}}(\mu, \lambda) \rangle = \|d_{\text{KL}}(\mu, \lambda)\|_{1}\leq D_{\M}$ ($\M$ bounded), we obtain:
\begin{align*}
    \inf_{\lambda \in \Theta_{I^*}^{\complement}} \langle \tilde{N}_{t'-1}, d_{\text{KL}}(\mu, \lambda)\rangle &\geq \inf_{\lambda \in \Theta_{I^*}^{\complement}} \sum_{s = 1}^{t'-1} \langle \tilde{w}_{s}, d_{\text{KL}}(\mu, \lambda)\rangle - |\mathcal{A}|^2 D_{\M}
\end{align*}

Lemma 14 in \citet{degenne_non-asymptotic_2019} (Appendix \ref{lem:concentration_events_kl_diff_upper_bound}) yields:
\begin{equation*}
    \inf_{\lambda \in \Theta_{I^*}^{\complement}} \sum_{s = 1}^{t'-1} \langle \tilde{w}_{s}, d_{\text{KL}}(\mu, \lambda)\rangle \geq \inf_{\lambda \in \Theta_{I^*}^{\complement}} \sum_{s = n_{0} + 1}^{t'-1} \langle \tilde{w}_{s}, d_{\text{KL}}(\mu_{s-1}, \lambda)\rangle - L_{\M} \sqrt{2\|\sigma^2\|_{\infty}f(t)} \sum_{s = n_{0} + 1}^{t'- 1}\sum_{a \in [d]} \frac{\tilde{w}_{s,a}}{\sqrt{\tilde{N}_{s-1,a}}}
\end{equation*}

Lemma \ref{lem:summation_ratio_w_and_empirical_counts} shows: 
\begin{equation*}
    \sum_{a \in [d]} \sum_{s=n_{0}+1}^{t'-1} \frac{\tilde{w}_{s,a}}{\sqrt{\tilde{N}_{s-1,a}}} \leq d  |\mathcal{A}|^2 + 2\sqrt{2 d (t'-1) K} \leq d  |\mathcal{A}|^2 + 2\sqrt{2 d t K}
\end{equation*}

Since $\langle \tilde{w}_{s}, d_{\text{KL}}(\mu_{s-1}, \lambda)\rangle \geq 0$, dropping all the rounds for which $I_s \neq I^*$ yields:
\begin{equation*}
    \inf_{\lambda \in \Theta_{I^*}^{\complement}} \sum_{s = n_{0} + 1}^{t'-1} \langle \tilde{w}_{s}, d_{\text{KL}}(\mu_{s-1}, \lambda)\rangle \geq \inf_{\lambda \in \Theta_{I^*}^{\complement}} \sum_{n_{0} + 1 \leq s \leq t'-1, I_s = I^*} \langle \tilde{w}_{s}, d_{\text{KL}}(\mu_{s-1}, \lambda)\rangle
\end{equation*}

Combining the saddle-point property of $\A^{A}_{I^*}$ and $R_{t'-1}^{A} \leq R_{t}^{A}$, we obtain:
\begin{align*}
	\inf_{\lambda \in \Theta_{I^*}^{\complement}} \sum_{n_{0} + 1 \leq s \leq t'-1, I_s = I^*} \langle \tilde{w}_{s}, d_{\text{KL}}(\mu_{s-1}, \lambda)\rangle &\geq \max_{A \in \mathcal{A}} \sum_{n_{0} + 1 \leq s \leq t'-1, I_s = I^*} \langle \bm{1}_{A}, r_s \rangle  - R_{t}^{A} \\&\quad \quad \quad-   \sum_{n_{0} + 1 \leq s \leq t'-1, I_s = I^*} \langle \tilde{w}_{s}, C_s \rangle
\end{align*} 

Under the concentration event $\mathcal{E}_t$, we have $d_{\text{KL}}(\mu_{s,a}, \mu_a) \leq \frac{f(t_b)}{\tilde{N}_{s,a}} \leq \frac{f(s)}{\tilde{N}_{s,a}}$ for all $s \geq t_{b}$. Hence, we obtain that: $\mu \in [\alpha_{s,a}, \beta_{s,a}]$ for all $s \geq t_{b}$. Combined with the definition of $r_s$, this implies that: for all $a \in [d]$, $r_{s,a} \geq d_{\text{KL}}(\mu_a, \lambda_{s,a})$. Dropping all the the rounds for which $s < t_b$ yields:
\begin{align*}
    \max_{A \in \mathcal{A}} \sum_{n_{0} + 1 \leq s \leq t'-1, I_s = I^*} \langle \bm{1}_{A}, r_s \rangle  &\geq \max_{A \in \mathcal{A}} \sum_{t_b \leq s \leq t'-1, I_s = I^*} \langle \bm{1}_{A}, d_{\text{KL}}(\mu, \lambda_{s}) \rangle 
\end{align*}

Combining $\frac{1}{|T_{t'-1, I^*}| - |T_{t_b, I^*}|} \sum_{t_b \leq s \leq t'-1, I_s = I^*} \delta_{\lambda_s} \in \mathcal{P}\left(\Theta_{I^*}^{\complement}\right)$ (average of diracs in $(\lambda_s)_{s}$), the dual formulation of $D_{\nu}$ and the fact that $|T_{t'-1, I^*}| \geq |T_{t, I^*}|-1$, we obtain that:
\begin{align*}
     \max_{A \in \mathcal{A}} \sum_{t_b \leq s \leq t'-1, I_s = I^*} \langle \bm{1}_{A}, d_{\text{KL}}(\mu, \lambda_{s}) \rangle &\geq (|T_{t'-1, I^*}| - |T_{t_b, I^*}|) \inf_{q \in \mathcal{P}\left(\Theta_{I^*}^{\complement}\right)}\max_{A \in \mathcal{A}}  \E_{\lambda \sim q} \left[\langle \bm{1}_{A}, d_{\text{KL}}(\mu, \lambda)\right]\rangle\\
    &\geq (|T_{t, I^*}| - 1 - t_b) D_{\nu}
\end{align*}

Combining these inequalities, we obtain an upper bound on $|T_{t, I^*}|$: for $t<\tau_{\delta}$, under $\mathcal{E}_t$,
\begin{align*}
	&\frac{\beta(t,\delta) + R_{t}^{A} + c_{t}}{D_{\nu}} \geq |T_{t, I^*}| - 1 - t_b  \\
	\text{where }\quad& c_t = L_{\M} \sqrt{2\|\sigma^2\|_{\infty}f(t)} \left(d  |\mathcal{A}|^2 + 2\sqrt{2 d t K}\right) + |\mathcal{A}|^2 D_{\M} \\
	&\quad \quad \quad \quad +  L_{\M} \sqrt{2 t f(t) d \|\sigma^2\|_{\infty}  K} + \sum_{n_{0} + 1 \leq s \leq t'-1, I_s = I^*} \langle \tilde{w}_{s}, C_s \rangle
\end{align*}

\subsection{Stopping Time Upper Bound} \label{stopping_time_upper_bound}

Combining the upper and lower bounds on $|T_{t, I^*}|$ (Appendices \ref{appendix_incorrect_candidate_answer} and \ref{appendix_correct_candidate_answer}) yields: for $t<\tau_{\delta}$, under $\mathcal{E}_t$,
\begin{equation} \label{eq:solution_t_delta}
    t \leq \frac{\beta(t,\delta)}{D_{\nu}} + C_{\nu}\left(R_{t}^{A} + h(t)\right)
\end{equation}
where $C_{\nu} \defeq  \frac{1}{D_{\nu}} + \frac{2(|\I|-1)}{C_b \epsilon_{\nu}}$ and 
\begin{align*}
h(t) &\defeq \frac{1}{C_{\nu}} \left( \frac{c_t}{D_{\nu}} + 2t_b + 1 + \frac{2}{C_b \epsilon} \left( \sum_{s \leq t, I_s \neq I^*} \langle \tilde{w}_s, C_s\rangle + f(t) \left( d |\mathcal{A}|^2 + 2d \ln \left( \frac{t K}{d} \right) \right)  \right) \right)
\end{align*}

Lemma \ref{lem:lemma_15_16_degenne} yields:
\begin{align*}
        \sum_{s \geq n_{0}+1}^{t} \langle \tilde{w}_s, C_{s} \rangle &\leq  f(t) \left( d |\mathcal{A}|^2 + 2d \ln \left( \frac{t K}{d} \right) \right) \\
        &\quad \quad + 2L_{\M} \sqrt{2\|\sigma^2\|_{\infty}f(t)}  \left( d  |\mathcal{A}|^2 + 2\sqrt{2 d t K} \right)
\end{align*}

Let $b=1$. Using that $f(t) = \Omega(\ln(t))$ and $t_b = \sqrt{t}$, we obtain that: $h(t) = O\left( \sqrt{t \ln(t)}\right)$. Let $T_{0}(\delta)\defeq \max \left\{t \in \mathbb{N}: t \leq \frac{\beta(t,\delta)}{D_{\nu}} + C_{\nu}(R_{t}^{A} + h(t)) \right\}$ be the upper bound on time such that the Equation \ref{eq:solution_t_delta} is satisfied. Since the set is non empty and bounded, $T_{0}(\delta) \in \mathbb{N}$. The set is bounded since the learner has sublinear cumulative regret, $R_{t}^{A} = o(t)$, and $\frac{\beta(t,\delta)}{D_{\nu}} + C_{\nu}(R_{t}^{A} + h(t)) = O\left(R_{t}^{A} + \sqrt{t \ln(t)}\right)$.

\begin{theorem*} 
    Let $\mathcal{M}$ bounded. The sample complexity of the instantiated CombGame meta-algorithm on bandit $\mu \in \mathcal{M}$ satisfies:
    \begin{align*}
     \E_{\nu}[\tau_{\delta}] \leq T_{0}(\delta) + \frac{2ed}{c^2} \quad  \text{ with } \quad  T_{0}(\delta) \defeq \max \left\{t \in \mathbb{N}: t \leq \frac{\beta(t,\delta)}{D_{\nu}} + C_{\nu}(R_{t}^{A} + h(t)) \right\}
    \end{align*}
    where $c > 0$ is the parameter of the exploration bonus $f(t)$ when taking $b=1$. The reminder terms are: the approximation error $h(t) = O\left(\sqrt{t\ln(t)}\right)$, the learner's cumulative regret $R_{t}^{A}$ and a constant $C_{\nu}$ depending on the distribution. 
    
    The instantiated CombGame meta-algorithm is an asymptotically optimal algorithm. 
\end{theorem*}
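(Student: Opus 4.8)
The plan is to obtain the theorem as an assembly of the preceding technical results: essentially all the genuine work has already been done in establishing the pointwise inequality of Equation \ref{eq:solution_t_delta}, so what remains is to feed this into Lemma \ref{lem:upper_bound_finite_time_decomposition} and then extract the asymptotic rate. First I would verify the hypothesis of Lemma \ref{lem:upper_bound_finite_time_decomposition}, namely that there exists $T_{0}(\delta) \in \N$ with $\mathcal{E}_t \subseteq \{\tau_{\delta} \leq t\}$ for all $t > T_{0}(\delta)$. This follows by contraposition from Equation \ref{eq:solution_t_delta}: that inequality states that on the event $\mathcal{E}_t$, any round with $t < \tau_{\delta}$ must satisfy $t \leq \frac{\beta(t,\delta)}{D_{\nu}} + C_{\nu}(R_{t}^{A} + h(t))$, hence $t \leq T_{0}(\delta)$ by the definition of $T_{0}(\delta)$ as the largest such integer. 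Equivalently, if $t > T_{0}(\delta)$ then on $\mathcal{E}_t$ we cannot have $t < \tau_{\delta}$, i.e.\ $\tau_{\delta} \leq t$. I would also record here, exactly as in the stopping-time section, that $T_{0}(\delta)$ is well defined: the defining set is non-empty and bounded because $R_{t}^{A} = o(t)$ and $h(t) = O(\sqrt{t\ln(t)}) = o(t)$ force the right-hand side to be eventually smaller than $t$.

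With the hypothesis verified, Lemma \ref{lem:upper_bound_finite_time_decomposition} directly yields $\E_{\nu}[\tau_{\delta}] \leq T_{0}(\delta) + \sum_{t > T_{0}(\delta)} \mathbb{P}_{\nu}[\mathcal{E}_t^{\complement}]$, and the concentration bound from the same lemma (with $b = 1$ and the chosen exploration bonus $f$) gives $\sum_{t > T_{0}(\delta)} \mathbb{P}_{\nu}[\mathcal{E}_t^{\complement}] \leq \frac{2ed}{c^2}$. This establishes the finite-time statement $\E_{\nu}[\tau_{\delta}] \leq T_{0}(\delta) + \frac{2ed}{c^2}$ claimed in the theorem.

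Finally, for asymptotic optimality I would analyze the growth of $T_{0}(\delta)$ as $\delta \to 0$. At $t = T_{0}(\delta)$ the defining inequality holds with near-equality, so dividing by $\ln(1/\delta)$ gives $\frac{T_{0}(\delta)}{\ln(1/\delta)} \leq \frac{\beta(T_{0}(\delta),\delta)}{D_{\nu}\ln(1/\delta)} + \frac{C_{\nu}(R_{T_{0}(\delta)}^{A} + h(T_{0}(\delta)))}{\ln(1/\delta)}$. In both settings the $\delta$-dependent part of $\beta(t,\delta)$ is $\ln((|\I|-1)/\delta)(1 + o(1)) \sim \ln(1/\delta)$, while its dependence on $t$ is only $O(\ln\ln(t))$; hence the first term tends to $1/D_{\nu}$. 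For the second term, the bound $T_{0}(\delta) = O(\ln(1/\delta))$ (obtained from sublinearity of the right-hand side, as used for well-definedness) together with $R_{t}^{A} + h(t) = o(t)$ gives $R_{T_{0}(\delta)}^{A} + h(T_{0}(\delta)) = o(\ln(1/\delta))$, so this term vanishes. Thus $\limsup_{\delta \to 0} T_{0}(\delta)/\ln(1/\delta) \leq 1/D_{\nu}$, and combining with the finite-time bound (the additive constant $\frac{2ed}{c^2}$ being negligible after dividing by $\ln(1/\delta)$) yields $\limsup_{\delta \to 0} \E_{\nu}[\tau_{\delta}]/\ln(1/\delta) \leq D_{\nu}^{-1}$, which matches the lower bound of Theorem \ref{thm:lower_bound} and proves asymptotic optimality.

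The main obstacle is not in this assembly but upstream, in establishing Equation \ref{eq:solution_t_delta}, which couples the candidate-answer bounds of Appendices \ref{appendix_incorrect_candidate_answer} and \ref{appendix_correct_candidate_answer} with the saddle-point property of Appendix \ref{saddle_point_property}; granting that inequality, the argument above is routine. The one subtle point in the assembly itself is making the asymptotic limit rigorous: one must justify that the self-referential estimate $T_{0}(\delta) = O(\ln(1/\delta))$ combined with the sublinearity of $R_{t}^{A}$ genuinely kills the remainder, being careful that $R_{T_{0}(\delta)}^{A}$ is evaluated at an argument that itself grows only logarithmically in $1/\delta$.
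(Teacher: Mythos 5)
Your proposal is correct and follows essentially the same route as the paper: it establishes the hypothesis of Lemma \ref{lem:upper_bound_finite_time_decomposition} by contraposition from Equation \ref{eq:solution_t_delta} (the paper phrases this as a proof by contradiction), invokes that lemma for the finite-time bound, and then passes to the limit $\delta \to 0$. The only difference is that where the paper simply says ``taking the limit $\delta \rightarrow 0$ yields asymptotic optimality,'' you spell out the limit argument (the $\ln(1/\delta)$ scaling of $\beta$, the self-referential bound $T_{0}(\delta) = O(\ln(1/\delta))$, and the vanishing of the $R_{t}^{A} + h(t)$ remainder), which is a correct and welcome elaboration rather than a different approach.
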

\begin{proof}
By the absurd, we assume there exists $t > T_{0}(\delta)$ such that $\mathcal{E}_t \cap \{t < \tau_{\delta}\} \neq \emptyset$. Under event $\mathcal{E}_t$ combining $t > T_{0}(\delta)$ and $t < \tau_{\delta}$ yields the following contradiction:
\begin{align*}
    \frac{\beta(t,\delta)}{D_{\nu}} + C_{\nu}(R_{t}^{A} + h(t))< t &\leq \frac{\beta(t,\delta)}{D_{\nu}} + C_{\nu}(R_{t}^{A} + h(t))
\end{align*}

Therefore, we have $\mathcal{E}_t \cap \{t < \tau_{\delta}\} = \emptyset$. Hence, for all $t> T_{0}(\delta)$, $\mathcal{E}_t \subset \{\tau_{\delta} \leq t\}$. Applying Lemma \ref{lem:upper_bound_finite_time_decomposition} concludes the proof of the finite-time upper bound. Taking the limit $\delta \rightarrow 0$ yields that the instantiated CombGame meta-algorithm is an asymptotically optimal algorithm. 
\end{proof}

\subsection{Technical Arguments}   \label{technicalities}

In Appendix \ref{technicalities}, we prove technical arguments on C-Tracking (Appendix \ref{tracking_results_technicalities}) and on concentration events (Appendix \ref{concentration_results_technicalities}).

\subsubsection{Tracking Arguments}  \label{tracking_results_technicalities}

Let $\A_{|a} = \{A \in \A: a \in A\}$ be the set of actions containing the arm $a$ and $B_t = \text{supp}\left( \sum_{s=1}^{t} w_{s} \right)$. Sparse C-Tracking is defined as: $A_t \in \argmin_{A \in B_t} \frac{\tilde{N}_{t-1,a}}{\sum_{s=1}^{t} w_{s,A}}$ for all $t > n_{0}$. Lemma \ref{lem:c_tracking_control_empirical_counts} controls the deviation between the empirical count of sampled actions, $N_{t,A}$, and the cumulative sum of pulling proportions, $\sum_{s=1}^{t} w_{s,A}$. This is an adaptation of Lemma 7 in \citet{degenne_non-asymptotic_2019}.

\begin{lemma} \label{lem:c_tracking_control_empirical_counts}
Using sparse C-Tracking, we have: for all $t \geq n_{0}$ and for all $A \in \mathcal{A}$, and all $a \in [d]$,
\begin{align*}
    \sum_{s=1}^{t} w_{s,A} - (|\mathcal{A}|-1) &\leq N_{t,A}\leq 1 + \sum_{s=1}^{t} w_{s,A} \\
    \sum_{s=1}^{t} \tilde{w}_{s,a} - (|\A|-1) |\A_{|a}| &\leq \tilde{N}_{t,a} \leq |\A_{|a}| + \sum_{s=1}^{t} \tilde{w}_{s,a}
\end{align*}
\end{lemma}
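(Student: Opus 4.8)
The plan is to derive all four inequalities from the two per-action bounds, and to prove the latter by the C-Tracking induction adapted to the ratio selection rule. Write $g_{t,A} \defeq N_{t,A} - \sum_{s=1}^{t} w_{s,A}$ for the tracking gap of action $A$. Because $W_{\A}$ is linear, $\tilde{N}_{t,a} = \sum_{A \in \A_{|a}} N_{t,A}$ and $\sum_{s=1}^{t}\tilde{w}_{s,a} = \sum_{A \in \A_{|a}} \sum_{s=1}^{t} w_{s,A}$, so that $\tilde{N}_{t,a} - \sum_{s=1}^{t}\tilde{w}_{s,a} = \sum_{A \in \A_{|a}} g_{t,A}$. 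Hence, once I show $-(|\A|-1) \leq g_{t,A} \leq 1$ for every action, summing over the $|\A_{|a}|$ actions containing $a$ yields both arm-level bounds, while the action-level bounds are exactly $g_{t,A} \leq 1$ and $g_{t,A} \geq -(|\A|-1)$.

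First I would record two structural facts. Since exactly one action is played per round (including the initialization), $\sum_{A \in \A} N_{t,A} = t$, and since each $w_s \in \simplex$, $\sum_{A \in \A}\sum_{s=1}^{t} w_{s,A} = t$; therefore $\sum_{A \in \A} g_{t,A} = 0$ for all $t \geq n_{0}$. Next, the supports are nested, $B_s \subseteq B_t$ for $s \leq t$, because every $w_{s,A} \geq 0$; as sparse C-Tracking only ever selects $A_s \in B_s$, any action with $N_{t-1,A} > 0$ was played at some earlier round and thus lies in $B_t$. Consequently $\sum_{A \in B_t} N_{t-1,A} = t-1$ and $\sum_{A \in B_t}\sum_{s=1}^{t} w_{s,A} = t$, the terms outside $B_t$ contributing zero.

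The upper bound $g_{t,A} \leq 1$ then follows by induction on $t$. At the end of the initialization each action has been played at most once, so $N_{n_{0},A} \leq 1$ and $g_{n_{0},A} \leq 1$. For the step, assume $g_{t-1,A'} \leq 1$ for all $A'$. If $A_t \neq A$ then $g_{t,A} = g_{t-1,A} - w_{t,A} \leq 1$. If $A_t = A$ then $g_{t,A} = g_{t-1,A} + 1 - w_{t,A}$, so $g_{t,A} \leq 1$ is equivalent to $N_{t-1,A} \leq \sum_{s=1}^{t} w_{s,A}$, i.e.\ to the selected ratio $N_{t-1,A}/\sum_{s=1}^{t} w_{s,A}$ being at most $1$. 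Since the rule picks the minimizer of this ratio over $B_t$, it suffices to show the minimal ratio is $\leq 1$; and indeed if every ratio on $B_t$ exceeded $1$ we would get $\sum_{A \in B_t} N_{t-1,A} > \sum_{A \in B_t}\sum_{s=1}^{t} w_{s,A} = t$, contradicting $\sum_{A \in B_t} N_{t-1,A} = t-1 < t$ from the structural facts. This closes the induction. The lower bound is then immediate from $\sum_{A'} g_{t,A'} = 0$: for any fixed $A$, $g_{t,A} = -\sum_{A' \neq A} g_{t,A'} \geq -(|\A|-1)$. Summing the two action bounds over $A \in \A_{|a}$ delivers the arm-level inequalities.

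The step I expect to be the main obstacle is the support bookkeeping inside the inductive step: the paper's ratio-based rule differs from the additive greedy rule of \citet{garivier_optimal_2016}, and the pigeonhole argument that forces the minimal ratio below $1$ hinges on the identity $\sum_{A \in B_t} N_{t-1,A} = t-1$, which in turn requires the monotonicity $B_s \subseteq B_t$ together with the fact that the rule never samples outside the current support (so every previously played action remains in $B_t$ and no count is lost). Handling the degenerate $0/0$ ratios by restricting the $\argmin$ to $B_t$, and pinning down the base case for both the full and the covering initializations, are the only remaining details; everything else is routine.
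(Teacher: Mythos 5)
Your proof is correct and takes essentially the same approach as the paper's: an induction on $t$ for the per-action upper bound, closed by the pigeonhole argument that the minimal tracked ratio over $B_t$ is at most $1$, the lower bound deduced from the total-count identity $\sum_{A} N_{t,A} = \sum_{A}\sum_{s\leq t} w_{s,A} = t$, and the arm-level bounds obtained by summing the action-level bounds over $\A_{|a}$ (i.e., applying the linear map $W_{\A}$). If anything, your bookkeeping is slightly more careful than the paper's, which writes the minimum over all of $\A$ and sums strict inequalities over $\A$ while implicitly relying on the same support-monotonicity facts you state explicitly.
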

\begin{proof}
If $A \notin B_t$, we have $N_{t,A} =0$ and $\sum_{s=1}^{t} w_{s,A} = 0$. Hence the first inequalities are immediate. Let $A \in B_t$ and $S_{t,A} = \sum_{s=1}^{t} w_{s,A}$. We will prove $N_{t,A}\leq 1 + S_{t,A}$ by induction. At $t = n_{0}$, the result is true based on the initialization: $S_{n_{0},A} = 1$ if $A \in B_{n_{0}}$ and $S_{n_{0},A} = 0$ else. Assume that $N_{s,A} \leq S_{s,A} + 1$ for all $A \in \mathcal{A}$ and all $s \leq t-1$. Let's prove that it holds at round $t$ too. If $A \neq A_t$, the induction property yields: $N_{t,A} = N_{t-1, A} \leq S_{t-1,A} + 1 \leq S_{t,A} + 1 $. Assume $A = A_t$, then: 
\begin{equation*}
	\frac{N_{t,A_t}}{S_{t,A_t}} = \frac{N_{t-1,A_t}}{S_{t,A_t}} + \frac{1}{S_{t,A_t}} = \frac{1}{S_{t,A_t}} + \min_{A \in \mathcal{A}} \frac{N_{t-1,A}}{S_{t,A}} \leq  \frac{1}{S_{t,A_t}} +  1
\end{equation*}
where the last inequality is shown by the absurd. If $\min_{A \in \mathcal{A}} \frac{N_{t-1,A}}{S_{t,A}} \leq  1$ doesn't hold, we have for all $A \in \mathcal{A}$, $N_{t-1,A} > S_{t,A}$. Summing these strict inequalities yields a contradiction: $t-1 = \sum_{A \in \mathcal{A}} N_{t-1,A} > \sum_{A \in \mathcal{A}} S_{t,A} = \sum_{s=1}^t \sum_{A \in \mathcal{A}} w_{s,A} = t$. Therefore, we have $N_{t,A_t}\leq 1 + S_{t,A_t}$. This concludes the induction.

Combining the previous upper bound and $t = \sum_{A \in \mathcal{A}} N_{t,A} = \sum_{A \in \mathcal{A}} S_{t,A}$ yield the lower bound:
\begin{equation*}
    N_{t,A} = t - \sum_{A' \neq A}N_{t,A'} \geq t - \sum_{A' \neq A}(S_{t,A'} + 1) = S_{t,A} - (|\mathcal{A}|-1)
\end{equation*}

Applying the linear map $W_{\A}$ on the previous inequalities yield the counterpart at the arms level:
\begin{align*}
   \sum_{s=1}^{t} \tilde{w}_{s,a} - (|\A|-1) |\A_{|a}| \leq \tilde{N}_{t,a} \leq |\A_{|a}| + \sum_{s=1}^{t} \tilde{w}_{s,a}
\end{align*}
\end{proof}

A better bound for C-Tracking was proven in Theorem 6 of \citet{degenne_structure_2020}. They obtain that for all $t\in \N$ and $A \in \A$,
\begin{equation*}
	- \ln (|\A|) \leq N_{t,A} - \sum_{s=1}^{t} w_{s,A} \leq 1
\end{equation*}

Lemma 8 from \citet{degenne_non-asymptotic_2019} is a technical lemma on summations.

\begin{lemma*}[Lemma 8 in \citet{degenne_non-asymptotic_2019}] \label{lem:lemma_8_degenne}
    For $t \geq t_0 \geq 1$ and $(x_s)_{s \in [t]}$ non negative real numbers such that $\sum_{s = 1}^{t_{0}-1} x_s > 0$,
    \begin{align*}
        &\sum_{s = t_{0}}^{t} \frac{x_s}{\sqrt{\sum_{r=1}^{s} x_r} } \leq  2 \sqrt{ \sum_{s = 1}^{t} x_s } - 2 \sqrt{ \sum_{s = 1}^{t_0 - 1} x_s}\\
        &\sum_{s = t_{0}}^{t} \frac{x_s}{\sum_{r=1}^{s} x_r} \leq \ln \left(\sum_{s = 1}^{t} x_s \right) - \ln \left(\sum_{s = 1}^{t_0 - 1} x_s \right)
    \end{align*}
\end{lemma*}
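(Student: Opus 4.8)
The plan is to reduce both inequalities to telescoping sums by establishing a pointwise upper bound on each summand. Write $S_s \defeq \sum_{r=1}^{s} x_r$ for the partial sums, so that $x_s = S_s - S_{s-1}$ and, since $S_{t_0-1} > 0$ and the $x_s$ are non-negative, $S_s \geq S_{t_0-1} > 0$ for every $s \geq t_0 - 1$; in particular all denominators appearing below are strictly positive.

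For the first inequality, the key pointwise estimate is
\begin{equation*}
    \frac{x_s}{\sqrt{S_s}} = \frac{S_s - S_{s-1}}{\sqrt{S_s}} \leq 2\left(\sqrt{S_s} - \sqrt{S_{s-1}}\right),
\end{equation*}
which follows by rationalizing $\sqrt{S_s} - \sqrt{S_{s-1}} = \frac{S_s - S_{s-1}}{\sqrt{S_s} + \sqrt{S_{s-1}}}$ and using $\sqrt{S_{s-1}} \leq \sqrt{S_s}$ to bound the denominator from above by $2\sqrt{S_s}$. Summing over $s \in \{t_0, \dots, t\}$ telescopes the right-hand side to $2(\sqrt{S_t} - \sqrt{S_{t_0-1}})$, which is exactly the claimed bound.

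For the second inequality, I would use the tangent-line inequality for the concave function $\ln$ (equivalently $-\ln(1-u) \geq u$ for $u \in [0,1)$): concavity gives $\ln(S_{s-1}) \leq \ln(S_s) + \frac{1}{S_s}(S_{s-1} - S_s)$, which rearranges to
\begin{equation*}
    \frac{x_s}{S_s} = \frac{S_s - S_{s-1}}{S_s} \leq \ln(S_s) - \ln(S_{s-1}).
\end{equation*}
Summing over $s \in \{t_0, \dots, t\}$ again telescopes, yielding $\ln(S_t) - \ln(S_{t_0-1})$.

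I do not expect any genuine obstacle here: both arguments are elementary, and the only care needed is to confirm that the denominators $S_s$ and $\sqrt{S_s}$ never vanish (guaranteed by $S_{t_0-1} > 0$) and to note that summands with $x_s = 0$ contribute zero on both sides, so the pointwise bounds and their telescoped sums remain valid in the degenerate case. The single substantive idea in each part is the correct per-term comparison — a discrete analogue of integrating $u^{-1/2}$ and $u^{-1}$ against $\mathrm{d}u$ — after which the conclusion is immediate.
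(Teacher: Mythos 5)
Your proof is correct. Note that the paper itself does not prove this statement at all: it is imported verbatim as Lemma 8 of \citet{degenne_non-asymptotic_2019}, so there is no in-paper argument to compare against. Your two pointwise bounds, $\frac{S_s - S_{s-1}}{\sqrt{S_s}} \leq 2\left(\sqrt{S_s} - \sqrt{S_{s-1}}\right)$ and $\frac{S_s - S_{s-1}}{S_s} \leq \ln(S_s) - \ln(S_{s-1})$, followed by telescoping, are exactly the standard route (equivalently obtained by comparing each summand with $\int_{S_{s-1}}^{S_s} u^{-1/2}\,\mathrm{d}u$ and $\int_{S_{s-1}}^{S_s} u^{-1}\,\mathrm{d}u$, as you observe), and your handling of the degenerate cases --- positivity of the denominators via $S_{s-1} \geq S_{t_0-1} > 0$ for $s \geq t_0$, and vanishing summands --- is complete.
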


Lemma \ref{lem:summation_ratio_w_and_empirical_counts} controls the summation of ratios $\frac{\tilde{w}_{s,a}}{\sqrt{\tilde{N}_{s,a}}}$ and $\frac{\tilde{w}_{s,a}}{\sqrt{\tilde{N}_{s-1,a}}}$ over arms and time. This is an adaptation of Lemma 9 in \citet{degenne_non-asymptotic_2019} to our setting.

\begin{lemma} \label{lem:summation_ratio_w_and_empirical_counts}
    Let $(\tilde{w}_s)_{s \in \mathbb{N}} \in \tsimplex^{\mathbb{N}}$ and $\tilde{N}_t$ obtained with sparse C-Tracking. Then,
    \begin{align*}
        \sum_{a \in [d]} \sum_{s=n_{0}}^{t} \frac{\tilde{w}_{s,a}}{\sqrt{\tilde{N}_{s,a}}} \leq d  |\mathcal{A}|^2 + 2\sqrt{d t K} \quad &\text{ and } \quad \sum_{a \in [d]} \sum_{s=n_{0}+1}^{t} \frac{\tilde{w}_{s,a}}{\sqrt{\tilde{N}_{s-1,a}}} \leq d  |\mathcal{A}|^2 + 2\sqrt{2 d t K} \\
        \sum_{a \in [d]} \sum_{s=n_{0}}^{t} \frac{\tilde{w}_{s,a}}{\tilde{N}_{s,a}} \leq d |\mathcal{A}|^2 + d \ln \left( \frac{t K}{d} \right) \quad &\text{ and } \quad  \sum_{a \in [d]} \sum_{s=n_{0}+1}^{t} \frac{\tilde{w}_{s,a}}{\tilde{N}_{s-1,a}} \leq d |\mathcal{A}|^2 + 2d \ln \left( \frac{t K}{d} \right)
    \end{align*}
\end{lemma}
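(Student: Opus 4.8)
The plan is to reduce all four bounds to a per-arm estimate and then aggregate over $a \in [d]$. Fix an arm $a$, write $S_{s,a} \defeq \sum_{r=1}^{s} \tilde{w}_{r,a}$ for the \emph{intended} cumulative mass on $a$ and $\tilde{N}_{s,a}$ for the \emph{realized} count, and recall from Lemma \ref{lem:c_tracking_control_empirical_counts} that the deficit $D_{s,a} \defeq S_{s,a} - \tilde{N}_{s,a}$ satisfies $|D_{s,a}| \leq c_a \defeq (|\A|-1)|\A_{|a}|$, while the covering initialization guarantees $\tilde{N}_{s,a} \geq 1$ for all $s \geq n_{0}$. The whole difficulty is that each summand involves the realized counts $\tilde{N}_{s,a}$ rather than $S_{s,a}$, whereas the only clean telescoping tool available (Lemma 8 of \citet{degenne_non-asymptotic_2019}) is phrased in terms of cumulative sums of the numerators.

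To bridge this gap I would split each weight into a count increment and a change of deficit: setting $\Delta_{s,a} \defeq \tilde{N}_{s,a} - \tilde{N}_{s-1,a} \in \{0,1\}$, one checks directly that $\tilde{w}_{s,a} = \Delta_{s,a} + (D_{s,a} - D_{s-1,a})$. The count-increment part telescopes over exactly the rounds where $a$ is pulled, and there $\tilde{N}_{s,a}$ runs through consecutive integers; using $\tfrac{1}{\sqrt{n}} \leq 2(\sqrt{n}-\sqrt{n-1})$ and the harmonic comparison $\sum_{m \leq M} \tfrac{1}{m} \leq 1 + \ln M$ yields $\sum_{s : a \in A_s} \tfrac{1}{\sqrt{\tilde{N}_{s,a}}} \leq 2\sqrt{\tilde{N}_{t,a}}$ and $\sum_{s : a \in A_s} \tfrac{1}{\tilde{N}_{s,a}} \leq 1 + \ln \tilde{N}_{t,a}$. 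The deficit part I would handle by Abel summation: since $s \mapsto \tilde{N}_{s,a}^{-1/2}$ (resp.\ $\tilde{N}_{s,a}^{-1}$) is non-increasing and bounded by $1$, summation by parts bounds this contribution by a constant multiple of $\max_{s}|D_{s,a}| \leq c_a$.

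Aggregating over arms then produces the two advertised leading terms. The key identity is $\sum_{a} \tilde{N}_{t,a} = \sum_{A} |A|\, N_{t,A} \leq K \sum_A N_{t,A} = Kt$: Cauchy--Schwarz gives $\sum_a 2\sqrt{\tilde{N}_{t,a}} \leq 2\sqrt{d \sum_a \tilde{N}_{t,a}} \leq 2\sqrt{dtK}$, and concavity of the logarithm (Jensen) gives $\sum_a \ln \tilde{N}_{t,a} \leq d \ln(tK/d)$. The deficit parts sum to the additive constant $\sum_a O(c_a) \leq d|\A|^2$, which is loosely bounded via $c_a < |\A|^2$. Finally, the $s-1$ versions follow from the $s$ versions through $\tilde{N}_{s-1,a} \geq \tfrac{1}{2}\tilde{N}_{s,a}$ (a consequence of $\tilde{N}_{s,a} \leq \tilde{N}_{s-1,a}+1$ together with $\tilde{N}_{s-1,a} \geq 1$), which costs exactly the factor $\sqrt{2}$, respectively $2$, appearing only in the leading term of those bounds.

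The main obstacle is precisely this count-versus-weight mismatch induced by sparse C-Tracking: isolating it as the telescoping deficit $D_{s,a}$ and showing that it contributes only an $O(c_a)$ \emph{additive} term, rather than a multiplicative distortion of the leading $\sqrt{t}$ and $\ln t$ rates, is where the tracking control of Lemma \ref{lem:c_tracking_control_empirical_counts} is indispensable. The remaining telescoping and aggregation steps are routine, and the precise additive constant $d|\A|^2$ is immaterial since it does not depend on $t$.
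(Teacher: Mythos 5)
Your proof is correct, but it takes a genuinely different route from the paper's. The paper fixes an arm $a$, splits time at the first round $t_{0,a}$ where the cumulative weight $\sum_{s<t_{0,a}}\tilde{w}_{s,a}$ exceeds $(|\A|-1)|\A_{|a}|+1$, bounds the early rounds trivially by $\tilde{w}_{s,a}$ (using $\tilde{N}_{s,a}\geq 1$), and on the late rounds replaces $\tilde{N}_{s,a}$ by $\sum_{r\leq s}\tilde{w}_{r,a}-(|\A|-1)|\A_{|a}|$ via the tracking lemma so that Lemma 8 of \citet{degenne_non-asymptotic_2019} (the $\sum_s x_s/\sqrt{\sum_{r\leq s}x_r}$ comparison) applies with $x_s=\tilde{w}_{s,a}$; aggregation then uses concavity applied to the cumulative \emph{weights}. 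You instead work with the realized \emph{counts} throughout: the identity $\tilde{w}_{s,a}=\Delta_{s,a}+(D_{s,a}-D_{s-1,a})$ lets you telescope the count part over consecutive integers (no external lemma, no data-dependent time split) and isolate the tracking error as an Abel sum against the monotone sequence $\tilde{N}_{s,a}^{-1/2}$, which the two-sided tracking control makes an additive $O\bigl((|\A|-1)|\A_{|a}|\bigr)$ term; aggregation is Cauchy--Schwarz/Jensen with $\sum_a\tilde{N}_{t,a}\leq tK$. Your version is more self-contained and elementary; the paper's version recovers the stated constant $d|\A|^2$ directly, whereas your summation by parts, bounded crudely as ``a constant multiple of $c_a$'' (three boundary/variation terms), yields roughly $3d|\A|^2$ --- to hit the advertised constant you would need the asymmetric bounds $-|\A_{|a}|\leq D_{s,a}\leq(|\A|-1)|\A_{|a}|$ separately on the positively and negatively weighted terms, which brings the deficit contribution down to $|\A|\,|\A_{|a}|$ per arm. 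This looseness, as you note, is immaterial for every downstream use of the lemma. Two small points to tidy up: the boundary term $D_{n_{0}-1,a}$ in the Abel sum is not covered by Lemma \ref{lem:c_tracking_control_empirical_counts} as stated (start the summation by parts at $s=n_{0}+1$ and bound the $s=n_{0}$ term by $\tilde{w}_{n_{0},a}\leq 1$, or note the deficit vanishes during initialization); and to keep the factor $\sqrt{2}$ (resp.\ $2$) off the additive constant in the shifted sums, run your decomposition directly with $g_s=\tilde{N}_{s-1,a}^{-1/2}$ --- which is still monotone and bounded by $1$ for $s\geq n_{0}+1$ --- rather than applying $\tilde{N}_{s-1,a}\geq\tfrac{1}{2}\tilde{N}_{s,a}$ to every term, exactly as the paper applies that comparison only to its late-round sum.
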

\begin{proof}
First, let's prove inequalities 1 and 3. Let $a \in [d]$ and $t_{0,a}$ be the first time such that: $\sum_{s=1}^{t_{0,a}-1} \tilde{w}_{s,a} > (|\mathcal{A}| - 1)|\mathcal{A}_{|a}| + 1$. Since $\tilde{w}_{t_{0,a} - 1,a} \leq 1$, we have $\sum_{s=1}^{t_{0,a}-1} \tilde{w}_{s,a} \leq (|\mathcal{A}| - 1)|\mathcal{A}_{|a}| + 2$. Since $\tilde{N}_{s,a} \geq 1$ for $s \geq n_{0}$, we obtain:
\begin{align*}
    \sum_{s=n_{0}}^{t} \frac{\tilde{w}_{s,a}}{\sqrt{\tilde{N}_{s,a}}} = \sum_{s=n_{0}}^{t_{0,a} - 1} \frac{\tilde{w}_{s,a}}{\sqrt{\tilde{N}_{s,a}}} + \sum_{s=t_{0,a}}^{t} \frac{\tilde{w}_{s,a}}{\sqrt{\tilde{N}_{s,a}}} &\leq \sum_{s=n_{0}}^{t_{0,a} - 1} \tilde{w}_{s,a} + \sum_{s=t_{0,a}}^{t} \frac{\tilde{w}_{s,a}}{\sqrt{\tilde{N}_{s,a}}} \\&\leq (|\mathcal{A}| - 1)|\mathcal{A}_{|a}| + 2 + \sum_{s=t_{0,a}}^{t} \frac{\tilde{w}_{s,a}}{\sqrt{\tilde{N}_{s,a}}} \\
    \sum_{s=n_{0}}^{t} \frac{\tilde{w}_{s,a}}{\tilde{N}_{s,a}} = \sum_{s=n_{0}}^{t_{0,a} - 1} \frac{\tilde{w}_{s,a}}{\tilde{N}_{s,a}} + \sum_{s=t_{0,a}}^{t} \frac{\tilde{w}_{s,a}}{\tilde{N}_{s,a}} &\leq \sum_{s=n_{0}}^{t_{0,a} - 1} \tilde{w}_{s,a} + \sum_{s=t_{0,a}}^{t} \frac{\tilde{w}_{s,a}}{\tilde{N}_{s,a}} \\
    &\leq (|\mathcal{A}| - 1)|\mathcal{A}_{|a}| + 2 + \sum_{s=t_{0,a}}^{t} \frac{\tilde{w}_{s,a}}{\tilde{N}_{s,a}}
\end{align*}

Combining Lemma \ref{lem:c_tracking_control_empirical_counts} and Lemma 8 in \citet{degenne_non-asymptotic_2019} for $x_s = \tilde{w}_{s,a}$, we obtain:
\begin{align*}
    \sum_{s=t_{0,a}}^{t} \frac{\tilde{w}_{s,a}}{\sqrt{\tilde{N}_{s,a}}}  &\leq \sum_{s=t_{0,a}}^{t} \frac{\tilde{w}_{s,a}}{\sqrt{\sum_{r=1}^{s} \tilde{w}_{r,a} -  (|\mathcal{A}| - 1)|\mathcal{A}_{|a}|}} \\
    &\leq 2 \sqrt{ \sum_{s = 1}^{t} \tilde{w}_{s,a} -  (|\mathcal{A}| - 1)|\mathcal{A}_{|a}| } - 2 \sqrt{ \sum_{s = 1}^{t_{0,a} - 1} \tilde{w}_{s,a} -  (|\mathcal{A}| - 1)|\mathcal{A}_{|a}| } \\    
    \sum_{s=t_{0,a}}^{t} \frac{\tilde{w}_{s,a}}{\tilde{N}_{s,a}} &\leq \sum_{s=t_{0,a}}^{t} \frac{\tilde{w}_{s,a}}{\sum_{r=1}^{s} \tilde{w}_{r,a} -  (|\mathcal{A}| - 1)|\mathcal{A}_{|a}|}\\
    &\leq \ln \left( \sum_{s = 1}^{t} \tilde{w}_{s,a} -  (|\mathcal{A}| - 1)|\mathcal{A}_{|a}| \right) - \ln \left( \sum_{s = 1}^{t_{0,a} - 1} \tilde{w}_{s,a} -  (|\mathcal{A}| - 1)|\mathcal{A}_{|a}| \right)
\end{align*}

Using that $\sum_{s=1}^{t_{0,a}-1} \tilde{w}_{s,a} -  (|\mathcal{A}| - 1)|\mathcal{A}_{|a}| > 1$, we obtain: $\sum_{s=t_{0,a}}^{t} \frac{\tilde{w}_{s,a}}{\sqrt{\tilde{N}_{s,a}}} \leq 2 \sqrt{ \sum_{s = 1}^{t} \tilde{w}_{s,a}}$ and $\sum_{s=t_{0,a}}^{t} \frac{\tilde{w}_{s,a}}{\tilde{N}_{s,a}} \leq \ln \left(  \sum_{s = 1}^{t} \tilde{w}_{s,a}\right)$. Combining the concavity of $x \mapsto \sqrt{x}$ and $x \mapsto \ln(x)$ and $\sum_{a \in [d]}\sum_{s = 1}^{t} \tilde{w}_{s,a} \leq t K$ yield by summation:
\begin{align*}
    \sum_{a \in [d]} \sum_{s=t_{0,a}}^{t} \frac{\tilde{w}_{s,a}}{\sqrt{\tilde{N}_{s,a}}}  &\leq 2 \sum_{a \in [d]} \sqrt{ \sum_{s = 1}^{t} \tilde{w}_{s,a}}  \leq 2 \sqrt{d t K}\\
    \sum_{a \in [d]} \sum_{s=t_{0,a}}^{t} \frac{\tilde{w}_{s,a}}{\tilde{N}_{s,a}} &\leq \sum_{a \in [d]} \ln \left(  \sum_{s = 1}^{t} \tilde{w}_{s,a}\right) \leq d \ln \left( \frac{t K}{d} \right)
\end{align*}

Therefore, we obtain: $\sum_{a \in [d]} \sum_{s=n_{0}}^{t} \frac{\tilde{w}_{s,a}}{\sqrt{\tilde{N}_{s,a}}} \leq d  |\mathcal{A}|^2 + 2\sqrt{d t K}$ and $\sum_{a \in [d]} \sum_{s=n_{0}}^{t} \frac{\tilde{w}_{s,a}}{\tilde{N}_{s,a}} \leq d  |\mathcal{A}|^2 + d \ln \left( \frac{t K}{d}\right) $. For all $s \geq n_{0}$, we have $N_{s-1,a} \geq 1$, hence $N_{s-1,a} \geq \frac{1}{2}N_{s,a}$. Plugging this inequality in the sum starting from $t_{0,a}$ yields: $\sum_{a \in [d]} \sum_{s=n_{0}+1}^{t} \frac{\tilde{w}_{s,a}}{\sqrt{\tilde{N}_{s-1,a}}} \leq d  |\mathcal{A}|^2 + 2\sqrt{2 d t K}$ and $\sum_{a \in [d]} \sum_{s=n_{0}+1}^{t} \frac{\tilde{w}_{s,a}}{\tilde{N}_{s-1,a}} \leq d |\mathcal{A}|^2 + 2d \ln \left( \frac{t K}{d} \right)$. 
\end{proof}

\subsubsection{Concentration Arguments}  \label{concentration_results_technicalities}

Let $t_{b} = t^{1/(1+b)} < t$, $L_{\M}$ the Lipschitz constant of $x \mapsto d_{KL}(x, y)$ ($\M$ bounded). The sequence of concentrations events $(\mathcal{E}_t)_{t \geq 1}$ for the exploration bonus $f$ with parameters $c>0$ and $b>0$ was defined as:
\begin{align*}
    \mathcal{E}_t &\defeq \left\{\forall s \leq t, \forall a \in [d], \quad \tilde{N}_{s,a} d_{\text{KL}}(\mu_{s,a}, \mu_a) \leq f\left(t_b\right)\right\}
\end{align*}

Lemma 14 in \citet{degenne_non-asymptotic_2019} controls the deviation $\left|d_{\text{KL}}(\mu_{s-1,a}, \lambda_{a}) - d_{\text{KL}}(\mu_{a}, \lambda_{a})\right|$. Its proof is similar to the beginning of the proof of Lemma \ref{lem:concentration_events_summed_kl_diff_upper_bound}.

\begin{lemma*}[Lemma 14 in \citet{degenne_non-asymptotic_2019}] \label{lem:concentration_events_kl_diff_upper_bound}
    Let $\M$ bounded. Under $\mathcal{E}_t$, for all $s \in [t]$, $a \in [d]$ any $\lambda \in \mathcal{M}$,
    \begin{equation*}
        \left|d_{\text{KL}}(\mu_{s-1,a}, \lambda_{a}) - d_{\text{KL}}(\mu_{a}, \lambda_{a})\right| \leq L_{\M} \sqrt{2\|\sigma^2\|_{\infty} \frac{f(t)}{\tilde{N}_{s-1,a}}}
    \end{equation*}
\end{lemma*}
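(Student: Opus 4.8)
The plan is to split the claim into a deterministic Lipschitz estimate and an event-conditioned control of the scalar deviation $|\mu_{s-1,a} - \mu_a|$. Because $\M$ is bounded, the map $x \mapsto d_{\text{KL}}(x, y)$ is $L_{\M}$-Lipschitz, uniformly over $y \in \M$, so the first step is to reduce the difference of KL divergences to a distance between means:
\begin{equation*}
    \left|d_{\text{KL}}(\mu_{s-1,a}, \lambda_{a}) - d_{\text{KL}}(\mu_{a}, \lambda_{a})\right| \leq L_{\M}\, |\mu_{s-1,a} - \mu_a|.
\end{equation*}
Since the constant $L_{\M}$ does not depend on $\lambda_a$, this holds simultaneously for every $\lambda \in \M$, which is exactly the uniformity required by the statement. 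It then remains to bound $|\mu_{s-1,a} - \mu_a|$ on the event $\mathcal{E}_t$.

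For the second step I would use the sub-Gaussian comparison between KL divergence and squared distance, valid for both (a) and (b) by the characterization recalled in the preliminaries, namely $d_{\text{KL}}(\mu_{s-1,a}, \mu_a) \geq (\mu_{s-1,a} - \mu_a)^2/(2\sigma_a^2)$. On $\mathcal{E}_t$ the defining inequality at index $s-1 \leq t$ gives $\tilde{N}_{s-1,a}\, d_{\text{KL}}(\mu_{s-1,a}, \mu_a) \leq f(t_b)$, and monotonicity of $f$ together with $t_b = t^{1/(1+b)} < t$ upgrades this to $f(t_b) \leq f(t)$. Chaining these,
\begin{equation*}
    \frac{(\mu_{s-1,a} - \mu_a)^2}{2\sigma_a^2} \leq d_{\text{KL}}(\mu_{s-1,a}, \mu_a) \leq \frac{f(t)}{\tilde{N}_{s-1,a}},
\end{equation*}
whence $|\mu_{s-1,a} - \mu_a| \leq \sqrt{2\sigma_a^2 f(t)/\tilde{N}_{s-1,a}} \leq \sqrt{2\|\sigma^2\|_{\infty} f(t)/\tilde{N}_{s-1,a}}$. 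Substituting into the Lipschitz bound yields precisely the claimed inequality.

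The main obstacle is making the Lipschitz step airtight when $\mu_{s-1,a}$ need not lie in $\M$: $L_{\M}$ is a Lipschitz constant on the bounded set $\M$, but the empirical mean can fall slightly outside it. The clean fix is to read $L_{\M}$ as the Lipschitz constant of $d_{\text{KL}}(\cdot, y)$ on a fixed compact enlargement of $\M$; under $\mathcal{E}_t$ the bound $|\mu_{s-1,a} - \mu_a| = O(\sqrt{f(t)/\tilde{N}_{s-1,a}})$ already confines $\mu_{s-1,a}$ to an $O(1)$-neighborhood of $\M$ for all $s \leq t$, so a single enlargement works throughout. One should also note the boundary case: the bound is only meaningful once $\tilde{N}_{s-1,a} \geq 1$, which holds for $s \geq n_{0} + 1$ since a covering (or full) initialization observes each arm at least once.
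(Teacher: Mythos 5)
Your proof is correct and follows essentially the same route as the paper: the paper does not prove this cited lemma separately but notes its proof mirrors the beginning of its proof of the subsequent summed-deviation lemma, which uses exactly your chain of Lipschitz continuity of $x \mapsto d_{\text{KL}}(x,y)$, the sub-Gaussian inequality $|\mu_{s-1,a} - \mu_a| \leq \sqrt{2\sigma_a^2 d_{\text{KL}}(\mu_{s-1,a},\mu_a)}$, the concentration event, and monotonicity of $f$ with $\sigma_a^2 \leq \|\sigma^2\|_{\infty}$. Your additional remark about $\mu_{s-1,a}$ possibly lying slightly outside $\M$ (and reading $L_{\M}$ on a compact enlargement) addresses a subtlety the paper glosses over, and is a sound fix.
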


Lemma \ref{lem:concentration_events_summed_kl_diff_upper_bound} controls the weighted sum of deviations, $\langle \tilde{N}_t, d_{\text{KL}}(\mu_t, \lambda) - d_{\text{KL}}(\mu, \lambda) \rangle$. This is an adaptation of Lemma 17 in \citet{degenne_non-asymptotic_2019}.

\begin{lemma} \label{lem:concentration_events_summed_kl_diff_upper_bound}
    Let $\M$ be bounded. Under $\mathcal{E}_t$, for any $\lambda \in \mathcal{M}$,
    \begin{equation*}
        \langle \tilde{N}_t, d_{\text{KL}}(\mu_{t}, \lambda)\rangle \geq \langle \tilde{N}_t, d_{\text{KL}}(\mu, \lambda)\rangle - L_{\M}\sqrt{2 t f(t) \|\sigma^2\|_{\infty} d K}
    \end{equation*}
\end{lemma}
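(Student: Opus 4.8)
The plan is to first establish the pointwise deviation between $d_{\text{KL}}(\mu_{t,a},\lambda_a)$ and $d_{\text{KL}}(\mu_a,\lambda_a)$ for each arm, and then sum this estimate against the counts $\tilde{N}_{t,a}$; the per-arm step is precisely Lemma~14 of \citet{degenne_non-asymptotic_2019} specialised to the current round, which is why that lemma is announced as sharing the beginning of this proof. Since only the lower bound is needed, I will in fact control the absolute value of $\langle \tilde{N}_t, d_{\text{KL}}(\mu_t,\lambda) - d_{\text{KL}}(\mu,\lambda)\rangle$. Because $\M$ is bounded, the map $x \mapsto d_{\text{KL}}(x,\lambda_a)$ is Lipschitz with constant $L_{\M}$, so the first step reduces the per-arm difference to a deviation of the estimator:
\begin{equation*}
\left| d_{\text{KL}}(\mu_{t,a}, \lambda_a) - d_{\text{KL}}(\mu_a, \lambda_a)\right| \leq L_{\M}\, |\mu_{t,a} - \mu_a| .
\end{equation*}

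Next I would bound $|\mu_{t,a} - \mu_a|$ using the sub-Gaussian comparison together with the concentration event. The exponential-family sub-Gaussianity gives $d_{\text{KL}}(\mu_{t,a},\mu_a) \geq (\mu_{t,a}-\mu_a)^2/(2\sigma_a^2)$, hence $|\mu_{t,a}-\mu_a| \leq \sqrt{2\sigma_a^2\, d_{\text{KL}}(\mu_{t,a},\mu_a)}$. Invoking $\mathcal{E}_t$ at $s=t$ and using that $f$ is increasing so that $f(t_b) \leq f(t)$, one has $\tilde{N}_{t,a}\, d_{\text{KL}}(\mu_{t,a},\mu_a) \leq f(t_b) \leq f(t)$, whence $|\mu_{t,a}-\mu_a| \leq \sqrt{2\|\sigma^2\|_\infty f(t)/\tilde{N}_{t,a}}$. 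Combining with the Lipschitz step yields the pointwise bound $|d_{\text{KL}}(\mu_{t,a},\lambda_a) - d_{\text{KL}}(\mu_a,\lambda_a)| \leq L_{\M}\sqrt{2\|\sigma^2\|_\infty f(t)/\tilde{N}_{t,a}}$, matching Lemma~14.

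Finally I would multiply the pointwise bound by $\tilde{N}_{t,a}$ and sum over arms, noting that $\tilde{N}_{t,a}\cdot \tilde{N}_{t,a}^{-1/2} = \sqrt{\tilde{N}_{t,a}}$, to obtain
\begin{equation*}
\left|\langle \tilde{N}_t, d_{\text{KL}}(\mu_t,\lambda) - d_{\text{KL}}(\mu,\lambda)\rangle\right| \leq L_{\M}\sqrt{2\|\sigma^2\|_\infty f(t)}\sum_{a\in[d]}\sqrt{\tilde{N}_{t,a}} .
\end{equation*}
An application of Cauchy--Schwarz gives $\sum_a \sqrt{\tilde{N}_{t,a}} \leq \sqrt{d\sum_a \tilde{N}_{t,a}}$, and the combinatorial counting identity $\sum_a \tilde{N}_{t,a} = \sum_{A\in\A} N_{t,A}|A| \leq K\sum_A N_{t,A} = Kt$ bounds the remaining sum by $\sqrt{dKt}$. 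Substituting back recovers the error term $L_{\M}\sqrt{2tf(t)\|\sigma^2\|_\infty d K}$ and, dropping the sign, the stated lower bound.

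\textbf{Main obstacle.} There is no deep difficulty here; the argument is an adaptation of Lemma~17 of \citet{degenne_non-asymptotic_2019}. The one genuinely new feature relative to the multi-armed setting is the factor $K$: since a single action pull can increment up to $K$ arm counts, the total arm mass $\sum_a \tilde{N}_{t,a}$ is bounded by $Kt$ rather than by $t$, which is exactly how $K$ enters the error term. The points requiring care are that the operator $W_{\A}$ is used correctly to relate $\tilde{N}_{t,a}$ to $N_{t,A}$, and that the monotonicity of $f$ is invoked to pass from $f(t_b)$ to $f(t)$.
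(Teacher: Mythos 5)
Your proposal is correct and follows essentially the same route as the paper's proof: Lipschitz continuity of $x \mapsto d_{\text{KL}}(x,\lambda_a)$, the sub-Gaussian (or Gaussian) comparison $|\mu_{t,a}-\mu_a| \leq \sqrt{2\sigma_a^2 d_{\text{KL}}(\mu_{t,a},\mu_a)}$, the concentration event with monotonicity of $f$, then summing against $\tilde{N}_{t,a}$ and bounding $\sum_{a}\sqrt{\tilde{N}_{t,a}} \leq \sqrt{d\sum_a \tilde{N}_{t,a}} \leq \sqrt{dtK}$. Your explicit justification of $\sum_a \tilde{N}_{t,a} = \sum_{A\in\A} N_{t,A}|A| \leq Kt$ is a useful detail the paper leaves implicit, but the argument is otherwise identical.
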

\begin{proof}
Using the Lipschitz property of $x \mapsto d_{\text{KL}}(x,y)$, we have $d_{\text{KL}}(\mu_{t,a}, \lambda_{a}) - d_{\text{KL}}(\mu_{a}, \lambda_{a}) \geq - L_{\M} |\mu_{t,a} - \mu_{a}|$. The sub-Gaussian property when (a) or the direct formula for Gaussian when (b), implies that $|\mu_{t,a} - \mu_{a}| \leq \sqrt{2\sigma_{a}^2d_{\text{KL}}(\mu_{t,a}, \mu_{a})}$. Under $\mathcal{E}_t$, we have $d_{\text{KL}}(\mu_{t,a}, \mu_{a}) \leq \frac{f\left(t_b\right)}{\tilde{N}_{t,a}}$. Combining these inequalities, $f$ increasing and $\sigma_a^{2} \leq \|\sigma^2\|_{\infty}$, we obtain: $d_{\text{KL}}(\mu_{t,a}, \lambda_{a}) - d_{\text{KL}}(\mu_{a}, \lambda_{a}) \geq - L_{\M} \sqrt{2\|\sigma^2\|_{\infty}\frac{f(t)}{\tilde{N}_{t,a}}}$ for all $a \in [d]$. Summing with weights $\tilde{N}_t$ yields:
\begin{equation*}
    \langle \tilde{N}_t, d_{\text{KL}}(\mu_{t}, \lambda)\rangle - \langle \tilde{N}_t, d_{\text{KL}}(\mu, \lambda)\rangle  \geq  - L_{\M}\sqrt{2f(t) \|\sigma^2\|_{\infty}} \sum_{a \in [d]} \sqrt{\tilde{N}_{t,a}}
\end{equation*}

Since $x \mapsto \sqrt{x}$ is concave, we have $\sum_{a \in [d]} \sqrt{\tilde{N}_{t,a}} \leq \sqrt{d \sum_{a \in [d]} \tilde{N}_{t,a}} \leq \sqrt{dtK}$. This concludes the proof: $\langle \tilde{N}_t, d_{\text{KL}}(\mu_{t}, \lambda)\rangle \geq \langle \tilde{N}_t, d_{\text{KL}}(\mu, \lambda)\rangle - L_{\M}\sqrt{2 t f(t) \|\sigma^2\|_{\infty} d K}$.
\end{proof}

Lemma \ref{lem:lemma_15_16_degenne} controls one term of the slack appearing in the saddle-point property, the one linked to $r_s$:  $\sum_{s \geq n_{0}+1}^{t} \langle \tilde{w}_s, C_{s} \rangle$. This is an adaptation of Lemmas 15 and 16 in \citet{degenne_non-asymptotic_2019}.

\begin{lemma} \label{lem:lemma_15_16_degenne}
    Let $\M$ be bounded and 
    \begin{equation*}
        D_{s,a} = \max \left\{ 2L_{\M} \sqrt{2 \sigma_{a}^2 \frac{f(\max\{s-1, t_b\})}{\tilde{N}_{s-1,a}}}, \frac{f(\max\{s-1, t_b\})}{\tilde{N}_{s-1,a}}\right\}
    \end{equation*}
    Under the event $\mathcal{E}_t$, for all $s \in [t]$: $\sup_{\phi \in [\alpha_{s,a}, \beta_{s,a}]} (r_{s,a} - d_{\text{KL}}(\phi,\lambda_{s,a})) \leq D_{s,a}$. Let $C_{s,a} = r_{s,a} - d_{\text{KL}}(\mu_{s-1,a}, \lambda_{s,a})$, we obtain:
    \begin{align*}
        \sum_{s \geq n_{0}+1}^{t} \langle \tilde{w}_s, C_{s} \rangle &\leq  f(t) \left( d |\mathcal{A}|^2 + 2d \ln \left( \frac{t K}{d} \right) \right) + 2L_{\M} \sqrt{2\|\sigma^2\|_{\infty}f(t)}  \left( d  |\mathcal{A}|^2 + 2\sqrt{2 d t K} \right)
\end{align*}
\end{lemma}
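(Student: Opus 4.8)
The plan is to establish the stated pointwise per-arm estimate $\sup_{\phi \in [\alpha_{s,a},\beta_{s,a}]}(r_{s,a} - d_{\text{KL}}(\phi,\lambda_{s,a})) \leq D_{s,a}$ first, deduce $C_{s,a} \leq D_{s,a}$ from it, and then sum the inequality against the nonnegative weights $\tilde{w}_s$ via Lemma \ref{lem:summation_ratio_w_and_empirical_counts}. I would note that this pointwise estimate is in fact pathwise: the event $\mathcal{E}_t$ and the truncation $\max\{s-1,t_b\}$ are carried along only to match the form used in Appendices \ref{appendix_correct_candidate_answer} and \ref{stopping_time_upper_bound}, and $\mathcal{E}_t$ is genuinely needed only for the complementary lower bound $r_{s,a}\geq d_{\text{KL}}(\mu_a,\lambda_{s,a})$, not for this upper direction.

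For the pointwise bound I would recall $r_{s,a} = \max\{f(s-1)/\tilde{N}_{s-1,a},\ \max_{\phi\in\{\alpha_{s,a},\beta_{s,a}\}} d_{\text{KL}}(\phi,\lambda_{s,a})\}$, and that by convexity of $x\mapsto d_{\text{KL}}(x,\lambda_{s,a})$ the maximum over the two endpoints equals $\max_{\phi\in[\alpha_{s,a},\beta_{s,a}]} d_{\text{KL}}(\phi,\lambda_{s,a})$. I then split on which term realizes the maximum defining $r_{s,a}$. If the exploration bonus dominates, then since $d_{\text{KL}}\geq 0$ the supremum is at most $r_{s,a} = f(s-1)/\tilde{N}_{s-1,a} \leq f(\max\{s-1,t_b\})/\tilde{N}_{s-1,a}$ ($f$ increasing), the second term inside $D_{s,a}$. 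If instead the divergence term dominates, the supremum equals the oscillation $\max_\phi d_{\text{KL}}(\phi,\lambda_{s,a}) - \min_\phi d_{\text{KL}}(\phi,\lambda_{s,a})$ of a Lipschitz function over $[\alpha_{s,a},\beta_{s,a}]$, hence is at most $L_{\M}(\beta_{s,a}-\alpha_{s,a})$. To convert the interval width into the $\sqrt{f/\tilde{N}}$ scaling I would use the definition of the confidence box together with the sub-Gaussian inequality $d_{\text{KL}}(\mu_{s-1,a},\phi)\geq (\mu_{s-1,a}-\phi)^2/(2\sigma_a^2)$, giving $|\mu_{s-1,a}-\alpha_{s,a}|,|\mu_{s-1,a}-\beta_{s,a}| \leq \sqrt{2\sigma_a^2 f(s-1)/\tilde{N}_{s-1,a}}$, so $\beta_{s,a}-\alpha_{s,a}\leq 2\sqrt{2\sigma_a^2 f(s-1)/\tilde{N}_{s-1,a}}$, which yields the first term of $D_{s,a}$. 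Since $\mu_{s-1,a}$ always lies in the box (it makes $d_{\text{KL}}$ vanish), evaluating the supremum at $\phi=\mu_{s-1,a}$ gives $C_{s,a}\leq D_{s,a}$.

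For the summation I would bound $\max\{x,y\}\leq x+y$, then $f(\max\{s-1,t_b\})\leq f(t)$ (both arguments are $\leq t$ and $f$ is increasing) and $\sigma_a^2\leq\|\sigma^2\|_\infty$, so that
$$C_{s,a} \leq 2L_{\M}\sqrt{2\|\sigma^2\|_\infty f(t)}\,\tilde{N}_{s-1,a}^{-1/2} + f(t)\,\tilde{N}_{s-1,a}^{-1}.$$
Multiplying by $\tilde{w}_{s,a}\geq 0$, summing over $a\in[d]$ and $s=n_0+1,\dots,t$, and invoking the two shifted-count inequalities of Lemma \ref{lem:summation_ratio_w_and_empirical_counts}, namely $\sum_a\sum_s \tilde{w}_{s,a}/\sqrt{\tilde{N}_{s-1,a}} \leq d|\A|^2 + 2\sqrt{2dtK}$ and $\sum_a\sum_s \tilde{w}_{s,a}/\tilde{N}_{s-1,a} \leq d|\A|^2 + 2d\ln(tK/d)$, reproduces the claimed bound term by term.

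The main obstacle is the divergence-dominated case of the pointwise bound: one must justify reducing the supremum over the interval to the endpoint oscillation via convexity, apply the Lipschitz constant (finite precisely because $\M$ is bounded), and then correctly translate the KL-defined confidence width into the Euclidean width using the sub-Gaussian lower bound, keeping track of the factor $2$ arising from the two endpoints. Everything after the passage $C_{s,a}\leq D_{s,a}$ is routine bookkeeping once Lemma \ref{lem:summation_ratio_w_and_empirical_counts} is available.
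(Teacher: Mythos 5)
Your proof is correct and follows essentially the same route as the paper's: the same case split on which term realizes the maximum in $r_{s,a}$, the same Lipschitz-plus-sub-Gaussian bound on the confidence-interval width yielding $D_{s,a}$, the same specialization at $\phi = \mu_{s-1,a}$ to get $C_{s,a} \leq D_{s,a}$, and the same summation via $\max\{x,y\} \leq x+y$ and Lemma \ref{lem:summation_ratio_w_and_empirical_counts}. Your side observation that the upper bound is pathwise (the event $\mathcal{E}_t$ being needed only for the lower-bound direction $r_{s,a} \geq d_{\text{KL}}(\mu_a,\lambda_{s,a})$ used elsewhere) is accurate and a fair refinement of how the paper phrases the lemma.
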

\begin{proof}
We recall that: $r_{s,a} = \max \left\{ \frac{f(s-1)}{\tilde{N}_{s-1,a}}, \max_{\phi \in \{\alpha_{s,a}, \beta_{s,a}\}} d_{\text{KL}}(\phi, \lambda_{s,a}) \right\}$ for all $a \in [d]$. Assume $r_{s,a} = \frac{f(s-1)}{\tilde{N}_{s-1,a}}$. Since $d_{\text{KL}}$ is positive, $f$ is increasing and $s-1 \leq \max\{s-1,t_b\}$, we have:
\begin{equation*}
	\sup_{\phi \in [\alpha_{s,a}, \beta_{s,a}]} (r_{s,a} - d_{\text{KL}}(\phi,\lambda_{s,a})) \leq \frac{f(s-1)}{\tilde{N}_{s-1,a}} \leq \frac{f(\max\{s-1, t_b\})}{\tilde{N}_{s-1,a}} \leq D_{s,a}
\end{equation*}

Assume $r_{s,a} = d_{\text{KL}}(\phi_{s,a}, \lambda_{s,a})$ where $\phi_{s,a} = \argmax_{\phi \in \{\alpha_{s,a}, \beta_{s,a}\}} d_{\text{KL}}(\phi, \lambda_{s,a})$. By convexity of $x \mapsto d_{\text{KL}}(x,y)$, we have $d_{\text{KL}}(\phi_{s,a}, \lambda_{s,a}) = \max_{\phi \in [\alpha_{s,a}, \beta_{s,a}]} d_{\text{KL}}(\phi, \lambda_{s,a})$. Upper bounding yields: $\sup_{\phi \in [\alpha_{s,a}, \beta_{s,a}]} (r_{s,a} - d_{\text{KL}}(\phi,\lambda_{s,a})) \leq \sup_{\phi, \eta \in [\alpha_{s,a}, \beta_{s,a}]} |d_{\text{KL}}(\eta,\lambda_{s,a}) - d_{\text{KL}}(\phi,\lambda_{s,a})|$. The Lipschitz property of $x \mapsto d_{\text{KL}}(x,y)$ yields: $|d_{\text{KL}}(\eta,\lambda_{s,a}) - d_{\text{KL}}(\phi,\lambda_{s,a})| \leq L_{\M} |\eta - \phi|$. Under event $\mathcal{E}_t$, combining the sub-Gaussian property when (a) or the direct formula for Gaussian when (b) and $f$ increasing, we obtain:
\begin{equation*}
    \sup_{\phi \in [\alpha_{s,a}, \beta_{s,a}]} (r_{s,a} - d_{\text{KL}}(\phi,\lambda_{s,a})) \leq 2 L_{\M} \sqrt{2 \sigma_{a}^2 \frac{f(\max\{s-1, t_b\})}{N_{s-1,a}}}  \leq D_{s,a}
\end{equation*}

For the second part of the lemma, since $\mu_{s-1,a} \in [\alpha_{s,a}, \beta_{s,a}]$, we have $C_{s,a} \leq D_{s,a}$ and $\sum_{s} \langle \tilde{w}_s, C_{s} \rangle \leq \sum_{s} \langle \tilde{w}_s, D_{s} \rangle$. Applying Lemma \ref{lem:summation_ratio_w_and_empirical_counts} twice, we obtain:
\begin{align*}
    \sum_{s \geq n_{0}+1}^{t} \sum_{a \in [d]} \tilde{w}_{s,a} 2L \sqrt{2 \sigma_{a}^2 \frac{f(\max\{s-1, t_b\})}{N_{s-1,a}}}  &\leq 2L_{\M} \sqrt{2\|\sigma^2\|_{\infty}f(t)}\sum_{s \geq n_{0}+1}^{t} \sum_{a \in [d]} \frac{\tilde{w}_{s,a}}{\sqrt{N_{s-1,a}}} \\
    &\leq 2L_{\M} \sqrt{2\|\sigma^2\|_{\infty}f(t)}  \left( d  |\mathcal{A}|^2 + 2\sqrt{2 d t K} \right)   \\
    \sum_{s \geq n_{0}+1}^{t} \sum_{a \in [d]} \tilde{w}_{s,a}\frac{f(\max\{s-1, t_b\})}{N_{s-1,a}} &\leq f(t) \sum_{s \geq n_{0}+1}^{t} \sum_{a \in [d]} \frac{\tilde{w}_{s,a}}{N_{s-1,a}} \\
    &\leq f(t) \left( d |\mathcal{A}|^2 + 2d \ln \left( \frac{t K}{d} \right) \right) 
\end{align*}

Combining $\max(a + b) \leq a + b$ for $D_{s,a}$ and the previous inequalities concludes the proof.
\end{proof}

\section{Unbounded \texorpdfstring{$\M$}- for Gaussian Bandit} \label{appendix_unbounded_setting}

As already discussed in Appendix F of \citet{degenne_non-asymptotic_2019}, the boundedness assumption of $\M$ can be weakened. In particular, for Gaussian bandit where $y \mapsto d_{\text{KL}}(x,y) = \frac{(x-y)^2}{\sigma_{a}^{2}}$ is convex and symmetric, we can remove it completely using concentration events and explicit formulas. 

We sketch the ideas of the required adaptations, the full proof is omitted for the sake of space. The concentration arguments of Appendix \ref{concentration_results_technicalities} are replaced by weaker results: the deviation is controlled for a given $\lambda$, not for an arbitrary $\lambda \in \M$. Similarly, using explicit formulas, we can upper bound the optimistic reward and prove that $\tau_{\delta} < + \infty$. The adaptation is mainly technical and requires to be familiar with the detail of the proof of Theorem \ref{thm:finite_time_upper_bound}.

\paragraph{Bounded $\|\mu_t - \mu\|_{\infty}$} Under event $\mathcal{E}_t$, we have for all $s \leq t$ and all $a \in [d]$, $d_{\text{KL}}(\mu_{s,a}, \mu_a) \leq \frac{f(t)}{\tilde{N}_{s,a}}$. The concentration event yields that $\|\mu_s - \mu\|_{\infty} \leq \sqrt{2\|\sigma^{2}\|_{\infty}\frac{f(t)}{\min_{a \in [d]} \tilde{N}_{s,a}}}$. Hence, $\mu_s$ belongs to a bounded set around $\mu$. When all arms are sampled more than a logarithmic number of time, we have $\lim_{s \rightarrow \infty} \|\mu_s - \mu\|_{\infty} = 0$. 

\paragraph{Explicit formula} Let $I \in \I$, $\tilde{w} \in \tsimplex$ and $\phi \in \Theta_{I}$. Let $\lambda(\phi, I, \tilde{w}) \in \argmin_{\lambda \in \Theta_{I}^{\complement}} \langle \tilde{w}, d_{\text{KL}}(\phi,\lambda)\rangle$ and $\lambda(\phi, I, \tilde{w}, J) \in \argmin_{\lambda \in \Bar{\Theta}_{J}^{I}} \langle \tilde{w}, d_{\text{KL}}(\phi,\lambda)\rangle$. Using $\Theta_{I}^{\complement} = \bigcup_{J \neq I} \Bar{\Theta}_{J}^{I}$, there exists $J(I) \in \I$ such that $\lambda(\phi, I, \tilde{w}) = \lambda(\phi, I, \tilde{w}, J(I))$. Lemma \ref{lem:explicit_solutions_gaussian} proves an explicit formula for $\lambda(\phi, I, \tilde{w}, J)$, which implies an upper bound on $\|\phi - \lambda(\phi, I, \tilde{w})\|_{\infty}$. Let $\phi' \in \Theta_{I}$. When $\|\phi - \phi'\|$ is bounded, applying Lemma \ref{lem:explicit_solutions_gaussian} twice allows to control $\|\lambda(\phi,I,\tilde{w}) - \lambda(\phi',I,\tilde{w})\|_{\infty}$.

\begin{lemma} \label{lem:explicit_solutions_gaussian}
Assume $\M = \R^{d}$. Let $(I, J) \in \mathcal{I}^{2}$, such that $I \neq J$, $\phi \in \R^{d}$ and $\tilde{w} \in \tsimplex$. Let $\lambda(\phi, I, \tilde{w}, J) \in \argmin_{\lambda \in \Bar{\Theta}_{J}^{I}} \langle \tilde{w}, \frac{(\phi -  \lambda)^{2}}{\sigma^2} \rangle$. Then,
\begin{align*}
	\lambda(\phi, I, \tilde{w}, J) &= \begin{cases}
		\phi &\text{if } \phi \in \Bar{\Theta}_{J}^{I} \\
		\phi - (\mu_{a_0} - \alpha_{a_0})\delta_{a_0} &\text{if }a_{0} \in \text{supp}(\tilde{w})^{\complement} \cap I \triangle J \neq \emptyset \\
		\phi - \frac{\langle \1_{J} - \1_{I}, \phi \rangle}{\sum_{\tilde{a} \in I \triangle J} \frac{\sigma_{\tilde{a}}^{2}}{\tilde{w}_{\tilde{a}}}} \left(\frac{\sigma_{a}^{2}}{\tilde{w}_a}\left(\bm{1}_{a \in J} - \bm{1}_{a \in I}\right)\right)_{a \in [d]} &\text{else}
	\end{cases}
\end{align*}
where $\alpha_{a_{0}} = - \frac{(\bm{1}_{J \setminus \{a_{0}\}} - \bm{1}_{I \setminus \{a_{0}\}})^{\intercal} \phi}{\bm{1}_{a_{0} \in J} - \bm{1}_{a_{0} \in I}}$. 
\end{lemma}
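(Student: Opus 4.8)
The plan is to read the minimisation as a projection in a weighted Euclidean (semi-)norm. Write $g(\lambda) \defeq \langle \tilde{w}, (\phi - \lambda)^2/\sigma^2\rangle = \sum_{a \in [d]} \frac{\tilde{w}_a}{\sigma_a^2}(\phi_a - \lambda_a)^2 \geq 0$, which is convex in $\lambda$, and note that $\Bar{\Theta}_J^I = \{\lambda \in \R^d : \langle \1_J - \1_I, \lambda\rangle \geq 0\}$ is a closed half-space; since $I \neq J$ forces $I \triangle J \neq \emptyset$, the normal $\1_J - \1_I$ is nonzero. A minimiser exists and will be exhibited explicitly in each branch. The first case is immediate: if $\langle \1_J - \1_I, \phi\rangle \geq 0$ then $\phi$ itself is feasible and attains $g(\phi) = 0$, the global minimum, hence $\lambda(\phi, I, \tilde{w}, J) = \phi$.

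The interesting regime is $\langle \1_J - \1_I, \phi\rangle < 0$. Here I would first dispose of the degenerate case in which some $a_0 \in \text{supp}(\tilde{w})^{\complement} \cap I \triangle J$ exists. Because $\tilde{w}_{a_0} = 0$, the objective $g$ does not depend on $\lambda_{a_0}$, so freezing $\lambda_a = \phi_a$ for all $a \neq a_0$ already gives $g(\lambda) = 0$; it then suffices to pick $\lambda_{a_0} = \alpha_{a_0}$ to enforce feasibility with equality, $\langle \1_J - \1_I, \lambda\rangle = 0$. Solving this scalar equation for $\lambda_{a_0}$ with the other coordinates frozen at $\phi$ yields exactly the stated $\alpha_{a_0} = - (\1_{J \setminus \{a_0\}} - \1_{I \setminus \{a_0\}})^{\intercal}\phi / (\1_{a_0 \in J} - \1_{a_0 \in I})$, and the resulting $\lambda = \phi - (\phi_{a_0} - \alpha_{a_0})\delta_{a_0}$ is both feasible and a global minimiser of $g$.

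In the remaining case $\tilde{w}_a > 0$ for every $a \in I \triangle J$ while $\phi$ is infeasible, so I would solve the equality-constrained projection onto the hyperplane $\langle \1_J - \1_I, \lambda\rangle = 0$ through the KKT conditions, which are sufficient by convexity of $g$ and linearity of the constraint. Stationarity forces $\lambda_a - \phi_a = \eta \frac{\sigma_a^2}{\tilde{w}_a}(\1_{a \in J} - \1_{a \in I})$ (vanishing off $I \triangle J$, since there the constraint gradient vanishes), and substituting into the active constraint determines $\eta = -\langle \1_J - \1_I, \phi\rangle / \sum_{\tilde{a} \in I \triangle J}\frac{\sigma_{\tilde{a}}^2}{\tilde{w}_{\tilde{a}}}$. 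The denominator is finite and strictly positive precisely because all these weights are nonzero, and $\eta > 0$ because the numerator is negative, which confirms dual feasibility; this reproduces the third branch of the formula.

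I expect the main obstacle to be the bookkeeping around the degenerate weights rather than the projection itself. Since $\tsimplex$ is a polytope, coordinates of $\tilde{w}$ can vanish, the objective is then only a semi-norm, the minimiser is non-unique, and the clean hyperplane-projection formula of the third branch becomes ill-defined (its denominator would diverge). The delicate points are to show that a single zero-weight coordinate inside $I \triangle J$ permits meeting the constraint at zero cost, to select the specific representative $\alpha_{a_0}$, and to check that this case genuinely precedes the third so that the branches are exhaustive and the denominator appearing there is always well-defined.
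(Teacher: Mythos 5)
Your proposal is correct and follows essentially the same route as the paper, whose proof is only a one-line sketch invoking the half-space description $\Bar{\Theta}_{J}^{I} = \{\lambda \in \R^{d}: \langle \1_{J} - \1_{I}, \lambda \rangle \geq 0\}$ together with the KKT conditions; you simply carry that argument out in full, including the zero-cost handling of the degenerate branch where $\text{supp}(\tilde{w})^{\complement} \cap I \triangle J \neq \emptyset$ and the positivity of the multiplier in the projection branch. Note also that your reading $\lambda = \phi - (\phi_{a_0} - \alpha_{a_0})\delta_{a_0}$ is the intended one (the $\mu_{a_0}$ in the lemma's second branch is evidently a typo for $\phi_{a_0}$).
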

\begin{proof}
The proof uses the fact that $\Bar{\Theta}_{J}^{I} = \{\lambda \in \R^{d}: \langle \1_{J} - \1_{I}, \lambda \rangle \geq 0\}$ and the KKT conditions.
\end{proof}

\paragraph{Adapted Lemma \ref{lem:concentration_events_summed_kl_diff_upper_bound}} This lemma is used in Appendix \ref{appendix_correct_candidate_answer} when $I^*(\mu) = I_{t'}$. We apply Lemma \ref{lem:explicit_solutions_gaussian} twice, for $\lambda(\mu_{t'-1}, I^*(\mu), \frac{\tilde{N}_{t'-1}}{t'-1})$ and $\lambda(\mu,I^*(\mu),\frac{\tilde{N}_{t'-1}}{t'-1})$ and we use that $\|\mu_{t'-1} - \mu\|_{\infty} \leq  \sqrt{2\|\sigma^{2}\|_{\infty} f(t)}$ by concentration. Therefore, we can control $\|\lambda(\mu_{t'-1}, I^*(\mu), \frac{\tilde{N}_{t'-1}}{t'-1}) - \lambda(\mu, I^*(\mu), \frac{\tilde{N}_{t'-1}}{t'-1})\|_{\infty}$.

\paragraph{Adapted Lemma \ref{lem:lemma_15_16_degenne}} This lemma is used in Appendix \ref{stopping_time_upper_bound}. Using the closed-form formula for $r_{s,a}$ in Lemma \ref{lem:gaussian_optimistic_reward}, we obtain: for all $s\leq t$ and all $a \in [d]$, $C_{s,a} = \frac{f(s-1)}{\tilde{N}_{s-1,a}} + \sqrt{\frac{2f(s-1)}{\sigma_{a}^2\tilde{N}_{s-1,a}}}|\mu_{s-1,a} - \lambda_{s,a}|$. Using Lemma \ref{lem:explicit_solutions_gaussian} for $\lambda_s  = \lambda(\mu_{s-1}, I_s, \tilde{w}_s)$, we can control $\|\mu_{s-1} - \lambda_{s}\|_{\infty}$.

\paragraph{Adapted Lemma 14 in \citet{degenne_non-asymptotic_2019}} This lemma is used in Appendix \ref{appendix_correct_candidate_answer}. Applying Lemma \ref{lem:explicit_solutions_gaussian} for $\lambda(\mu, I^*, \sum_{s = 1}^{t'-1} \tilde{w}_{s} )$ and using that $\|\mu_s - \mu\|_{\infty} \leq \sqrt{2\|\sigma^{2}\|_{\infty} f(t)}$ for all $s \leq t$, by concentration, we can control $\|d_{\text{KL}}(\mu, \lambda(\mu, I^*, \sum_{s = 1}^{t'-1} \tilde{w}_{s} )) - d_{\text{KL}}(\mu_{s-1}, \lambda(\mu, I^*, \sum_{s = 1}^{t'-1} \tilde{w}_{s} ))\|_{\infty}$. 

\paragraph{Adapted Lemma \ref{lem:loss_inf_norm_upper_lower_bound}} This Lemma is used in Appendix \ref{appendix_learner_cumulative_regret}. The closed-form formula for $r_{t,a}$ in Lemma \ref{lem:gaussian_optimistic_reward} is: $r_{t,a} = \frac{(\mu_{t-1,a} - \lambda_{t,a})^{2}}{2\sigma_{a}^2} + \frac{f(t-1)}{\tilde{N}_{t-1,a}} + \sqrt{\frac{2f(t-1)}{\sigma_{a}^2\tilde{N}_{t-1,a}}}|\mu_{t-1,a} - \lambda_{t,a}|$ for all $a \in [d]$.  Using Lemma \ref{lem:explicit_solutions_gaussian} for $\lambda_s  = \lambda(\mu_{s-1}, I_s, \tilde{w}_s)$, we obtain an upper bound on $\|r_t\|_{\infty}$, which will be used to bound $R_{t}^A$.

\paragraph{Adapted proof of $\tau_{\delta} < + \infty$} We use this result in the proof of Theorem \ref{thm:delta_pac} (Appendix \ref{appendix_proof_delta_pac}). Applying Lemma \ref{lem:explicit_solutions_gaussian} for $\lambda(\mu_{t-1}, I^*(\mu), \frac{\tilde{N}_{t-1}}{t-1})$ and using $\lim_{\infty}\langle \frac{\tilde{N}_{t-1}}{t-1}, d_{\text{KL}}(\mu_{t-1}, \lambda(\mu_{t-1}, I^*(\mu), \frac{\tilde{N}_{t-1}}{t-1})) \rangle$, we can conclude similarly.

\section{Implementation Details}  \label{appendix_implementation_details}

\paragraph{D-Tracking} D-Tracking tracks $w_{t}$ instead of $\sum_{s=1}^{t} w_s$ \citep{garivier_optimal_2016}. It can be used instead of C-Tracking. Sparse D-Tracking is defined as: $A_t \in \argmin_{A \in B_t} \frac{N_{t-1,A}}{w_{t,A}}$ where $B_t = \text{supp}(w_t)$. D-Tracking has been shown to empirically outperform C-Tracking \citep{degenne_non-asymptotic_2019, garivier_optimal_2016}. In our experiments, C-Tracking and D-Tracking have similar results, up to a few percent. Therefore, we omit C-Tracking from the graphs. 

In Appendix C of \citet{degenne_pure_2019}, the reason why D-Tracking might fail to converge is discussed. It stems from the fact that D-Tracking does not in general converge to the convex hull of the points it tracks. Due to the non-uniqueness of the optimal allocations, D-Tracking might also fail in our setting. For linear bandits \citet{degenne_gamification_2020} showed that D-Tracking is licensed theoretically in order to obtain asymptotically optimal algorithms. In lights of those facts, whether D-Tracking is theoretically validated in our setting remains open.

\paragraph{One learner} As in \citet{degenne_non-asymptotic_2019}, we consider only one learner $\A^{A}$ instead of partitioning the rounds according to the candidate answer $I_t$. Experimentally, the results when considering $|\I|$ learners are always within a few percent of the one learner implementation. Therefore, we omit them from the graphs.

When considering $|\I|$ learners, one might ask what is the number of called learners before stopping. Since a learner is not used until its corresponding answer is the candidate answer, we expect this number to be small in comparison to $|\I|$. Our experiments validate this intuition: the used learners are the one for $I^*$ and the ones for the most confusing alternatives. Considering a similar game-inspired algorithm, \citet{tirinzoni2020asymptotically} present a rigorous reason for using only one learner instead of $|\I|$ different ones.

\paragraph{Stylized stopping threshold and exploration bonus} As in \citet{degenne_non-asymptotic_2019}, we use stylized stopping threshold $\beta(t, \delta) = \ln \left( \frac{1 + \ln(t)}{\delta}\right)$ and exploration bonus $f(t) = \ln(t)$ instead of the ones licensed by the theory. Despite being unlicensed yet, they are both empirically conservative since the empirical error rate is order of magnitude lower than the theoretical confidence error $\delta$.

\paragraph{Sparsification} As shown in Table \ref{tab:comparison_learners}, the computational complexity of both OFW and LLOO can become a hurdle when $|B_t| \gg d$. This problem was mentioned and tackled in \citet{garber_linearly_2015}. To circumvent it, we use an offline sparsification procedure to obtain an approximation $\tilde{w}_{t,0}$ of $\tilde{w}_{t}$ with sparse support. 

Let $\tilde{w}_{t,0}$ be the approximate solution to the optimization problem $\min_{y \in \text{Im}(W_{\mathcal{A}})} \|y - \tilde{w}_{t}\|_{2}^{2}$ obtained thanks to Algorithm 2 in \citet{garber_linearly_2015}, up to precision $r^2$. By Theorem 2 in \citet{garber_linearly_2015}, this offline smooth and strongly convex optimization algorithm satisfies: $\|x_{s+1} - \tilde{w}_{t}\|_{2}^{2} \leq C \exp \left( - \frac{1}{4 \rho^2}s \right)$. The algorithm maintains a representation $w_{t,0} \in \simplex$. 

When $|B_t| \gg d$, we solve this optimization and use $(\tilde{w}_{t,0}, w_{t,0})$ instead of $(\tilde{w}_{t}, w_{t})$. The parameters of LLOO are modified accordingly to Lemma 10 in \citet{garber_linearly_2015}.

\paragraph{Doubling trick} The horizon $T$ corresponds to the stopping time $\tau_{\delta}$ which is unknown. Therefore, we need to convert the non-anytime learners, Hedge and LLOO, into anytime learners. The geometric doubling trick \citep{besson_what_2018} can be used for that purpose. It preserves the minimax bounds in $R_{t} = O(\sqrt{t})$. In our experiments, we use the geometric doubling trick sequence $\left(\lfloor T_{0} b^{i} \rfloor\right)_{i \in \N}$ where $T_{0} = 200$ and $b = \frac{3 +\sqrt{5}}{2}$ as advocated in \citet{besson_what_2018}.

\paragraph{Covering initialization} When considering a covering initialization, the sole requirement is to observe each arm at least once. Due to the combinatorial nature of the problem, numerous combinations of actions are valid initialization. Since our algorithms on the transformed simplex have a computational cost which is sensitive to $|B_{n_t}|$, we will consider an initialization such that the number of actions $n_{0}$ required to observe all arms is the smallest. When numerous choices achieve lowest $n_{0}$, we choose one arbitrarily. Alternatively one could sample randomly the actions without replacement till observing each arm at least once. This random covering initialization often damages simultaneously the sample complexity and the computational cost.

\paragraph{LLOO's parameters} We recall here the definitions of the geometric parameters for the polytope $\tsimplex$ used in \citet{garber_linearly_2015}. The diameter of $\tsimplex$ is $\diamtsimplex \defeq \max_{x,y \in \tsimplex} \| x - y \|_{2}$. The parameter $\mu_{\A}$ is defined as $\mu_{\A} \defeq \frac{\psi_{\A} \diamtsimplex}{\phi_{\A}}$ where $\psi_{\A}$ and $\phi_{\A}$ are also geometric parameters. A convex polytope admits a description with linear inequalities, $\tsimplex = \{x \in \mathbb{R}^d: A_1 x=b_1 \land  A_2 x \leq b_2\}$. $\phi_{\A}$ is defined as $\phi_{\A} \defeq \min_{A \in \A} \left\{ \min\{ b_{2}(j) - \langle A_{2}(j), \bm{1}_{A} \rangle : j \in [m],  b_{2}(j) > \langle A_{2}(j), \bm{1}_{A} \rangle \}\right\}$. It measures the deviation from equality constraints. $\psi_{\A}$ is defined as $\psi_{\A} \defeq \max_{M \in \mathbb{A}_{\A}}\|M\|$, where $\|.\|$ is the spectral norm, $r(A_2)$ is the row rank of $A_2$ and $\mathbb{A}_{\A}$ is the set of $r(A_2) \times d$ matrices whose rows are linearly independent vectors chosen from the rows of $A_2$. Computing $\psi_{\A}$ is computationally expensive for high dimensional polytope. In such case we use an approximate $\psi_{\A}$, computed with a greedy algorithm. The parameter $\mu_{\A}$ is invariant to translation, rotation and scaling.

\paragraph{GCB-PE} In the concurrent work of \citet{chen_combinatorial_2020}, GCB-PE aims at solving the best-action problem for partial linear feedback. \citet{chen_combinatorial_2020} use a different notion of sample complexity, which is defined as a time $T$ such that with probability $1 - \delta$, the algorithm returns the correct answer before time $T$. In our work, the sample complexity is the expected stopping time of the algorithm, which is required to be correct with probability $1 - \delta$.

The correspondence between our notations and theirs is: $M_{x} = S_{A} \defeq \left( \1_{(\tilde{a} = a)}\right)_{\tilde{a} \in A, a \in [d]}$, $\Bar{r}(I, \theta) = \langle \bm{1}_{I}, \theta \rangle$, $L_{p}= \sqrt{\max_{I \in \mathcal{I}}|I|}$. Since our experiments consider BAI with semi-bandit feedback, we need to adapt the Algorithm 1 of \citet{chen_combinatorial_2020}. The sole modification is to consider $\hat{I} = \argmax_{I \in \mathcal{I}} \Bar{r}(I, \hat{\theta}(n))$ and $\hat{I}^{-} = \argmax_{I \in \mathcal{I} \setminus \{\hat{I}\}} \Bar{r}(I, \hat{\theta}(n))$ instead of $\hat{A}$ and $\hat{A}^{-}$. 

The computational complexity of GCB-PE is sensitive to the choice of the global observer set. This choice corresponds to the random covering initialization in our setting, $\sigma = B_{n_{0}}$. Based on $\sigma$, they define a constant $\beta_{\sigma}$ which is used for the stopping rule. Unfortunately, $\beta_{\sigma}$ is the solution of the following NP-hard binary quadratic program:
\begin{align*}
		\beta_{\sigma}^2 = \max_{(\eta_i)_{i \in [|\sigma|]} \in [-1,1]^{m_{\sigma}}} \left|\left| \frac{1}{N_{n_{0}}} \odot \sum_{i=1}^{|\sigma|} S_{A_i}^{\intercal} \eta_i\right|\right|_{2}^{2} = \max_{\eta \in \{-1,1\}^{m_{\sigma}}} \eta^{\intercal} P_{\sigma} \eta
\end{align*}
where $m_{\sigma} = \sum_{i=1}^{|\sigma|} |A_{i}| \gg d$, $P_{\sigma} = M_{\sigma} \text{diag}\left( \frac{1}{N_{n_{0}}^{2}}\right) M_{\sigma}^{\intercal}$, $M_{\sigma}^{\intercal} = \begin{bmatrix} S_{A_1}^{\intercal}  \hdots  S_{A_{|\sigma|}}^{\intercal} \end{bmatrix}$ and $\odot$ denotes the component-wise multiplication. To our knowledge, there is no efficient solver for this optimization.

In our experiments on GCB-PE we will compute $\beta_{\sigma}$ by testing the $2^{m_{\sigma}}$ possibilities. This restricts our results to small examples since the computational cost is increasing exponentially.

\subsection{Experimental Results} \label{appendix_experimental_setting_details}

\begin{figure}
\centering
\scalebox{0.6}{\begin{tikzpicture}[node distance=1.8cm,bend angle=40,auto]
  \tikzstyle{place}=[circle,thick,draw=black!75,minimum size=10mm]
  \tikzstyle{source}=[circle,thick,draw=green!75,fill=green!20,minimum size=10mm]
  \tikzstyle{sink}=[circle,thick,draw=red!75,fill=red!20,minimum size=10mm] size=6mm]  
 
  	\begin{scope}
  	\node [source] (e0) at (0,3.6) {$s$};
    \node [place] (n1) [right of=e0] {};
    \node [place] (n3) [right of=n1] {};
    \node [place] (n6) [right of=n3] {};
    \node [place] (n2) [below of=e0] {};
    \node [place] (n4) [below of=n1] {};
    \node [place] (n7) [below of=n3] {};
    \node [place] (n10) [below of=n6] {};
    \node [place] (n5) [below of=n2] {};
    \node [place] (n8) [below of=n4] {};
    \node [place] (n11) [below of=n7] {};
    \node [place] (n13) [below of=n10] {};
    \node [place] (n9) [below of=n5] {};
    \node [place] (n12) [below of=n8] {};
    \node [place] (n14) [below of=n11] {};
  	\node [sink] (e15) [below of=n13] {$t$};
      
    \draw[->]
        (e0) edge (n1) (e0) edge (n2) (n14) edge (e15) (n13) edge (e15);
    \draw[->]
        (n1) edge (n3) (n1) edge (n4) 
        (n2) edge (n4) (n2) edge (n5) 
        (n3) edge (n6) (n3) edge (n7)
        (n4) edge (n7) (n4) edge (n8)
        (n5) edge (n8) (n5) edge (n9)
        (n6) edge (n10) (n10) edge (n13) (n13) edge (e15) (n12) edge (n14) (n14) edge (e15)
        (n7) edge (n10) (n7) edge (n11)
        (n8) edge (n11) (n8) edge (n12)
        (n11) edge (n13) (n11) edge (n14);
    \draw[->, red, very thick] (n9) -- (n12) node[midway,above] {$a^*$};
    \end{scope}
    
    \begin{scope}[xshift=8cm]
    
    \node [source] (e1) at (0,0) {$s$};
    \node [place] (n1) [right of=e1] {};
    \node [place] (n2) [above of=n1] {};
    \node [place] (n3) [right of=n1] {};
    \node [place] (n4) [above of=n3] {};
    \node [place] (n5) [right of=n3] {};
    \node [place] (n6) [above of=n5] {};
    \node [place] (n7) [right of=n5] {};
    \node [place] (n8) [above of=n7] {};
  	\node [sink] (e2) [right of=n7] {$t$};
      
    \draw[->]
        (e1) edge (n1) (e1) edge (n2) (n7) edge (e2) (n8) edge (e2);
    \draw[->]
        (n1) edge (n3) (n1) edge (n4) (n2) edge (n3) 
        (n3) edge (n5) (n3) edge (n6) (n4) edge (n5) (n4) edge (n6)
        (n5) edge (n7) (n5) edge (n8) (n6) edge (n7) (n6) edge (n8);
    \draw[->, red, very thick] (n2) -- (n4) node[midway,above] {$a^*$};
    \end{scope}
\end{tikzpicture}}
    \caption{Paths examples: (a) grid network with $n_s = 6$ and (b) line network with $(n_n, n_l) = (2,4)$}
    \label{fig:network_examples}
\end{figure}
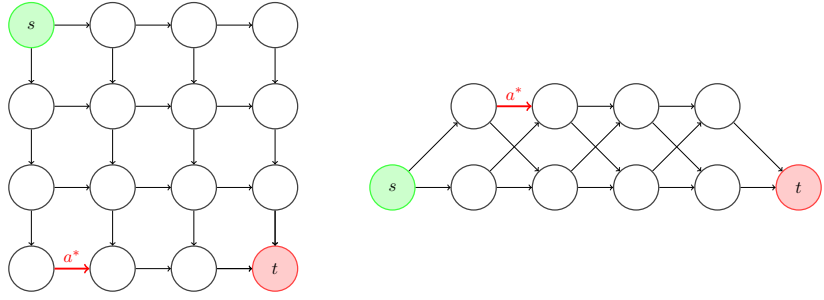

As illustrative examples we use the best-arm identification by sampling actions. The bandit is Gaussian, $\nu = \mathcal{N}(\mu, \sigma^{2} I_d)$. As regards the action set, we will consider:
\begin{itemize}
    \item uniform matroid, $\A = \left\{ A \subset [d]: |A| = k \right\}$, where the agent samples batches of size $k$. The batch setting is useful for real-world applications and admits an efficient oracle, the greedy algorithm.
    \item paths, $\A = \{A \subset [d]: A \in \text{path}(s,t,\mathcal{G})\}$, where the agent samples paths connecting $(s,t)$ in the graph $\mathcal{G}$. The path setting is omnipresent for network applications and admits efficient oracles, such as Dijkstra's algorithm. As a first illustrative example, we will consider a grid network with $n_s$ stages, also known as binomial bridges. A grid network with $n_s = 6$ is represented in Figure \ref{fig:network_examples}(a). Grid networks appear in real-world applications. They were also studied in \cite{kveton_tight_2015}. As a second illustrative example we will consider a line network with $n_l$ layers and redundancy $n_n$ (number of nodes per layer). A line network with $(n_n, n_l) = (2,4)$ is represented in Figure \ref{fig:network_examples}(b). Line networks appear in real-world applications. The redundancy ensures the system to be robust against failures.
    \item almost all sets, $\A = \left(\{I^*\} \cup \{A \in 2^d: I^* \not \subset A \}\right) \setminus \{\emptyset\}$, where the agent samples a set. This example is purely artificial. There is no efficient oracle. We designed it as an extreme needle-in-haystack problem where there is only one informative action among an exponential number of actions.
\end{itemize}

\begin{table}
\centering
\scalebox{0.85}{\begin{tabular}{|c || c | c| c| c| c|} 
 \hline
  &  $d$ & $|\mathcal{A}|$ & $|\mathcal{A}^*|$ & $\frac{|\mathcal{A}^*|}{|\mathcal{A}|}$  & $n_{0}$\\
 \hline\hline
 Uniform matroid & $d$ & $\binom{d}{k}$   &  $\binom{d-1}{k-1}$ & $\frac{k}{d}$ & $\lceil \frac{d}{k} \rceil$\\ 
 \hline
 Grid network &  $n_s(\frac{n_s}{2} + 1)$ & $\binom{n_s}{n_s/2}$   &  $1$ & $1 / \binom{n_s}{n_s/2} $& $n_s$ \\ 
 \hline
 Line network &  $2 n_{n} + (n_l - 1)n_n^2$ & $n_n^{n_l}$   &  $n_n^{n_l-2}$ & $1/n_n^2$ & $n_n^2$ \\ 
 \hline
 Almost all sets &  $d$  &  $2^{d-1}$   &  $1$ & $1/2^{d-1}$ & $2$\\ 
 \hline
\end{tabular}}
\caption{Central quantities}
\label{tab:central_quantities_for_examples}
\end{table}

The central quantities of interest are summarized in Table \ref{tab:central_quantities_for_examples}: the dimension $d$, the size of the action sets $|\A|$, the size of the informative action set (actions containing the best arm) $|\A^*|$ where $\A^* = \{A \subset [d]: I^* \subset A\}$, the ratio of informative actions $\frac{|\mathcal{A}^*|}{|\mathcal{A}|}$ and the minimal number of actions to perform a covering initialization $n_{0}$. Intuitively, the lower the ratio of informative actions is, the harder the problem is for naive algorithms. For example, uniform sampling fails drastically when $|\A^*|$ is low and $\sigma$ is high. When comparing learners on the simplex and the ones on the transformed simplex, the difference between the sizes of the respective initialization can have an important role, $|\A| - n_{0}$. The learners on $\tsimplex$ spend this additional budget on exploring relevant actions instead of merely sampling them all. The lower the noise, the more significant this difference is. In the no-noise setting, at most $n_{0}$ samples are necessary for the learners on the transformed simplex, while at most $|\A|$ samples are necessary for the ones on the simplex. The exact sample complexity depends on $|\A^*|$ and on the random draw of actions.

In the additional experiments, we will only compare AdaHedge and LLOO since they are the best instance in their family of learner (Figure \ref{fig:comparison_learners}).

\subsubsection{Uniform Matroid} \label{appendix_experimental_setting_details_uniform_matroid}

By increasing the dimension $d$, we observe the effect of an exponential increase of $|\A| = \binom{d}{k}$ while the ratio of informative actions $\frac{|\A^*|}{|\A|} = \frac{k}{d}$ is decreasing harmonically. Since $|\A^*| = \binom{d-1}{k-1}$ is also increasing with $d$ and $k$, we need to consider higher noise for $k=3$ than for $k=2$. Otherwise, the sampling rules using a full initialization will satisfy the stopping criterion before the end of the initialization.

For experiments on uniform matroids in Figures \ref{fig:comparison_learners} and \ref{fig:comparison_learners_uniform_matroid}, we consider $\mu^{(d)} \in \R^d$ for all $d \in \{5, 10, 15, 20, 25, 30, 35, 40, 45, 50\}$, such that $\mu^{(d)}_1 = 0.3$, $\mu^{(d)}_2 = 0.29$, $\mu^{(d)}_3 = 0.28$ and $\mu^{(d)}_{i} < 0.24$ for $i > 3$. Those values ensure that the best arm is always $I^*(\mu) = \{1\}$, while having two serious contenders $J \in \{\{2\}, \{3\}\}$. The rest of the arms are chosen ordered such as they are clearly suboptimal: $\mu^{(5)}_{4:5} = \{0.23, 0.2\}$, $\mu^{(10)}_{4:10} = \{0.232, 0.224, 0.207, 0.200, 0.192, 0.182, 0.176\}$, $\mu^{(15)}_{4:15} = \mu^{(10)}_{4:10} \cup \{0.214,  0.199, 0.195, 0.190, 0.164\}$, $\mu^{(20)}_{4:20} = \mu^{(15)}_{4:15} \cup \{0.185, 0.19, 0.195, 0.199, 0.214\}$, $\mu^{(25)}_{4:25} = \mu^{(20)}_{4:20} \cup \{0.158, 0.172, 0.211, 0.228, 0.244 \}$, $\mu^{(30)}_{4:30} = \mu^{(25)}_{4:25} \cup \{0.174, 0.18, 0.194, 0.202, 0.23, 0.242 \}$, $\mu^{(35)}_{4:35} = \mu^{(30)}_{4:30} \cup \{0.17, 0.178, 0.219, 0.222, 0.226 \}$, $\mu^{(40)}_{4:40} = \mu^{(35)}_{4:35} \cup \{0.197, 0.198, 0.201, 0.203, 0.205 \}$, $\mu^{(45)}_{4:45} = \mu^{(40)}_{4:40} \cup \{0.193, 0.206, 0.208, 0.21 \}$ and $\mu^{(50)}_{4:50} = \mu^{(45)}_{4:45} \cup \{0.188, 0.189, 0.191, 0.212, 0.213 \}$.

In Figures \ref{fig:comparison_learners_uniform_matroid}(a) and \ref{fig:comparison_learners_uniform_matroid}(b), we observe an identical behavior as in Figures \ref{fig:comparison_learners}(a) and \ref{fig:comparison_learners}(b). LLOO has competitive sample complexity for a low and almost constant computational cost compared to AdaHedge.\looseness=-1 

\begin{figure}
    \centering
    \includegraphics[scale=0.13]{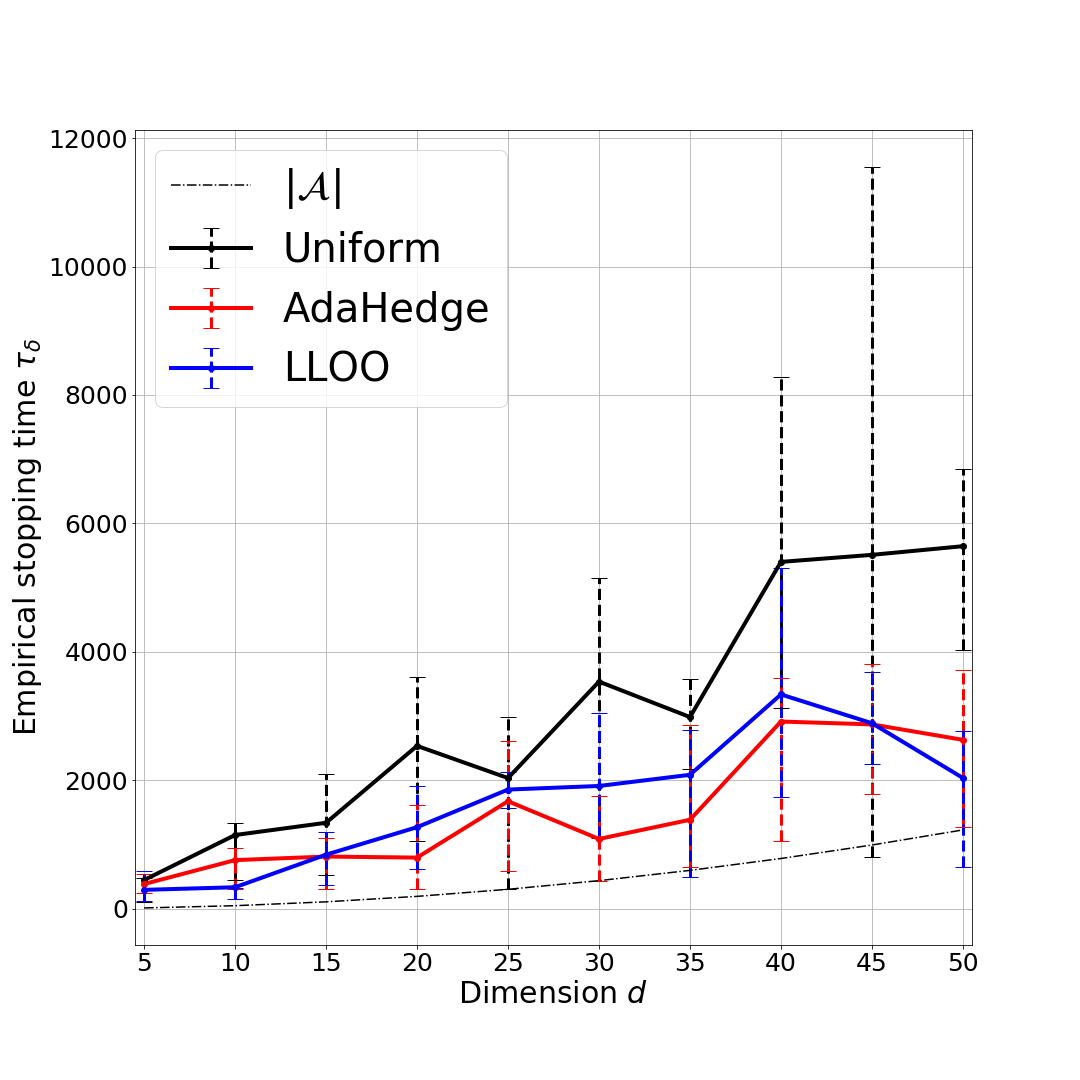}
    \includegraphics[scale=0.13]{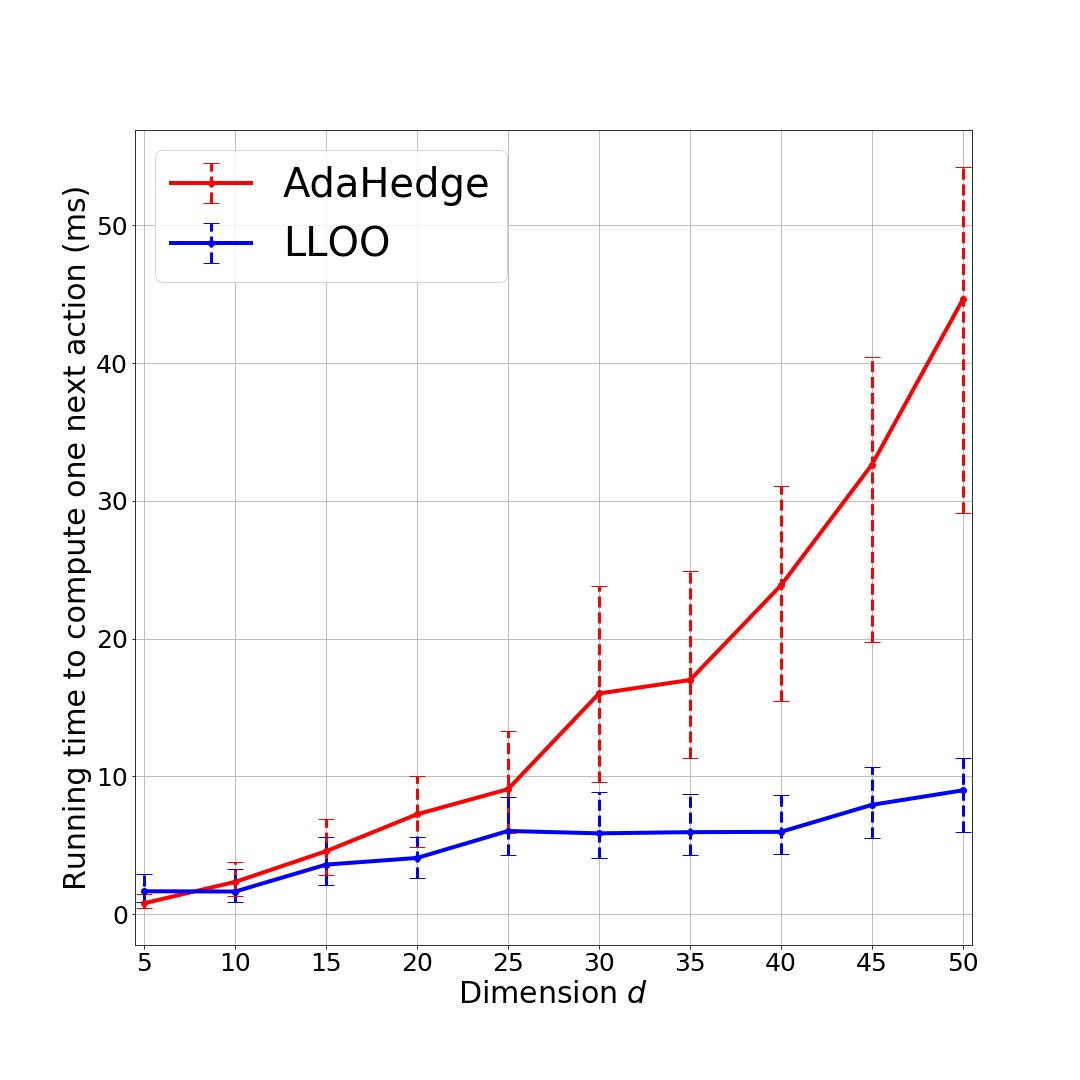}
    \includegraphics[scale=0.13]{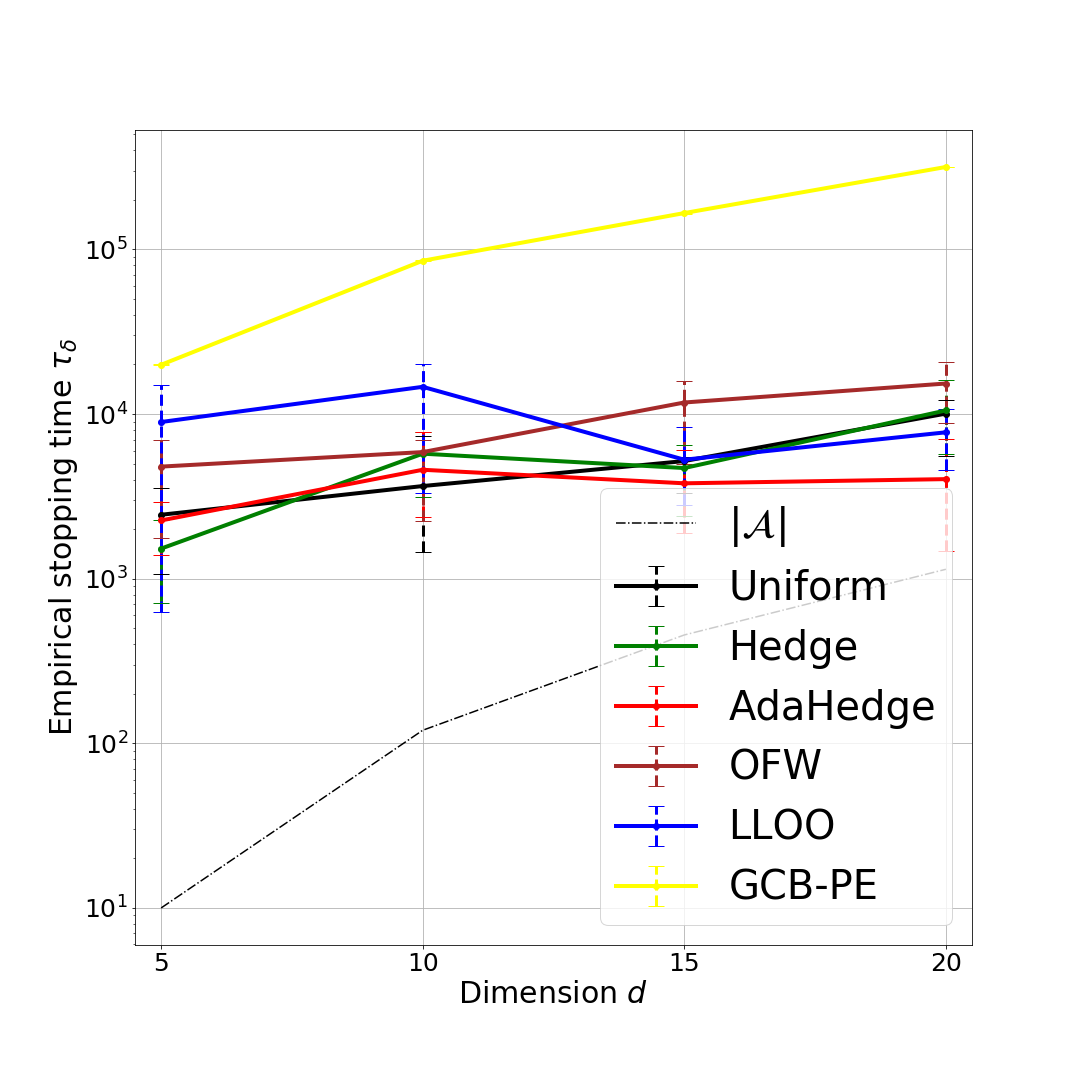}
    \caption{Uniform matroid, $k = 2$ (resp. $k=3$ for plot (c)), Gaussian bandit, $\nu = \mathcal{N}(\mu,\sigma^2 I_{d})$ with $\sigma = 0.035$ (resp. $\sigma = 0.1$). Influence of the dimension $d$ on: (a) (resp. (c)) the empirical stopping time $\tau_{\delta}$ and (b) the average running time to compute the next action.}
    \label{fig:comparison_learners_uniform_matroid}
\end{figure}

\subsubsection{Grid Network}

By increasing the number of stages $n_s$, we observe the effect of an exponential increase of $|\A| = \binom{n_s}{n_s/2}$ and an exponential decrease of $\frac{|\A^*|}{|\A|} = \frac{1}{\binom{n_s}{n_s/2}}$ since $|A^*| = 1$. 
 
For experiments on the grid networks in Figure \ref{fig:comparison_learners_grid_net}, we consider $\mu^{(n_l)} \in \R^d$ for all $n_s \in \{6, 8, 10, 12, 14,16\}$. The values for the parameters $\mu^{(n_l)}$ were obtained by random sampling with a Gaussian of mean $0.2$ and standard deviation $0.025$. After sorting, we increment $\mu^{(n_l)}_1$ by $0.025$ to ensure that $I^*(\mu) = \{1\}$ with a statistically significant gap.

The Figure \ref{fig:comparison_learners_grid_net}(a) highlights two important intuitive facts. First, the uniform sampling is highly inefficient in terms of samples when few informative actions are available, here $|A^*| = 1$. Second, the empirical performance of a learner on the simplex is limited by the initialization of size $n_{0} = |\A|$. The Figure \ref{fig:comparison_learners_grid_net}(b) highlights the lower computational cost of LLOO compared to AdaHedge. The slightly higher cost stems from the more expensive efficient oracle to solve the shortest path offline problem.

\begin{figure}
    \centering
    \includegraphics[scale=0.13]{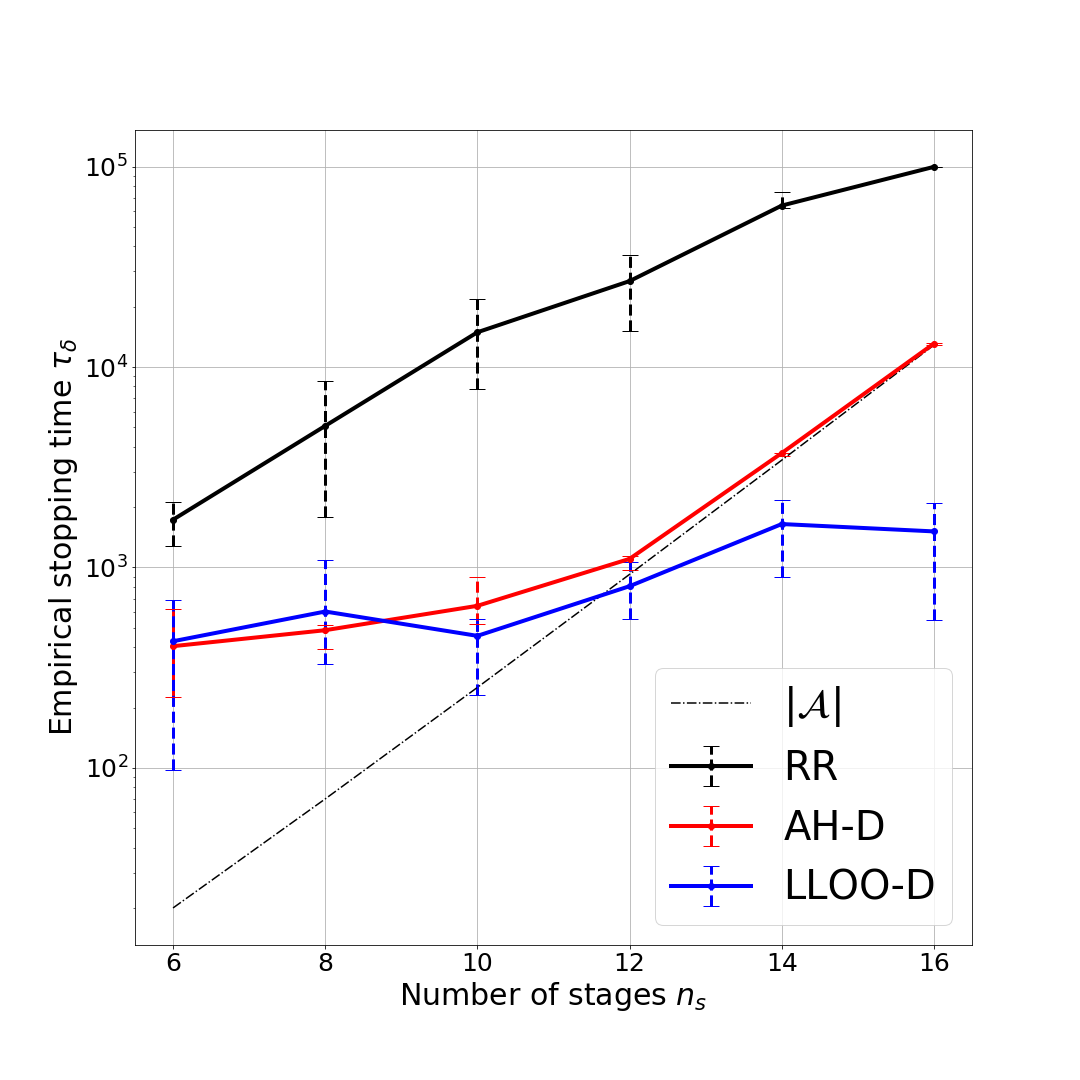}
    \includegraphics[scale=0.13]{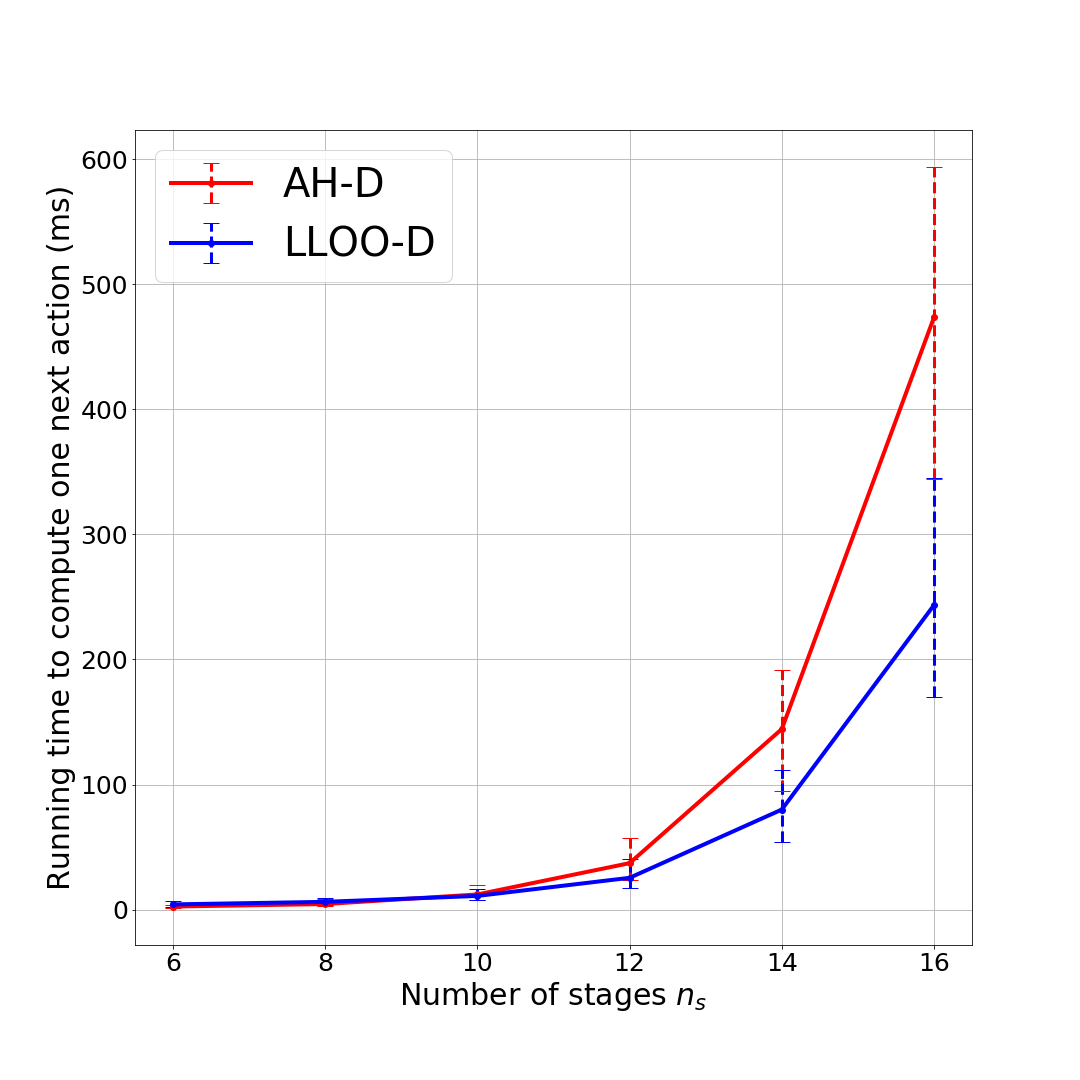}
    \caption{Grid network, Gaussian bandit, $\nu = \mathcal{N}(\mu,\sigma^2 I_{d})$ with $\sigma = 0.075$. Influence of the number of stages $n_s$ on: (a) the empirical stopping time $\tau_{\delta}$ and (b) the average running time to compute the next action.}
    \label{fig:comparison_learners_grid_net}
\end{figure}

\subsubsection{Line Network}

By increasing the number of layers $n_l$, we observe the effect of an exponential increase of $|\A| = n_n^{n_l}$ while the ratio of informative actions is decreasing as $\frac{|\A^*|}{|\A|} = \frac{1}{n_n^2}$. The number of informative actions $|\A^*| = n_n^{n_l-2}$ is also increasing with $n_l$, slowly for low $n_n$.

For experiments on the line networks in Figure \ref{fig:comparison_learners_line_net}, we consider $\mu^{(n_l)} \in \R^d$ for all $n_l \in \{5, \cdots ,12\}$ when $n_n = 2$ and $n_l \in \{4, \cdots ,8\}$ when $n_n = 3$. The values for the parameters $\mu^{(n_l)}$ were obtained by sampling randomly from a Gaussian with mean $0.2$ and standard deviation $0.025$. After sorting, we increment $\mu^{(n_l)}_1$ by $0.025$ to ensure that $I^*(\mu) = \{1\}$ with a statistically significant gap.

In Figure \ref{fig:comparison_learners_line_net}, the take-away message is similar as for uniform matroids. LLOO has competitive sample complexity for a low computational cost compared to AdaHedge.

\begin{figure}
    \centering
    \includegraphics[scale=0.13]{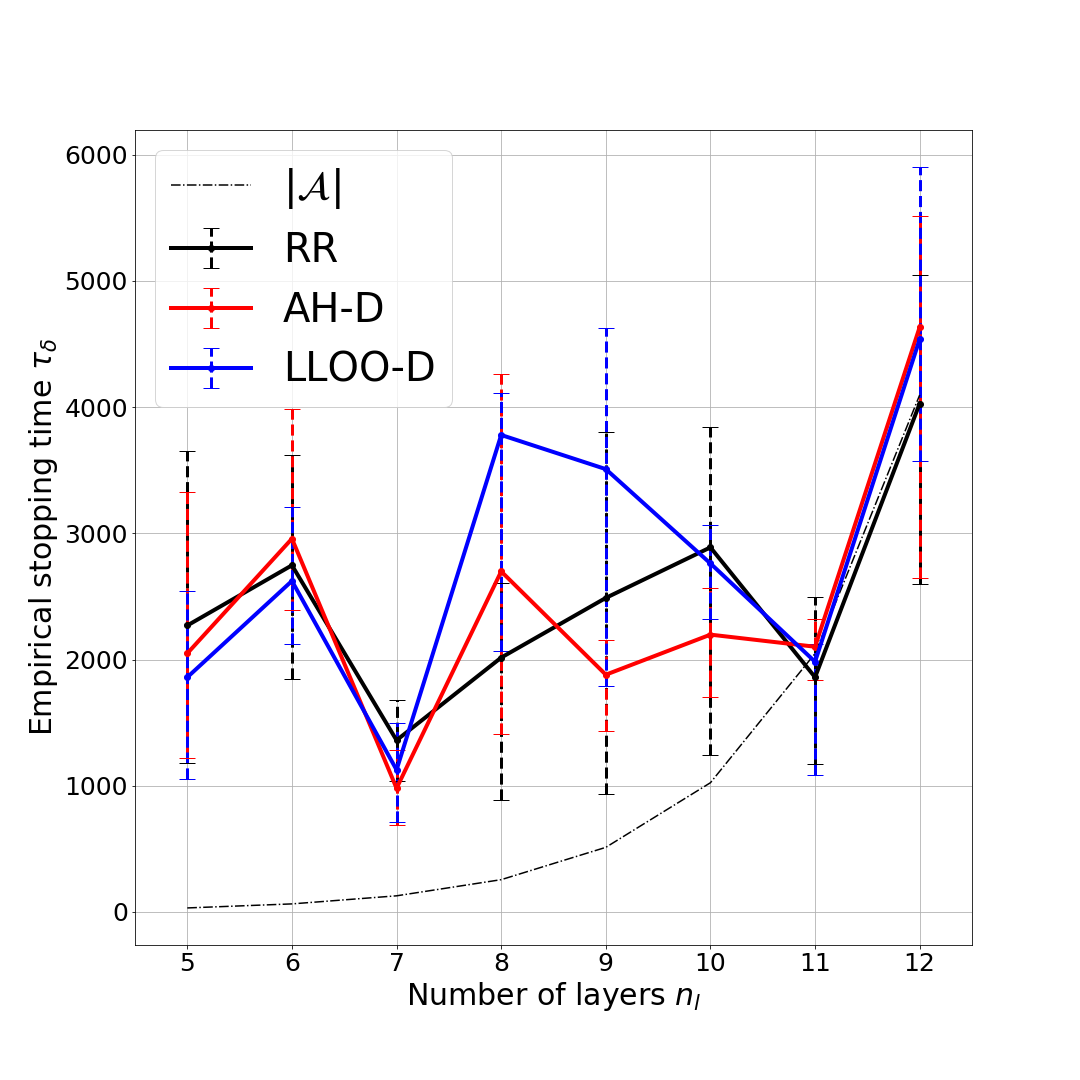}
    \includegraphics[scale=0.13]{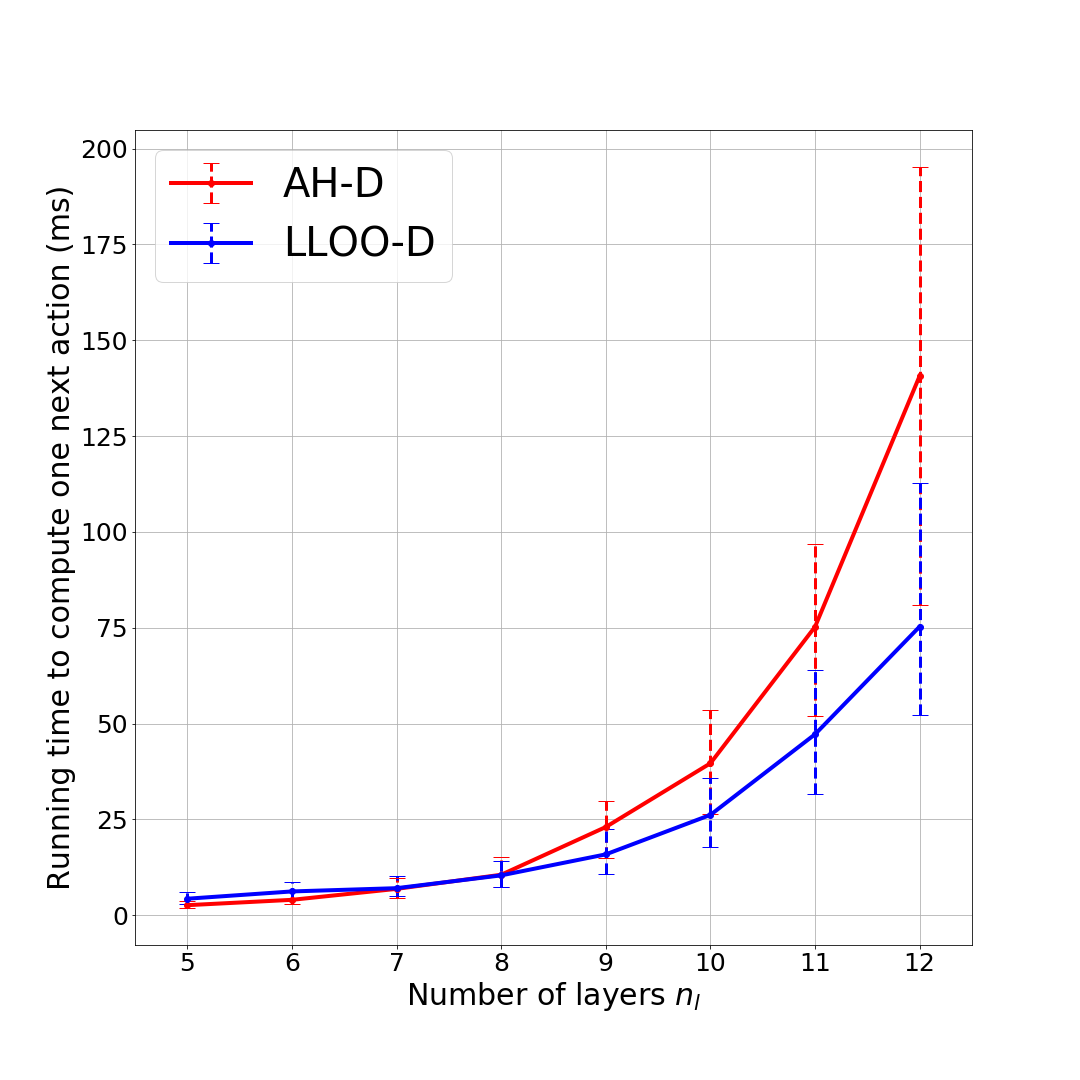}
    \\ 
    \includegraphics[scale=0.13]{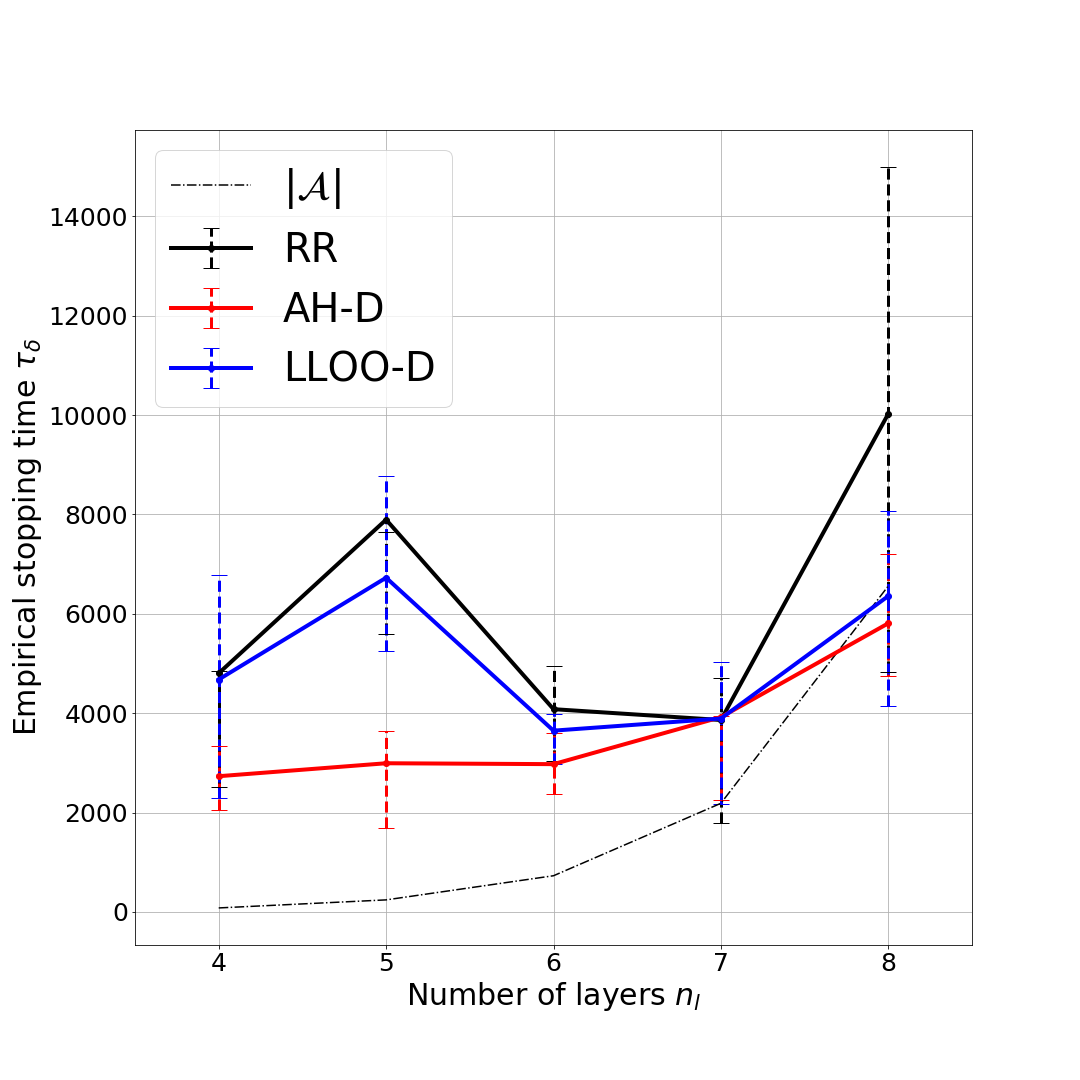}
    \includegraphics[scale=0.13]{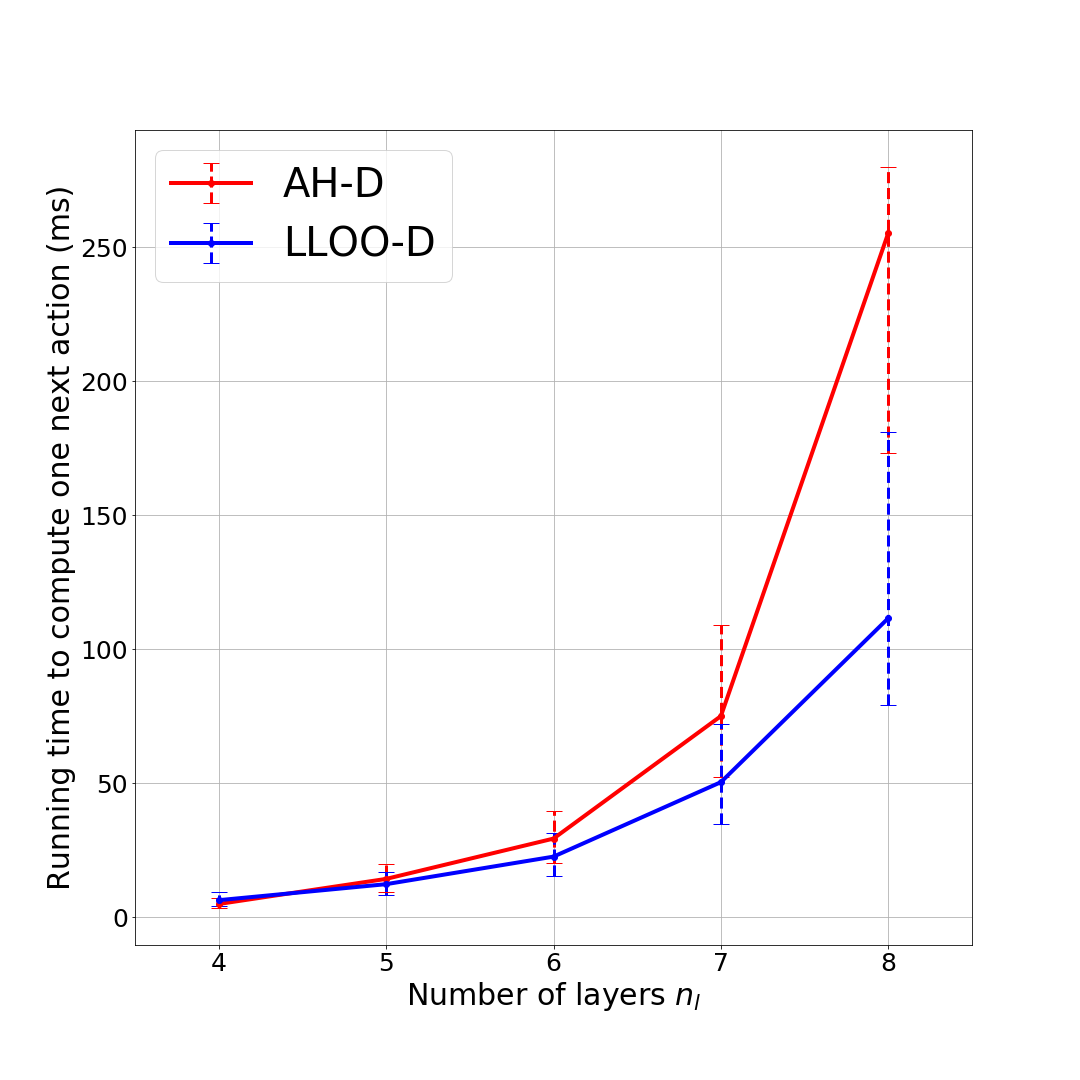}
    \caption{Line network, $n_n = 2$ (resp. $n_n = 3$ for the bottom plots), Gaussian bandit, $\nu = \mathcal{N}(\mu,\sigma^2 I_{d})$ with $\sigma = 0.2$. Influence of the number of layers $n_l$ on: (a) (resp. (c)) the empirical stopping time $\tau_{\delta}$ and (b) (resp. (d)) the average running time to compute the next action.}
    \label{fig:comparison_learners_line_net}
\end{figure}

\subsubsection{Almost all sets}

By increasing the dimension $d$, we observe the effect of an exponential increase of $|\A| = 2^{d-1}$ and an exponential decrease of $\frac{|\A^*|}{|\A|} = \frac{1}{2^{d-1}}$ since $|A^*| = 1$. 
 
For experiments on almost all sets in Figure \ref{fig:comparison_learners_low_ri}, we consider $\mu^{(d)} \in \R^d$ for all $d \in \{7, \cdots ,14\}$, such that $\mu^{(d)}_1 = 0.3$  and $\mu^{(d)}_{i} \leq 0.24$ for $i > 1$. Those values ensure that the best arm is always $I^*(\mu) = \{1\}$. The rest of the arms are chosen ordered such as they are clearly suboptimal: $\mu^{(7)}_{2:7} = \{0.24, 0.23, 0.22, 0.21, 0.2, 0.19\}$, $\mu^{(8)}_{2:8} = \mu^{(7)}_{2:7} \cup \{0.18\}$, $\mu^{(9)}_{2:9} = \mu^{(8)}_{2:8} \cup \{0.17\}$, $\mu^{(10)}_{2:10} = \mu^{(9)}_{2:9} \cup \{0.16\}$, $\mu^{(11)}_{2:11} = \mu^{(10)}_{2:10} \cup \{0.215\}$, $\mu^{(12)}_{2:12} = \mu^{(11)}_{2:11} \cup \{0.195\}$, $\mu^{(13)}_{2:13} = \mu^{(12)}_{2:12} \cup \{0.205\}$ and $\mu^{(14)}_{2:14} = \mu^{(13)}_{2:13} \cup \{0.185\}$.

In Figure \ref{fig:comparison_learners_low_ri}(a), the take-away message is similar as for the grid networks. Even though no efficient oracle exists, the computational cost of LLOO is still lower than the one of AdaHedge, see Figure \ref{fig:comparison_learners_low_ri}(b).

\begin{figure}
    \centering
    \includegraphics[scale=0.13]{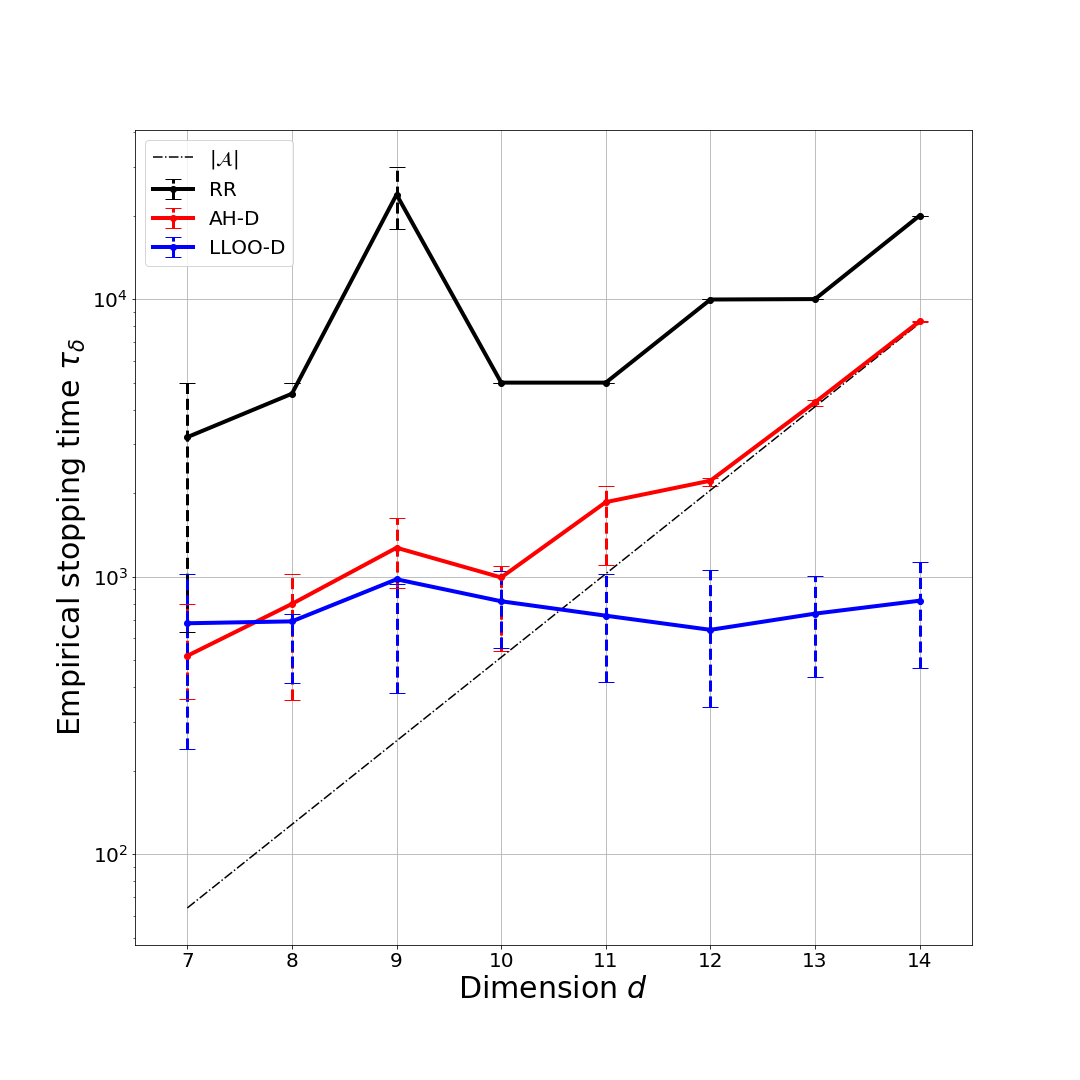}
    \includegraphics[scale=0.13]{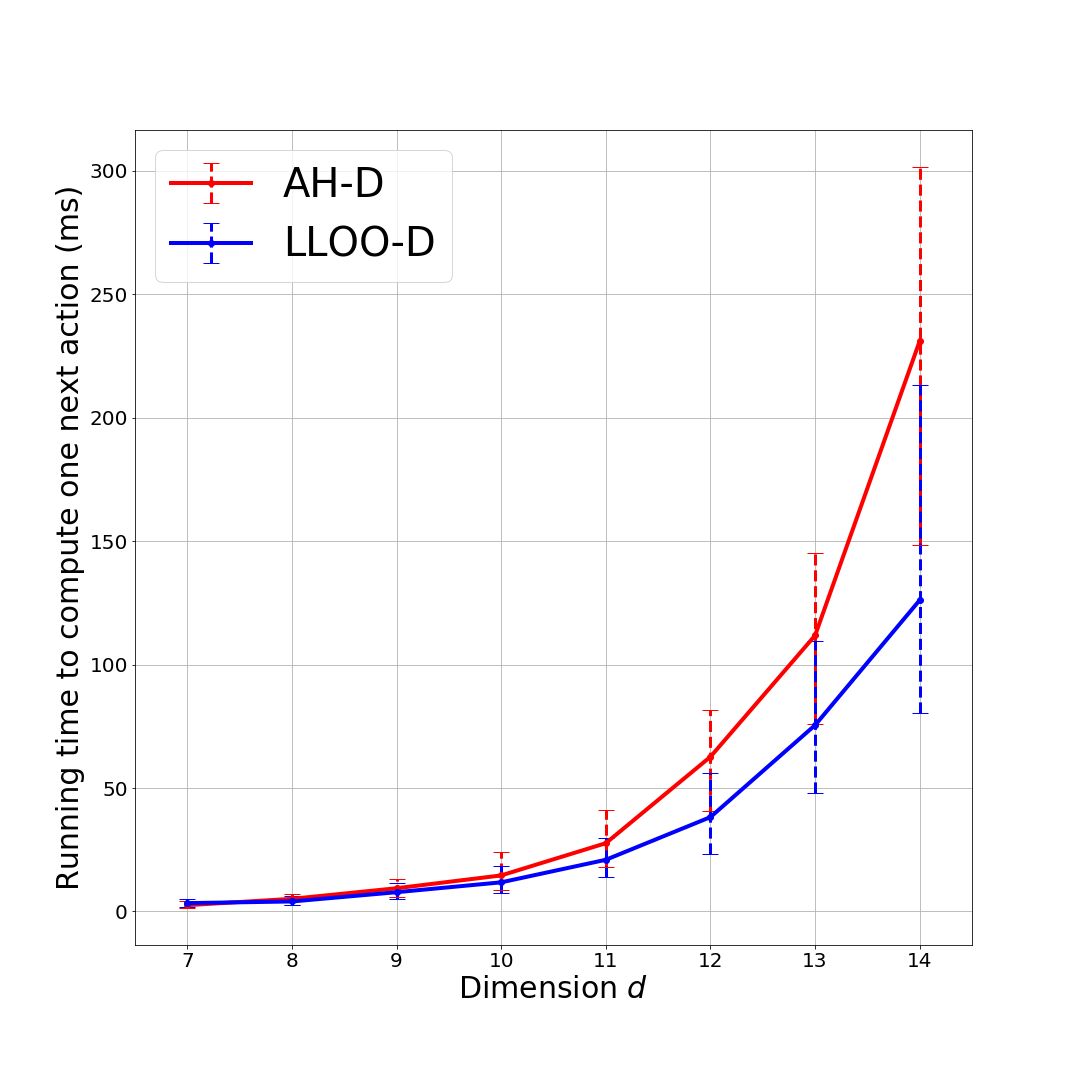}
    \caption{Almost all sets, Gaussian bandit, $\nu = \mathcal{N}(\mu,\sigma^2 I_{d})$ with $\sigma = 0.25$. Influence of the dimension $d$ on: (a) the empirical stopping time $\tau_{\delta}$ and (b) the average running time to compute the next action.}
    \label{fig:comparison_learners_low_ri}
\end{figure}

\end{document}